\newrobustcmd{\B}{\bfseries}
\definecolor{mydarkblue}{rgb}{0,0.08,0.45} 
\DeclareRobustCommand{\colordot}[1]{\begin{tikzpicture}[baseline=(a.south)]
    \node[circle, scale=0.75,color=white, fill=#1] (a) {};
  \end{tikzpicture}}
\DeclareRobustCommand{\colorcircle}[1]{\begin{tikzpicture}[baseline=(a.south)]
    \node[circle, scale=0.65,fill=white, draw=#1] (a) {};
  \end{tikzpicture}}
\DeclareRobustCommand{\dashedcolorline}[1]{\begin{tikzpicture}
    \raisebox{1.5pt}{
      \draw[#1,dashed,line width=1.5pt] (0,0) -- (1em,0);
    }
  \end{tikzpicture}}
\DeclareRobustCommand{\colorgradientbox}[2][white]{%
  \begin{tikzpicture}[baseline=-0.5ex]%
    \node [rectangle, left color=#1, right color=#2, anchor=base, minimum width=1.75em, minimum height=1em, draw=black] (box) at (0,0){};%
  \end{tikzpicture}%
}
\newcommand*{\eg}{e.g.\@\xspace}
\newcommand*{\ie}{i.e.\@\xspace}
\newcommand{\dataset}{dataset\xspace}
\newcommand{\datasets}{datasets\xspace}
\newcommand{\datapoint}{data point\xspace}
\newcommand{\datapoints}{data points\xspace}
\newcommand{\runtime}{runtime\xspace}
\newcommand{\adam}{\textsc{\mbox{Adam}}\xspace}
\newcommand{\itergp}{\textsc{IterGP}\xspace}
\newcommand{\iterncgp}{\textsc{IterNCGP}\xspace}
\newcommand{\svgp}{\textsc{SVGP}\xspace}
\newcommand{\mnist}{\textsc{MNIST}\xspace}
\newcommand{\keops}{\textsc{KeOps}\xspace}
\newcommand{\gpytorch}{\textsc{GPyTorch}\xspace}
\newcommand{\pytorch}{\textsc{PyTorch}\xspace}
\DeclareMathOperator*{\argmax}{arg\,max}
\DeclareMathOperator{\Tr}{Tr}
\DeclareMathOperator{\diag}{diag}
\DeclareMathOperator{\vspan}{span}
\newcommand{\defeq}{\coloneqq}
\newcommand{\eqdef}{\eqqcolon}
\newcommand{\eqc}{\overset{\text{c}}{=}}
\newcommand{\R}{\mathbb{R}}
\newcommand{\N}{\mathbb{N}}
\newcommand*{\spacesym}[1]{{\mathbb{#1}}}
\newcommand*{\pinv}{\dagger}
\newcommand*{\blockdiag}{\operatorname{blockdiag}}
\DeclarePairedDelimiterXPP\bigO[1]{\mathcal{O}}{(}{)}{}{#1}
\DeclarePairedDelimiterXPP\smallo[1]{o}{(}{)}{}{#1}
\DeclarePairedDelimiterXPP\bigOmega[1]{\Omega}{(}{)}{}{#1}
\DeclarePairedDelimiterXPP\smallomega[1]{\omega}{(}{)}{}{#1}
\DeclarePairedDelimiterXPP\bigTheta[1]{\Theta}{(}{)}{}{#1}
\newcommand*{\inputdim}{D}
\newcommand*{\inputspace}{\spacesym{X}}
\newcommand*{\outputspace}{\spacesym{Y}}
\newcommand*{\numclasses}{C}
\newcommand*{\numtraindata}{N}
\newcommand*{\traindata}{\mX}
\newcommand*{\symboltestdata}{\diamond}
\newcommand*{\numtestdata}{N_\symboltestdata}
\newcommand*{\buffersize}{B}
\newcommand*{\bufferlimit}{R}
\newcommand*{\matrixroot}{\mQ}
\newcommand*{\timeK}{\tau_{\mK}}
\newcommand*{\spaceK}{\mu_{\mK}}
\newcommand*{\timeWinv}{\tau_{\mW^{-1}}}
\newcommand*{\spaceWinv}{\mu_{\mW^{-1}}}
\newcommand*{\timepolicy}{\tau_{\textsc{policy}}}
\newcommand*{\timemean}{\tau_{\vm}}
\newcommand*{\newtonidx}{i}
\newcommand*{\solveridx}{j}
\newcommand*{\totalsolveriters}{J}
\newcommand*{\cnordering}{$\numclasses\numtraindata$-ordering\xspace}
\newcommand*{\ncordering}{$\numtraindata\numclasses$-ordering\xspace}
\newcommand*{\rweightsapprox}{\vv}
\newcommand*{\kernmatinvapprox}{\mC}
\newcommand*{\gp}[2]{{\ensuremath{\operatorname{\mathcal{GP}}}\left(#1, #2\right)}}
\newcommand*{\kernel}{K}
\newcommand*{\kernelfn}{K}
\newcommand*{\matern}[1]{Matérn$(\frac{#1}{2})$}
\newcommand*{\action}{\vs}
\newcommand*{\observ}{\alpha}
\newcommand*{\searchdir}{\vd}
\newcommand*{\searchdirsqnorm}{\eta}
\newcommand*{\residual}{\vr}
\newcommand{\KWinv}{\hat{\mK}}
\newcommand{\KWinvf}[1]{\hat{\mK}(#1)}
\newcommand{\pseudotargets}{\hat{\vy}}
\newcommand{\pseudotargetsf}[1]{\hat{\vy}(#1)}
\newcommand{\convtol}{\delta}
\newcommand*{\gaussian}[2]{{\ensuremath{\operatorname{\mathcal{N}}\mathopen{}\left(#1, #2\right)}}}
\newcommand*{\gaussianpdf}[3]{{\ensuremath{\operatorname{\mathcal{N}}\mathopen{}\left(#1; #2, #3\right)}}}
\def\vzero{{\bm{0}}}
\def\vone{{\bm{1}}}
\def\vlambda{{\bm{\lambda}}}
\def\vpi{{\bm{\pi}}}
\def\vb{{\bm{b}}}
\def\vd{{\bm{d}}}
\def\ve{{\bm{e}}}
\def\vf{{\bm{f}}}
\def\vg{{\bm{g}}}
\def\vm{{\bm{m}}}
\def\vr{{\bm{r}}}
\def\vs{{\bm{s}}}
\def\vt{{\bm{t}}}
\def\vu{{\bm{u}}}
\def\vv{{\bm{v}}}
\def\vw{{\bm{w}}}
\def\vx{{\bm{x}}}
\def\vy{{\bm{y}}}
\def\vz{{\bm{z}}}
\def\mzero{{\bm{0}}}
\def\mA{{\bm{A}}}
\def\mC{{\bm{C}}}
\def\mI{{\bm{I}}}
\def\mK{{\bm{K}}}
\def\mM{{\bm{M}}}
\def\mP{{\bm{P}}}
\def\mQ{{\bm{Q}}}
\def\mS{{\bm{S}}}
\def\mT{{\bm{T}}}
\def\mU{{\bm{U}}}
\def\mW{{\bm{W}}}
\def\mX{{\bm{X}}}
\def\mLambda{{\bm{\Lambda}}}
\def\mSigma{{\bm{\Sigma}}}
\def\mOmega{{\bm{\Omega}}}
\def\mPi{{\bm{\Pi}}}
\DeclareMathAlphabet{\mathsfit}{\encodingdefault}{\sfdefault}{m}{sl}
\SetMathAlphabet{\mathsfit}{bold}{\encodingdefault}{\sfdefault}{bx}{n}
\def\sL{{\mathbb{L}}}
\declaretheoremstyle[
headfont=\normalfont\bfseries,
notefont=\normalfont,
bodyfont=\normalfont,
headpunct={},
postheadspace=\newline,
spaceabove=1.5\parskip,  ]{definitionstyle}
\declaretheoremstyle[
headfont=\normalfont\bfseries,
notefont=\normalfont,
bodyfont=\normalfont\itshape,
headpunct={},
postheadspace=\newline,
spaceabove=1.5\parskip,  ]{lemmastyle}
\declaretheoremstyle[
headfont=\normalfont\bfseries,
notefont=\normalfont,
bodyfont=\normalfont\itshape,
headpunct={},
postheadspace=\newline,
spaceabove=1.5\parskip,  ]{theoremstyle}
\declaretheoremstyle[
headfont=\normalfont\bfseries,
notefont=\normalfont,
bodyfont=\normalfont,
headpunct={},
spaceabove=1.5\parskip,  ]{remarkstyle}
\declaretheorem[style=definitionstyle,name=Definition,numberwithin=section]{definition}
\declaretheorem[style=lemmastyle,name=Lemma,sibling=definition]{lemma}
\declaretheorem[style=lemmastyle,name=Proposition,sibling=definition]{proposition}
\declaretheorem[style=theoremstyle,name=Theorem,sibling=definition]{theorem}
\algrenewcommand{\algorithmiccomment}[3]{\hfill {\small \textcolor{darkgray}{#1 \hspace{0.5em} \makebox[5.5em][l]{#2} \makebox[3.5em][l]{#3}}}}
\algrenewcommand\algorithmicindent{1em}
\algrenewcommand\alglinenumber[1]{\small {\textcolor{darkgray}{#1}}}
\algorithmic\endcsname{\itemsep\z@}{\itemsep=0.25ex}{}{}
\newcommand\fs@booktabsruled{\def\@fs@cfont{\bfseries\strut}\let\@fs@capt\floatc@ruled
    \def\@fs@pre{\hrule height\heavyrulewidth depth0pt \kern\belowrulesep}\def\@fs@mid{\kern\aboverulesep\hrule height\lightrulewidth\kern\belowrulesep}\def\@fs@post{\kern\aboverulesep\hrule height\heavyrulewidth\relax}\let\@fs@iftopcapt\iftrue
}
\definecolor{Recycling}{HTML}{4169E1}
\definecolor{TUred}{RGB}{165,30,55}
\definecolor{TUdark}{RGB}{50,65,75}
\definecolor{TUgold}{RGB}{180,160,105}
\definecolor{TUgray}{RGB}{185,184,188}
\definecolor{TUdarkblue}{RGB}{65,90,140}
\definecolor{TUblue}{RGB}{0,105,170}
\definecolor{TUlightblue}{RGB}{80,170,200}
\definecolor{TUlightgreen}{RGB}{125,165,75}
\definecolor{TUgreen}{RGB}{125,165,75}
\definecolor{TUdarkgreen}{RGB}{50,110,30}
\definecolor{TUlightred}{RGB}{200,80,60}
\definecolor{TUpurple}{RGB}{175,110,150}
\definecolor{TUorange}{RGB}{210,150,0}
\definecolor{SNSblue}{rgb}{0.1216, 0.4666, 0.7059}
\definecolor{SNSorange}{rgb}{1.0, 0.4980, 0.0549}
\definecolor{SNSgreen}{rgb}{0.1725, 0.6274, 0.1725}
\definecolor{SNSred}{rgb}{0.84, 0.15, 0.16}
\definecolor{SNSpurple}{rgb}{0.58, 0.40, 0.74}
\definecolor{SNSorange_shaded}{HTML}{ffcea3}
\definecolor{SNSblue_shaded}{HTML}{8ebad9}
\definecolor{SNSgreen_shaded}{HTML}{cae7ca}
\definecolor{SNSred_shaded}{HTML}{ea9293}
\definecolor{MPLred_shaded}{HTML}{df735b}
\definecolor{MPLblue_shaded}{HTML}{3885bc}
\definecolor{PlotRed}{rgb}{0.77, 0.31, 0.32}
\definecolor{PlotBlue}{rgb}{0.30, 0.45, 0.69}
\newcommand{\papertitle}{Accelerating Non-Conjugate Gaussian Processes\\By Trading Off Computation For Uncertainty}
\title{\papertitle}
\author{\name Lukas Tatzel
  \email lukas.tatzel@uni-tuebingen.de \\
  \addr University of Tübingen, Tübingen AI Center
  \AND
  \name Jonathan Wenger
  \email jw4246@columbia.edu \\
  \addr Columbia University
  \AND
  \name Frank Schneider
  \email f.schneider@uni-tuebingen.de\\
  \addr University of Tübingen, Tübingen AI Center
  \AND
  \name Philipp Hennig
  \email philipp.hennig@uni-tuebingen.de\\
  \addr University of Tübingen, Tübingen AI Center
}
\begin{document}

\maketitle

\begin{abstract}
Non-conjugate Gaussian processes (NCGPs) define a flexible probabilistic framework to
model categorical, ordinal and continuous data, and are widely used in practice.
However, exact inference in NCGPs is prohibitively expensive for large datasets, thus
requiring approximations in practice. The approximation error adversely impacts the
reliability of the model and is not accounted for in the uncertainty of the prediction.
We introduce a family of iterative methods that explicitly model this error. They are
uniquely suited to parallel modern computing hardware, efficiently recycle computations,
and compress information to reduce both the time and memory requirements for NCGPs. As
we demonstrate on large-scale classification problems, our method significantly
accelerates posterior inference compared to competitive baselines by trading off reduced
computation for increased uncertainty.
\end{abstract}

\section{Introduction}
\label{sec:intro}

Non-conjugate Gaussian processes\footnote{Such a model is also called Generalized
Gaussian Process Model \citep{Chan2011Generalized} or Generalized Linear Model
\citep{Nelder1972GeneralizedLinear}. The latter name is sometimes used only for latent
Gaussian models, which can lead to confusion. The models studied in this work are
generally of \textit{nonparametric} nature. The resulting large linear problems are the
main reason why the algorithms we propose are relevant in the first place.}  (NCGPs)
form a fundamental interpretable model class widely used throughout the natural and
social sciences. For example, NCGPs are applied to count data in biomedicine,
categorical data in object classification tasks, and continuous data in time series
regression. An NCGP assumes the data is generated from an exponential family likelihood
with a Gaussian process (GP) prior over the latent function. Such a
\textit{probabilistic} approach is essential in domains where critical decisions must be
made based on limited information, such as in public policy, medicine or robotics.

Unfortunately, even the conjugate Gaussian case, where fitting an NCGPs reduces to GP
regression, naively has cubic time complexity \(\bigO{\numtraindata^3}\) in the number
of training data \(\numtraindata\) and requires \(\bigO{\numtraindata^2}\) memory, which
is prohibitive for modern large-scale datasets. For non-Gaussian likelihoods, inference
has to be done approximately, which generally exacerbates this problem. For example,
inference via the Laplace approximation (LA) boils down to finding the mode of the log
posterior via Newton's method, which is equivalent to solving a \emph{sequence} of
regression problems
\citep{Spiegelhalter1990SequentialUpdating,MacKay1992EvidenceFramework,Bishop2006}.

Due to limited computational resources, large-scale problems often require
approximations. The resulting error affects a model's predictive accuracy but also its
uncertainty quantification. Hence, the question arises: \textbf{Can NCGPs be efficiently
trained on extensive data without compromising reliability?}

\vspace{-0.3ex}

\begin{figure}
  \centering
  \includegraphics[width=0.99\linewidth]{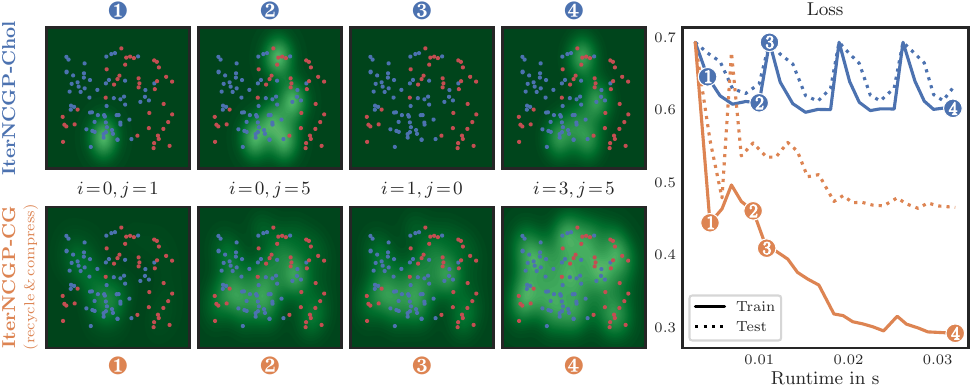}
  \caption{
    \textbf{Binary Classification with \iterncgp.}
    Comparison of two \iterncgp variants: \textcolor{SNSblue}{\textit{(Top)}} \iterncgp
    variant corresponding to data subsampling and solving each regression problem
    exactly in each Newton step \(\newtonidx\). \textcolor{SNSorange}{\textit{(Bottom)}}
    \iterncgp variant with a more informative policy (details in \cref{sec:policy}),
    recycling of computations between Newton steps (details in \cref{sec:recycling}) and
    compression to reduce memory (details in \cref{sec:compression}). The panels show
    the marginal uncertainty (\,\colorgradientbox[black!70!SNSgreen]{SNSgreen}\,) over
    the latent function at Newton step $i$ and solver iteration $j$. Using recycling,
    the current belief is efficiently propagated between mode-finding steps
    \(\newtonidx\) (\ding{183} \(\!\to\!\) \ding{184}) without performance drops
    \textit{(Right)}. Details in \cref{sec:details_binary_classification}.}
    \label{fig:visual_abstract}
    \vspace{-0.3ex}
\end{figure}

Recently, iterative methods have emerged which in the conjugate Gaussian case allow an
explicit, tunable trade-off between reduced computation and increased uncertainty
\citep{Trippe2019LRGLMHighDimensional,Wenger2022PosteriorComputational}. This
computational uncertainty quantifies the inevitable approximation error in the sense of
probabilistic numerics \citep{Hennig2015a,Cockayne2019a,Oates2019,Hennig2022}.

\vspace{-0.3ex}

\textbf{Contributions.} In this work, we take a similar approach and extend
\citet{Wenger2022PosteriorComputational}'s \itergp (that assumes a \textit{conjugate}
Gaussian likelihood) to \textit{non-conjugate} exponential family likelihoods. This is a
non-trivial extension, as the posterior is no longer Gaussian and \textit{multiple}
related regression problems have to be solved. Specifically, we propose \textbf{(i)}
\iterncgp{}: a family of efficient inference algorithms for NCGPs with a tunable
trade-off between computational savings and added uncertainty
(\cref{sec:derivation_iterglm}) with \textbf{(ii)} mechanisms to tailor the inference
algorithm to a specific downstream application (\cref{sec:policy}). In response to the
specific computational challenges in the non-conjugate setting, we develop
\textbf{(iii)} novel strategies to optimally recycle costly computations
(\cref{sec:recycling}) and \textbf{(iv)} to restrict the memory usage, with minimal
impact on inference (\cref{sec:compression}).

\vspace{-0.3ex}

Our algorithm \iterncgp consists of two nested loops: An outer loop (indexed by
$\newtonidx$) which iterates over Newton steps\slash{}GP regression problems, each of
which is solved approximately via an inner loop (indexed by $\solveridx$) that
implements a probabilistic linear solver. \Cref{fig:visual_abstract} shows the marginal
uncertainty over the latent function at different stages of that process and illustrates
the effectiveness of (ii), (iii) and (iv). Specifically, by recycling computations
between Newton steps, we are able to traverse the two-loop structure ``diagonally''
which leads to steady progress without performance drops between Newton steps.

\vspace{-0.3ex}

\section{Background}
\label{sec:background}

Let \((\mX, \vy)\) be a \dataset of $\numtraindata$ input vectors $\{\vx_n \in
\inputspace\}_{n=1}^\numtraindata$ stacked into $\mX  = (\vx_1, \dots\vx_N)^\top \in
\R^{\numtraindata \times \inputdim}$ and corresponding outputs $\vy = (y_1, \dots,
y_N)^\top \in \outputspace^N$, where $\inputspace = \R^\inputdim$ and $\outputspace =
\R$ or \(\outputspace  =  \N_0\) (regression) or $\outputspace = \{1, \dots, C\}$
(classification).

\subsection{Non-conjugate Gaussian Processes (NCGPs)}
\label{sec:ncgps}
We consider the probabilistic model $p(\vy, \vf \mid \mX) = p(\vy \mid \vf) \, p(\vf
\mid \mX)$, where the vector \(\vf \defeq f(\mX) \in \R^{\numtraindata\numclasses}\) is
given by a latent function $f \colon \inputspace \to \R^\numclasses$ evaluated at the
training data.

\textbf{Prior.}
Assume a multi-output Gaussian process prior \(\gp{m}{\kernelfn}\) over the latent
function with mean function $m \colon \inputspace \to  \R^\numclasses$ and kernel
function $\kernelfn \colon \inputspace \times \inputspace \to \R^{\numclasses \times
\numclasses}$. Therefore the latent vector has density $p(\vf\mid\mX) =
\gaussianpdf{\vf}{\vm}{\mK}$ with mean $\vm \defeq m(\traindata) \in
\R^{\numtraindata\numclasses}$ and covariance $\mK = \kernelfn(\traindata,\traindata)
\in \R^{\numtraindata\numclasses \times \numtraindata\numclasses}$ defined by
$\numtraindata^2$ blocks $\kernelfn(\vx_i, \vx_j) \in \R^{\numclasses \times
\numclasses}$. Each such block represents the covariance between the $\numclasses$
latent functions evaluated at inputs $\vx_i$ and $\vx_j$.

\textbf{Likelihood.}
Assume iid data, which depends on the latent function via an inverse link function
\(\lambda : \R^\numclasses \to \R^\numclasses\), s.t. $p(\vy \mid \vf) =
\prod_{n=1}^\numtraindata p(y_n \mid \lambda(\vf_n))$, where \(p(y_n \mid
\lambda(\vf_n))\) is a log-concave likelihood, \eg any exponential family
distribution.\footnote{The Hessian of an exponential family likelihood is the negative
Hessian of its log-partition function, which equals the \emph{positive definite}
covariance matrix of its sufficient statistics.} For example, for Poisson regression the
inverse link function is given by \(\lambda(f_n) = \exp(f_n)\) and for multi-class
classification by \(\lambda(\vf_n) = \operatorname{softmax}(\vf_n)\).

For nonlinear inverse link functions, the posterior $p(f  \mid  \mX, \vy)$ and
predictive distribution \(p(y_\symboltestdata  \mid  \mX, \vy, \vx_\symboltestdata) =
\int p(y_\symboltestdata  \mid f_\symboltestdata) p(f_\symboltestdata  \mid  \mX, \vy,
\vx_\symboltestdata) df_\symboltestdata \) are computationally intractable, requiring
approximations.

\subsection{Approximate Inference via Laplace}
\label{subsec:laplace_approximation}

A popular way to perform approximate inference in an NCGP is to use a Laplace
approximation (LA) \citep{Spiegelhalter1990SequentialUpdating,
MacKay1992EvidenceFramework, Bishop2006}. The idea is to approximate the posterior
\begin{equation}
    p(\vf \mid \mX, \vy)
    \approx
    q(\vf \mid \mX, \vy)
    \defeq
    \gaussianpdf{\vf}{\vf_{\text{MAP}}}{\mSigma},
    \label{eq:laplace_approximation}
\end{equation}
with a Gaussian with mean given by the mode \(\vf_{\text{MAP}}\) of the log-posterior
and covariance $\mSigma \defeq -(\nabla^2 \log p(\vf_{\text{MAP}} \mid \mX, \vy))^{-1}$
given by the negative inverse Hessian (with respect to $\vf$) at the mode. Due to the
assumed GP prior over the latent function, the log-posterior is given by
\begin{equation}
    \label{eq:Psi}
    \Psi(\vf)
    \defeq \log p(\vf \mid \mX, \vy)
    \eqc \log p(\vy \mid \vf) + \log p(\vf \mid \mX)
    \eqc \log p(\vy \mid \vf)
    -\frac{1}{2} (\vf - \vm)^\top \mK^{-1} (\vf - \vm)
\end{equation}
We use $\eqc$ to denote equality up to an additive constant.

\textbf{Mode-Finding via Newton's Method.}
To find the mode $\vf_{\text{MAP}}$, one typically uses Newton steps, \ie
\begin{equation}
    \vf_{\text{MAP}}
    \approx \vf_{\newtonidx + 1}
    = \vf_\newtonidx
    - \nabla^2 \Psi(\vf_\newtonidx)^{-1} \cdot \nabla\Psi(\vf_\newtonidx),
    \label{eq:newton_step}
\end{equation}
where
$
    \nabla \Psi(\vf_\newtonidx)
    = \nabla \log p(\vy \mid \vf_\newtonidx) - \mK^{-1} (\vf_\newtonidx - \vm)
$ and
$
    \nabla^2 \Psi(\vf_\newtonidx)
    = -\mW(\vf_\newtonidx) - \mK^{-1}
$.
The negative Hessian $\mW(\vf_\newtonidx) \defeq -\nabla^2 \log p(\vy \mid
\vf_\newtonidx)$ of the log likelihood at $\vf_\newtonidx$ is positive definite for all
\(\vf\), since we assumed a log-concave likelihood. Therefore \(\Psi\) is concave and
the Newton updates are well-defined.

\subsection{Predictions}
\label{subsec:prediction}

Using a local quadratic Taylor approximation of the log-posterior $\Psi$ around the
current iterate \(\vf_\newtonidx\), we obtain the LA $\gaussianpdf{\vf}{\vf_{\newtonidx
+ 1}}{-\nabla^2 \Psi(\vf_\newtonidx)^{-1}}$ whose mean is given by the maximizer of the
local quadratic, \ie the subsequent Newton iterate \(\vf_{\newtonidx + 1}\).
Substituting this in place of the posterior, the predictive distribution for the latent
function \(p(f(\cdot) \mid \mX, \vy)=\int p(f(\cdot) \mid \vf) \, \mathcal{N}(\vf;
\vf_{\newtonidx + 1}, -\nabla^2 \Psi(\vf_\newtonidx)^{-1}) \, d\vf\) is a Gaussian
process $\gp{m_{\newtonidx,*}}{\kernelfn_{\newtonidx,*}}$, with mean and covariance
functions
\begin{align}
    m_{\newtonidx,*}(\cdot)
    &\defeq m(\cdot)
    + \kernelfn(\cdot, \mX) \mK^{-1} (\vf_{\newtonidx + 1} - \vm),
    \label{eq:predictive_mean}
    \\
    \kernelfn_{\newtonidx,*}(\cdot, \cdot)
    &\defeq \kernelfn(\cdot, \cdot)
    - \kernelfn(\cdot, \mX)
    \hat{\mK}(\vf_\newtonidx)^{-1}
    \kernelfn(\mX, \cdot),
    \label{eq:predictive_variance}
\end{align}
where $\KWinvf{\vf_\newtonidx} \defeq \mK + \mW(\vf_\newtonidx)^{-1}$
\citep[cf.~Eq.~(3.24);][]{Rasmussen2006}. We obtain the predictive distribution for
$y_\symboltestdata$ at test input \(\vx_\symboltestdata\) by integrating this
approximative posterior against the likelihood, \ie
$p(y_\symboltestdata \mid \mX, \vy, \vx_\symboltestdata)
     =
    \int
    p(y_\symboltestdata \mid \vf_\symboltestdata) \,
    p(\vf_\symboltestdata \mid \mX, \vy, \vx_\symboltestdata) \,
    d\vf_\symboltestdata$.
This $\numclasses$-dimensional integral can be approximated via quadrature, MC-sampling
or specialized approaches (like the probit method \citep{MacKay1992EvidenceFramework}
for a categorical likelihood and softmax inverse link function).

\section{Computation-Aware Inference in NCGPs}
\label{sec:method}

While Newton's method typically converges in a few steps for a log-concave likelihood,
each step in \eqref{eq:newton_step} requires linear system solves with symmetric
positive \mbox{(semi-)}definite matrices of size \(\numtraindata\numclasses   \times
\numtraindata\numclasses\). Naively computing these solves via Cholesky decomposition is
problematic even for moderately sized \datasets due to its cubic time
$\bigO{\numtraindata^3\numclasses^3}$ and quadratic memory complexity
\(\bigO{\numtraindata^2\numclasses^2}\). We will demonstrate in the following how to
circumvent this issue by reducing the computations in exchange for increased uncertainty
about the latent function.

\subsection{Derivation of the \iterncgp Framework}
\label{sec:derivation_iterglm}

\textbf{Overview.} As a first step, we reinterpret the posterior predictive mean
(\cref{eq:predictive_mean}) as the GP posterior for a specific regression problem
(\cref{eq:inference_between_steps_mean_2}). The sequence of regression problems is
connected to the sequence of Newton steps and forms the outer loop of our algorithm
\iterncgp, indexed by $\newtonidx$ (\cref{alg:outer}). The formulation as GP regression
problem allows us to apply \itergp \citep{Wenger2022PosteriorComputational} as an inner
loop, indexed by $\solveridx$ (\cref{alg:inner}). By using \itergp, we can solve each
regression problem approximately and quantify the resulting error in the form of
additional uncertainty (\cref{eq:inference_PLS_iteration_covar}).

\textbf{Outer Loop: Newton's Method as Sequential GP Regression.}
Through the LA, we obtain a posterior predictive $f \sim
\gp{m_{\newtonidx,*}}{\kernelfn_{\newtonidx,*}}$ over the latent function for each
Newton step. As we show in \cref{subsec:connection_inference_Newton_steps}, the
posterior predictive (\cref{eq:predictive_mean,eq:predictive_variance}) in step
\(\newtonidx\) can be written as
\begin{align}
    m_{\newtonidx,*}(\cdot)
  & =  m(\cdot)
    + \kernelfn(\cdot, \mX)
  \KWinvf{\vf_\newtonidx}^{-1}
  (\pseudotargetsf{\vf_\newtonidx} - \vm)
  \label{eq:inference_between_steps_mean_2} \\
    \kernelfn_{\newtonidx,*}(\cdot, \cdot)
    & =  \kernelfn(\cdot, \cdot)
    - \kernelfn(\cdot, \mX)
  \KWinvf{\vf_\newtonidx}^{-1}
    \kernelfn(\mX, \cdot),
  \label{eq:inference_between_steps_covar}
\end{align}
where $\smash{\pseudotargetsf{\vf_\newtonidx} \defeq \vf_\newtonidx +
\mW(\vf_\newtonidx)^{-1} \nabla\log p(\vy \mid \vf_\newtonidx)}$.
\Cref{eq:inference_between_steps_mean_2,eq:inference_between_steps_covar} have the exact
form of the posterior for a GP regression problem with fictitious \emph{pseudo targets}
$\pseudotargetsf{\vf_\newtonidx}$ observed with Gaussian noise
$\gaussian{\vzero}{\mW(\vf_\newtonidx)^{-1}}$.\footnote{If \(\mW(\vf_\newtonidx)^{-1}\)
does not exist, \eg in multi-class classification, we substitute its pseudo-inverse
\(\mW(\vf_\newtonidx)^\pinv\), which for multi-class classification can be evaluated
efficiently (see \Cref{subsec:pseudo_inverse_w}). Alternatively, one can place a prior
on the sum of the \(\numclasses\) latent functions
\citep[Eq.~(10)]{MacKay1998ChoiceBasis}.}
\cref{fig:pseudo_targets_noise} shows an illustration of this interpretation.

\begin{figure}[tb]
  \centering
  \includegraphics[width=0.99\linewidth]{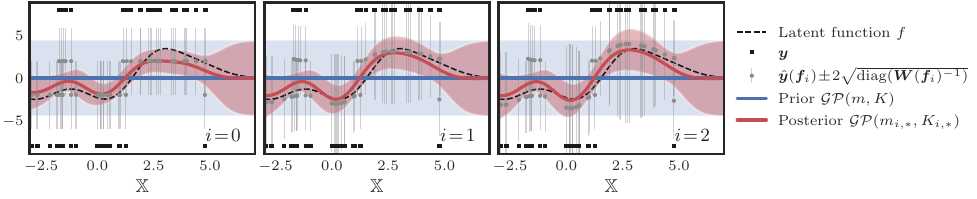}
  \caption{\textbf{Approximate Inference in NCGPs as Sequential GP Regression.}
  Performing a LA at a Newton iterate $\vf_\newtonidx$ results in a posterior GP that
  coincides with the posterior to a GP regression problem with pseudo targets
  $\pseudotargetsf{\vf_\newtonidx}$ observed with Gaussian noise
  \(\gaussian{\vzero}{\mW(\vf_\newtonidx)^{-1}}\). The plot shows an illustration of
  this connection for binary classification on a toy problem with the latent function
  drawn from a GP. Notice how similar the posteriors are between Newton steps. This
  motivates our proposed strategy for recycling computations between steps in
  \cref{sec:recycling}. Details in \cref{sec:details_binary_classification}. }
  \label{fig:pseudo_targets_noise}
\end{figure}

\Cref{eq:inference_between_steps_mean_2} requires solving a linear system
$\KWinvf{\vf_\newtonidx} \, \vv = \pseudotargetsf{\vf_\newtonidx} - \vm$ of size
\(\numtraindata\numclasses \times \numtraindata\numclasses\). The posterior mean is then
simply given by $m_{\newtonidx,*}(\cdot) = m(\cdot) + \kernelfn(\cdot, \mX) \vv$.
However, also the Newton update from \cref{eq:newton_step} follows directly from $\vv$
since $\vf_{\newtonidx+1} = \mK \vv + \vm$. In that sense, computing the posterior
predictive mean and performing Newton updates are equivalent.
The sequence of Newton steps\slash{}GP regression problems forms the outer loop of our
algorithm \iterncgp. The pseudo code is given in \cref{alg:outer}.

\begin{algorithm*}[t!]  \caption{\textbf{\iterncgp Outer loop\label{alg:outer}.}}
  \small
  \algrenewcommand{\algorithmiccomment}[3]
{\hfill
    {
        \small \textcolor{gray}{#1 \hspace{0.3em}}
        \makebox[7em][l]{#2}   \makebox[5em][l]{#3}  }
}

\textbf{Input:} GP prior $\gp{m}{\kernel}$, training data $(\mX, \vy)$, $\nabla p(\vy
\vert \vf, \mX)$ and access to products with $\mK$ and $\mW(\vf)^{-1}$\\
\textbf{Output:} GP posterior
$\gp{m_{\newtonidx,\solveridx}}{\kernelfn_{\newtonidx,\solveridx}}$
\begin{algorithmic}[1]
    \Procedure{\iterncgp}{$m, \kernelfn, \mX, \vy, \vf_0 = \vm$}
    \Comment{}{\textbf{Time}}{\textbf{Memory}}
    \State $\vm \gets m(\mX)$,
    \Comment{$\vm$: Prior mean vector}
    {$\bigO{\timemean}$}
    {$\bigO{\numtraindata\numclasses}$}
    \State Provide access to $\vw \mapsto \mK \vw$
    \Comment{$\mK$: Prior covariance/kernel matrix}{}{$\bigO{\spaceK}$}
    \State \textcolor{Recycling}{
        Initialize buffers $\mS, \mT \in \R^{\numtraindata\numclasses \times 0}$
    }
    \Comment{$\mS$: actions, $\mT$: products with $\mK$}{}{}
    \For{$\newtonidx = 0, 1, 2, \dots$ \textbf{while not} \textsc{OuterStoppingCriterion}()}
        \State Provide access to $\vw \mapsto \mW(\vf_\newtonidx)^{-1} \vw$
        \Comment{$\mW(\vf_\newtonidx)^{-1}$: Observation noise}{}{$\bigO{\spaceWinv}$}
        \State
        $
            \pseudotargetsf{\vf_\newtonidx} \gets
            \vf_\newtonidx
            + \mW(\vf_\newtonidx)^{-1} \nabla\log p(\vy \vert \vf_\newtonidx)
        $
        \Comment{$\pseudotargetsf{\vf_\newtonidx}$: Pseudo targets}
        {$\bigO{\timeWinv\!+\!\numtraindata\numclasses}$}
        {$\bigO{\numtraindata\numclasses}$}
        \State
        $\mathcal{GP}(m_{\newtonidx, \solveridx}, \kernel_{\newtonidx, \solveridx}), \vv_\solveridx \gets$
        \itergp{}($m, \kernelfn, \mX, \vy, \vm, \mK, \mW(\vf_\newtonidx)^{-1}, \pseudotargetsf{\vf_\newtonidx}, \textcolor{Recycling}{\mS}, \textcolor{Recycling}{\mT}$)
        \Comment{}{}{}
        \State
        $\vf_{i+1} \gets \mK \vv_\solveridx + \vm$
        \Comment{Approximate Newton update}{$\bigO{\timeK\!+\!\numtraindata\numclasses}$}
        {$\bigO{\numtraindata\numclasses}$}
    \EndFor
    \State \Return $\gp{m_{\newtonidx,\solveridx}}{\kernelfn_{\newtonidx,\solveridx}}$
    \EndProcedure
\end{algorithmic}

The \itergp algorithm is given in \cref{alg:inner}. Instructions in
\textcolor{Recycling}{blue} are needed for recycling (see \cref{sec:recycling}). The
matrices $\mK$ and $\mW^{-1}(\vf_\newtonidx)$ are evaluated lazily. We thus report the
runtime costs when the matrix-vector products are actually computed. For an in-depth
discussion of the computational costs, see \cref{sec:details_cost_analysis}.
 \end{algorithm*}

\textbf{Inner Loop: Computation-Aware GP Regression via \itergp.}
Reframing the Newton iteration as sequential GP regression does not yet solve the need
for linear solves with a matrix of size \(\numtraindata\numclasses \times
\numtraindata\numclasses\). However, it allows us to leverage recent advances for GP
regression, specifically the \itergp algorithm introduced by
\citet{Wenger2022PosteriorComputational}. \itergp is matrix-free, \ie only relies on
matrix-vector products \(\vs \mapsto \KWinvf{\vf_\newtonidx} \, \vs\), reducing the
required memory from quadratic to linear, and efficiently exploits modern parallel GPU
hardware \citep{Charlier2021}.

Internally, \itergp uses a probabilistic linear solver (PLS)
\citep{Hennig2015,Cockayne2019,Wenger2020} to iteratively compute a Gaussian belief over
the so-called \textit{representer weights}, \ie over the solution of the linear system
$\KWinvf{\vf_\newtonidx} \, \vv = \pseudotargetsf{\vf_\newtonidx} - \vm$. In each solver
iteration, indexed by $\solveridx$, the belief
$\gaussianpdf{\vv}{\vv_\solveridx}{\mOmega_\solveridx}$ is updated by conditioning the
Gaussian on a one-dimensional projection $\alpha_\solveridx = \action_\solveridx^\top
\residual_{\solveridx - 1}$ of the preceding residual $\residual_{\solveridx - 1} =
\pseudotargetsf{\vf_\newtonidx} - \vm - \KWinvf{\vf_\newtonidx} \vv_{\solveridx - 1}$.
The vector $\action_\solveridx \gets \textsc{Policy}()$ is called an \textit{action} and
is generated by a user-specified \textit{policy}. The policy determines the behavior of
the solver by ``weighting'' the residual $\residual_{\solveridx - 1}$ for specific
\datapoints (we discuss the role of the policy in the context of NCGPs in
\cref{sec:policy}.). \citet[Tab.~1]{Wenger2022PosteriorComputational} lists policies
that have classic counterparts, \eg unit vectors $\action_\solveridx \gets
\ve_\solveridx$ correspond to partial Cholesky and residual actions $\action_\solveridx
\gets \residual_{\solveridx - 1}$ to conjugate gradients \citep{Hestenes1952}.

The belief over the representer weights translates into an approximate GP posterior over
the latent function,
\begin{align}
  m_{\newtonidx,\solveridx}(\cdot)
  &\defeq m(\cdot) + \kernelfn(\cdot, \mX) \vv_\solveridx
  \label{eq:inference_PLS_iteration_mean}
  \\
    \kernelfn_{\newtonidx,\solveridx}(\cdot, \cdot)
    &\defeq \kernelfn(\cdot, \cdot)
    - \kernelfn(\cdot, \mX) \, \mC_\solveridx \kernelfn(\mX, \cdot),
  \label{eq:inference_PLS_iteration_covar}
\end{align}
where \(\vv_\solveridx = \mC_\solveridx (\pseudotargetsf{\vf_\newtonidx} - \vm)\).
Crucially, by \citet[Thm.~2]{Wenger2022PosteriorComputational}, the posterior covariance
in \cref{eq:inference_PLS_iteration_covar} \emph{exactly} quantifies the error in each
approximate Newton step introduced by only using \emph{limited computational resources},
\ie running the linear solver for \(\solveridx \ll \numtraindata\numclasses\)
iterations. This reduces the time complexity to \(\bigO{\solveridx
\numtraindata^2\numclasses^2}\). The approximate precision matrix in
\cref{eq:inference_PLS_iteration_mean,eq:inference_PLS_iteration_covar},
\begin{equation}
  \mC_\solveridx
  =
  \mS_\solveridx
  (\mS_\solveridx^\top \KWinvf{\vf_\newtonidx} \mS_\solveridx)^{-1}
  \mS_\solveridx^\top
\label{eq:C_j}
\end{equation}
with \(\mS_\solveridx = \begin{pmatrix} \vs_1, \dots, \vs_\solveridx \end{pmatrix} \in
\R^{\numtraindata\numclasses \times \solveridx}\) has rank \(\solveridx\) and approaches
$\KWinvf{\vf_\newtonidx}^{-1}$ as \(\solveridx \to \numtraindata\numclasses\).
Intuitively, it projects $\KWinvf{\vf_\newtonidx}$ onto the subspace spanned by the
actions $\mS_\solveridx$, then inverts and projects the result back into the original
space.
The \itergp algorithm is given in \cref{alg:inner}.

\textbf{The Marginal Uncertainty Decreases in the Inner Loop.}
As we perform more solver iterations, the marginal uncertainty captured by the posterior
covariance (\cref{eq:inference_PLS_iteration_covar}) contracts, \ie for each $i$ it
holds (element-wise) that
$
  \diag(\kernelfn_{\newtonidx, \solveridx}(\vx, \vx))
  \geq
  \diag(\kernelfn_{\newtonidx, k}(\vx, \vx))
$
for any $k \geq \solveridx$ and arbitrary $\vx$. This is because the approximate
precision matrix $\mC_\solveridx$ grows in rank with each solver iteration. For a
detailed derivation, see \cref{subsec:uncertainty_decreases_inner_loop}.

\begin{algorithm*}[t!]  \caption{\textbf{\iterncgp Inner Loop: \itergp with a Virtual Solver Run.}\label{alg:inner}}
  \small
  \algrenewcommand{\algorithmiccomment}[3]
{\hfill
    {
        \small \textcolor{gray}{#1 \hspace{0.3em}}
        \makebox[9.5em][l]{#2}   \makebox[5em][l]{#3}  }
}

\textbf{Input:} GP prior $\gp{m}{\kernel}$, training data $(\mX, \vy)$, $\vm$, access to
products with $\mK$ and $\mW^{-1}$, pseudo targets $\pseudotargets$, buffers
$\textcolor{Recycling}{\mS}, \textcolor{Recycling}{\mT}$\\
\textbf{Output:} GP posterior
$\gp{m_{\newtonidx, \solveridx}}{\kernelfn_{\newtonidx, \solveridx}}$
\begin{algorithmic}[1]
    \Procedure{\itergp}{$m, \kernelfn, \mX, \vy, \vm, \mK, \mW^{-1},
    \pseudotargets, \textcolor{Recycling}{\mS},
    \textcolor{Recycling}{\mT}$}
    \Comment{}{\textbf{Time}}{\textbf{Memory}}
    \State \textcolor{Recycling}{
        $\mC_0, \mS, \mT \gets$ \textsc{VirtualSolverRun}($\mS, \mT, \mW^{-1}$)
    }
    \Comment{See \cref{alg:virtual_solver_run}}{}{}
    \State $\vv_0 \gets \mC_0 (\pseudotargets - \vm)$
    \Comment{$\vv_0:$ Consistent initial iterate}
    {$\bigO{\bufferlimit\numtraindata\numclasses}$}
    {$\bigO{\numtraindata\numclasses}$}
    \For{$\solveridx = 1, 2, 3, \dots$ \textbf{while not} \textsc{InnerStoppingCriterion}()}
        \State $\residual_{\solveridx - 1} \gets (\pseudotargets{}\!-\!\vm)\!-\!\mK \vv_{\solveridx-1}\!-\!\mW^{-1} \vv_{\solveridx-1}$
        \Comment{$\residual_{\solveridx - 1}:$ Residual vector}
        {$\bigO{\timeK\!+\!\timeWinv\!+\!\numtraindata\numclasses}$}
        {$\bigO{\numtraindata\numclasses}$}
        \State $\action_\solveridx \gets \textsc{Policy}()$
        \Comment{Select action $\action_\solveridx$ via policy}
        {$\bigO{\timepolicy}$}
        {$\bigO{\numtraindata\numclasses}$}
        \State \textcolor{Recycling}{Append $\action_\solveridx$ to buffer
        $\mS \gets (\mS, \action_\solveridx) \in \R^{\numtraindata\numclasses \times \buffersize}$
        }
        \Comment{}
        {}
        {$\bigO{\buffersize \numtraindata\numclasses}$}
        \State $\observ_\solveridx \gets \action_\solveridx^\top \residual_{\solveridx-1}$
        \Comment{$\observ_\solveridx:$ Observation is projection of residual onto action}
        {$\bigO{\numtraindata\numclasses}$}
        {$\bigO{1}$}
        \State $\vt_\solveridx \gets \mK \action_\solveridx$
        \Comment{First term in $\KWinv \action_\solveridx\!=\!  \mK\action_\solveridx\!+\!\textcolor{lightgray}{\mW^{-1}\action_\solveridx}$}
        {$\bigO{\timeK}$}
        {$\bigO{\numtraindata\numclasses}$}
        \State \textcolor{Recycling}{Append $\vt_\solveridx$ to buffer
        $\mT \gets (\mT, \vt_\solveridx) \in \R^{\numtraindata\numclasses \times \buffersize}$
        }
        \Comment{}
        {}
        {$\bigO{\buffersize \numtraindata\numclasses}$}
        \State $\vz_\solveridx \gets \vt_\solveridx\!+\!\mW^{-1} \action_\solveridx$
        \Comment{Second term in $\KWinv \action_\solveridx\!=\!  \textcolor{lightgray}{\mK\action_\solveridx}\!+\!\mW^{-1}\action_\solveridx$}
        {$\bigO{\timeWinv\!+\!\numtraindata\numclasses}$}
        {$\bigO{\numtraindata\numclasses}$}
        \State
        $\searchdir_\solveridx \gets
        \action_\solveridx - \kernmatinvapprox_{\solveridx-1}\vz_\solveridx$
        \Comment{$\searchdir_\solveridx:$ Search direction}
        {$\bigO{\buffersize\numtraindata\numclasses}$}
        {$\bigO{\numtraindata\numclasses}$}
        \State
        $\searchdirsqnorm_\solveridx \gets \vz_\solveridx^\top \searchdir_\solveridx$
        \Comment{$\searchdirsqnorm_\solveridx:$ Normalization constant}
        {$\bigO{\numtraindata\numclasses}$}
        {$\bigO{1}$}
        \State $\matrixroot_\solveridx \gets (\matrixroot_{\solveridx - 1}, \nicefrac{1}{\sqrt{\searchdirsqnorm_\solveridx}}\, \searchdir_\solveridx) \in \R^{\numtraindata\numclasses \times \buffersize}$
        \Comment{Append column}
        {$\bigO{\numtraindata\numclasses}$}
        {$\bigO{\buffersize\numtraindata\numclasses}$}
        \State $\mC_\solveridx \gets \matrixroot_\solveridx \matrixroot_\solveridx^\top$
        \Comment{Rank $\buffersize$ approximation $\mC_\solveridx \approx\!\KWinv^{-1}$}
        {}
        {}
        \State $\rweightsapprox_\solveridx \gets \rweightsapprox_{\solveridx-1} + \frac{\alpha_\solveridx}{\searchdirsqnorm_\solveridx}\searchdir_\solveridx  $
        \Comment{$\rweightsapprox_\solveridx$ Updated representer weights estimate}
        {$\bigO{\numtraindata\numclasses}$}
        {$\bigO{\numtraindata\numclasses}$}
        \State $m_{\newtonidx,\solveridx}(\cdot)
        \gets m(\cdot) + \kernel(\cdot, \traindata) \rweightsapprox_\solveridx$
        \Comment{\Cref{eq:inference_PLS_iteration_mean}}
        {$\bigO{\numtraindata\numtestdata\numclasses^2}$}
        {$\bigO{\numtestdata\numclasses}$}
        \State $\kernel_{\newtonidx,\solveridx}(\cdot, \cdot)
        \gets \kernel(\cdot, \cdot)	- \kernel(\cdot, \traindata)
        \kernmatinvapprox_\solveridx \kernel(\traindata, \cdot)$
        \Comment{\Cref{eq:inference_PLS_iteration_covar}}
        {$\bigO{\buffersize(\numtraindata\!+\!\numtestdata)\numtestdata\numclasses^2}$}
        {$\bigO{\numtestdata^2\numclasses^2}$}
    \EndFor
    \State \Return
    $\mathcal{GP}(m_{\newtonidx, \solveridx}, \kernel_{\newtonidx, \solveridx})$ and $\vv_\solveridx$
    \EndProcedure
\end{algorithmic}

Instructions in \textcolor{Recycling}{blue} are needed for recycling (see
\cref{sec:recycling}). $\mC_\solveridx$ is represented via its root
$\matrixroot_\solveridx$ and evaluated lazily. We thus report the runtime costs when the
matrix-vector products are actually computed. The costs for evaluating the posterior GP
$\gp{m_{\newtonidx,\solveridx}}{\kernel_{\newtonidx,\solveridx}}$ are based on
$\numtestdata$ test \datapoints $\mX_\symboltestdata \in \R^{\numtestdata \times D}$.
For an in-depth discussion of the computational costs, see \cref{sec:details_cost_analysis}.
 \end{algorithm*}

\textbf{Summary.} Finding the posterior mode $\vf_\text{MAP}$ and the corresponding
predictive distributions (\cref{eq:predictive_mean,eq:predictive_variance}) can be
viewed from different angles. Through an optimization lens, we use Newton updates, each
maximizing a local quadratic approximation of the log-posterior. From a probabilistic
perspective, we solve a sequence of \textit{related} GP regression problems and \itergp
enables us to propagate a probabilistic estimate of the latent function throughout the
\textit{entire} optimization process.

For Gaussian likelihoods, the LA (\cref{eq:laplace_approximation}) is exact and a single
Newton step suffices. Consequently, our framework generalizes \itergp to arbitrary
log-concave likelihoods (\cref{theo:extension_of_itergp}). We now explore the role of
the policy and its potential for actions tailored to specific problems
(\cref{sec:policy}). We also leverage the relatedness of GP regression problems in the
outer loop for further speedups (\cref{sec:recycling}) and introduce a mechanism to
control \iterncgp{}'s memory usage (\cref{sec:compression}).

\subsection{Policy Choice: Targeted Computations}
\label{sec:policy}

\cref{alg:inner} defines a \textit{family} of inference algorithms. Its instances,
defined by a concrete action policy, generally behave quite differently. To better
understand what effect the sequence of actions \(\mS_\solveridx = \begin{pmatrix}
\vs_1, \dots, \vs_\solveridx \end{pmatrix} \in \R^{\numtraindata\numclasses \times
\solveridx}\) has on \iterncgp, we consider the following examples.

\textbf{Unit Vector Policy $ = $ Subset of Data (SoD).}
Choosing the actions \(\vs_\solveridx = \ve_\solveridx\) to be unit vectors with all
zeros except for a one at entry \(\solveridx\), corresponds to (sequentially)
conditioning on the first \(\solveridx\) \datapoints in the training data in each GP
regression subproblem, since \(\kernelfn(\cdot, \mX)\mS_\solveridx = \kernelfn(\cdot,
\mX_{1:\solveridx})\). Therefore this policy is equivalent to simply using a subset
\(\mX_{1:\solveridx} \in \R^{\solveridx \times \inputdim}\) of the data and performing
\textit{exact} GP regression (\eg via a Cholesky decomposition) in each Newton iteration
in \cref{alg:outer}. This basic policy shows how actions \emph{target computation} as
illustrated in the top row of \cref{fig:policy_choice_mean}.

\textbf{(Conjugate) Gradient Policy.}
Instead of targeting individual \datapoints, we can also specify \textit{weighted}
linear combinations of them to target the data more globally.
E.g.~, using the current residual \(\vs_\solveridx = \vr_{\solveridx-1} =
\pseudotargetsf{\vf_\newtonidx} - \vm - \KWinvf{\vf_\newtonidx}\vv_{\solveridx-1}\),
approximately targets those data most, where the posterior mean prediction is far
off.\footnote{Since \(\vr_{\solveridx-1} \approx \pseudotargetsf{\vf_\newtonidx} -\vm -
\mK \vv_{\solveridx-1} = \pseudotargetsf{\vf_\newtonidx} - m_{\newtonidx,
\solveridx-1}(\mX)\).} As \citet{Wenger2022PosteriorComputational} show, this
corresponds to using conjugate gradients (CG) \citep{Hestenes1952} to estimate the
posterior mean. This policy is illustrated in the bottom row of
\cref{fig:policy_choice_mean}.

\begin{figure}[tb!]
  \includegraphics[width=0.99\linewidth]{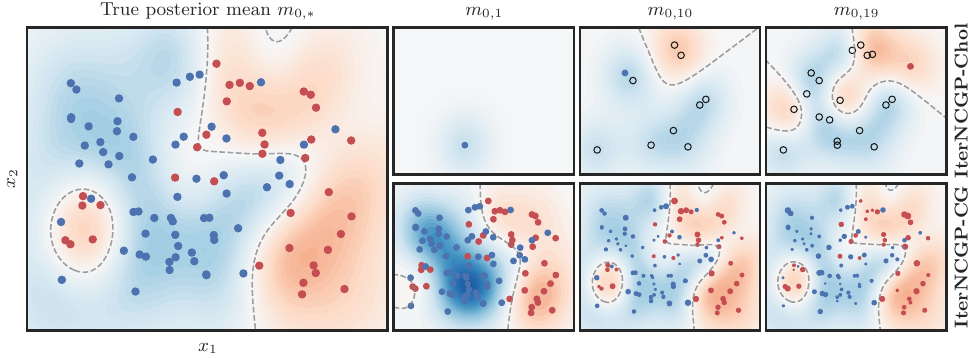}
  \caption{\textbf{Different  \iterncgp Policies Applied to GP Classification.}
  \emph{(Left)} The true posterior mean $m_{0,*}$
  (\,\colorgradientbox[MPLblue_shaded]{MPLred_shaded}\,) for binary classification
  (\colordot{PlotBlue}/\colordot{PlotRed}) and its decision boundary
  (\dashedcolorline{gray}). \emph{(Right)} Current posterior mean estimate after $1,
  10,$ and $19$ iterations with the unit vector policy (\emph{Top}) and the CG policy
  (\emph{Bottom}). Shown are the \datapoints selected by the policy in this iteration
  with the dot size indicating their relative weight. For \iterncgp-Chol, \datapoints
  are targeted one by one and previously used \datapoints are marked with
  (\colorcircle{black}). Details in \cref{sec:details_binary_classification}}
  \label{fig:policy_choice_mean}
\end{figure}

\subsection{Recycling: Reusing Computations}
\label{sec:recycling}

Using \itergp with a suitable policy for GP inference allows us to solve each GP
regression problem more efficiently. However, for NCGP inference, we must solve
\emph{multiple} regression problems---one per mode-finding step.
\Cref{fig:pseudo_targets_noise} suggests that GP posteriors across steps are highly
similar. Leveraging this observation, we develop a novel approach, designed specifically
for the NCGP setting, that \emph{efficiently recycles costly computations} between outer
loop steps (pseudo code in \cref{alg:virtual_solver_run}).

\begin{algorithm*}[tb!]
  \caption{\textbf{Recycling: Virtual Solver Run with Optional
  Compression.}\label{alg:virtual_solver_run}}
  \small
  \algrenewcommand{\algorithmiccomment}[3]
{\hfill
    {
        \small \textcolor{gray}{#1 \hspace{0.3em}}
        \makebox[9em][l]{#2}   \makebox[4.5em][l]{#3}  }
}

\textbf{Input:} Buffers $\mS, \mT \in \R^{\numtraindata\numclasses \times \buffersize}$, access to products with $\mW^{-1}$,
compression parameter $\bufferlimit \leq \buffersize$ (optional)\\
\textbf{Output:} $\mC_0$, updated buffers $\mS, \mT$
\begin{algorithmic}[1]
    \Procedure{\textsc{VirtualSolverRun}}{$\mS, \mT, \mW^{-1}$}
    \Comment{}{\textbf{Time}}{\textbf{Memory}}
    \State $\mM \gets \mS^\top (\mT + \mW^{-1} \mS)$
    \Comment{$\mM\!=\!\mS^\top\!(\mK + \mW^{-1}) \mS \in \R^{\buffersize \times \buffersize}$}
    {$\bigO{\buffersize\timeWinv\!+\!\buffersize^2\!\numtraindata\numclasses}$}
    {$\bigO{\buffersize^2}$}
    \State $\mU, \mLambda \gets$ \textsc{ED}($\mM$),
    \Comment{Eigendecomposition $\mM = \mU \mLambda \mU^\top$}
    {$\bigO{\buffersize^3}$}
    {$\bigO{\buffersize^2}$}
    \Statex \hskip\algorithmicindent
    $
    \mU\!=\!(\vu_1,\!\dots,\!\vu_\buffersize),
    \mLambda\!=\!\diag(\lambda_1,\!\dots,\!\lambda_\buffersize)\!
    \in\!\R^{\buffersize\!\times\!\buffersize},
    \lambda_1\!\geq\!\dots\!\geq\!\lambda_\buffersize$
    \vspace{1ex}
    \Procedure{\textsc{Compression}}{$\mU, \mLambda, \bufferlimit$}
    \Comment{}{}{}
    \State $\mU \gets (\vu_1, \dots, \vu_\bufferlimit)$,
    $\mLambda \gets \diag(\lambda_1, \dots, \lambda_\bufferlimit)$
    \Comment{Truncation to $\bufferlimit$ eigenpairs}
    {}
    {$\bigO{\buffersize\bufferlimit}$}
    \EndProcedure
    \State $\mS \gets \mS \mU$, $\mT \gets \mT \mU$
    \Comment{Update buffers}
    {$\bigO{\buffersize\bufferlimit\numtraindata\numclasses}$}
    {$\bigO{\bufferlimit\numtraindata\numclasses}$}
    \State $\matrixroot_0 \gets \mS \mLambda^{-1/2}$
    \Comment{Construct root $\mC_0 = \matrixroot_0 \matrixroot_0^\top = \mS \mLambda^{-1} \mS^\top$}
    {$\bigO{\bufferlimit^2\numtraindata\numclasses}$}
    {$\bigO{\bufferlimit\numtraindata\numclasses}$}
    \State \Return $\mC_0 \gets \matrixroot_0 \matrixroot_0^\top$ and $\mS, \mT$
    \Comment{$\mC_0$ has rank $\bufferlimit$}
    {}
    {}
    \EndProcedure
\end{algorithmic}
$\mC_0$ is \textit{never} formed explicitly in memory but evaluated lazily via its root $\matrixroot_0$, \ie $\vw \mapsto \mC_0 \vw = \matrixroot_0 (\matrixroot_0^\top \vw)$.
 \end{algorithm*}

The cost of \iterncgp is dominated by repeated matrix-vector products with $\mK$ (see
\cref{sec:cost_analysis}). However, these costly operations can be recycled and used
over multiple Newton steps: Consider the matrix-vector products with an action vector
$\action$ in the first and second mode-finding step as an example:
\begin{align*}
  \qquad\qquad\qquad\qquad\qquad
  &\text{Step $\newtonidx = 0$:}
  &
  \action
  &\mapsto
  \KWinvf{\vf_0} \action
  = \mK \action + \mW(\vf_0)^{-1} \action \\[-1ex]
  &\text{Step $\newtonidx = 1$:}
  &
  \action
  &\mapsto
  \KWinvf{\vf_1} \action
  = \mK \action + \mW(\vf_1)^{-1} \action.
  \qquad\qquad\qquad\qquad\qquad
\end{align*}
Since $\mK$ is independent of $\vf_\newtonidx$, the product $\mK \action$ is
\textit{shared} among both operations.

\textbf{Virtual Solver Run.} Assume we have used $\buffersize$ action vectors
$(\action_1, \dots, \action_\buffersize) \eqdef \mS \in \R^{\numtraindata\numclasses
\times \buffersize}$ in step $\newtonidx = 0$, and buffered the matrix-vector products
$(\mK\action_1, \dots, \mK\action_\buffersize) = \mK \mS \eqdef \mT$. In the next Newton
step $\newtonidx = 1$ we apply the \textit{same} actions to a \textit{new} linear system
of equations. From \cref{eq:C_j} we obtain
\begin{equation}
  \mC
  =
  \mS \mM^{-1} \mS^\top
  \;\; \text{with} \;\;
  \mM
  \defeq
  \mS^\top (\mK\mS + \mW (\vf_1)^{-1}\mS)
  =
  \mS^\top (\mT + \mW (\vf_1)^{-1}\mS).
  \label{eq:recycling}
\end{equation}
So, we can \textit{imitate} a solver run with the previous actions $\mS$ and construct
$\mC$ without ever having to multiply with $\mK$. The associated computational costs
comprise memory for the two buffers $\mS$, $\mT$ as well as the runtime costs for
matrix-matrix products in \cref{eq:recycling} and inverting $\mM$. This virtual solver
run is generally orders of magnitude cheaper than running the solver from scratch with
new actions (details in \cref{sec:details_cost_analysis}).
Within \itergp (\cref{alg:inner}), we can use this matrix as an initial estimate
$\mC_0 \gets \mC$ of the precision matrix. Subsequently, the algorithm can proceed as
usual with new actions.

The presented recycling approach can easily be extended to all Newton steps. Whenever
$\mK$ is multiplied with an action vector, the vector itself and the resulting vector
are appended to the respective buffers $\mS$ and $\mT$. For each Newton step, an initial
$\mC_0$ can be constructed via \cref{alg:virtual_solver_run}.

\textbf{Numerical Perspective.}
Crucially, the above strategy does not affect the solver's convergence properties: From
a numerical linear algebra viewpoint, the strategy above is a form of \emph{subspace
recycling} \citep{Parks2006RecyclingKrylov}. Specifically, $\mC_0$, as described above,
defines a \textit{deflation preconditioner} \citep{Frank2001ConstructionDeflationbased}:
The projection of the initial residual $\vr_0 = (\pseudotargets - \vm) - \KWinv \vv_0$
for the first iterate $\vv_0 = \mC_0 (\pseudotargets - \vm)$ onto the subspace
$\vspan\{\mS\}$ spanned by the actions is zero (see
\cref{sec:details_virtual_solver_run} for details). That means, the solution within the
subspace $\vspan\{\mS\}$ is already perfectly identified at initialization.

\textbf{Probabilistic Perspective.}
Via \cref{eq:inference_PLS_iteration_covar}, we can quantify the effect of $\mC_0$ on
the total marginal uncertainty of predictions at the training data
$\Tr(\kernelfn_{\newtonidx,0}(\mX, \mX)) = \Tr(\mK) - \Tr(\mK \mC_0 \mK)$. Assuming
observation noise $\mW^{-1} = \mzero$ and all actions in $\mS$ eigenvectors of $\KWinv =
\mK$, it simplifies to
\begin{equation}
  \Tr(\kernelfn_{\newtonidx,0}(\mX, \mX))
  =
  \Tr(\mK) - \Tr(\mM),
  \label{eq:total_marginal_variance_special_case}
\end{equation}
see \cref{sec:details_virtual_solver_run}.
The second term $\Tr(\mM)$ describes the \textit{reduction} of the prior uncertainty due
to $\mC_0$. It can be maximized (which is our goal) when $\mS$ contains those
eigenvectors of $\KWinv$ with the largest eigenvalues. We take this insight as
motivation for a buffer compression approach that we describe next.

\subsection{Compression: Memory-Efficient Beliefs}
\label{sec:compression}

Whenever $\mK$ is applied to an action vector, the buffers $\mS, \mT \in
\R^{\numtraindata\numclasses \times \buffersize}$ grow by \(\numtraindata\numclasses\)
entries. To limit memory requirements for large-scale data, we propose a compression
strategy (see \cref{alg:virtual_solver_run}).

\textbf{Compression via Truncation.} In \cref{alg:virtual_solver_run}, $\mM^{-1} \in
\R^{\buffersize \times \buffersize}$ is computed via an eigendecomposition $\mM = \mU
\mLambda \mU$, such that $\mC_0 = \matrixroot_0 \matrixroot_0^\top$ can be represented
via its matrix root $\matrixroot_0 \defeq \mS\mU\mLambda^{-1/2}$ for efficient storage
and matrix-vector multiplies. To limit memory usage, we can use a \textit{truncated}
eigendecomposition of $\mM$. Based on the intuition we gained from
\cref{eq:total_marginal_variance_special_case}, it makes sense to keep the
\textit{largest} eigenvalues (to maximize the trace) and corresponding eigenvectors.
Keeping the $\bufferlimit$ largest eigenvalues/-vectors yields a rank $\bufferlimit$
approximation $\tilde{\mM} = \tilde{\mU}\tilde{\mLambda}\tilde{\mU}$ of $\mM$.

\textbf{Compression as Re-Weighting Actions.} Forming $\mC_0 = \mS \tilde{\mM}^{-1}
\mS^\top$ from the above approximation is equivalent to a virtual solver run with the
modified buffers $\tilde{\mS} = \mS \mU \in \R^{\numtraindata\numclasses \times
\bufferlimit}$, $\tilde{\mT} = \mK (\mS \mU) = \mT \mU \in \R^{\numtraindata\numclasses
\times \bufferlimit}$ in \cref{eq:recycling}. This shows that the truncated
eigendecomposition effectively re-weights the previous $\buffersize$ actions to form
$\bufferlimit$ new ones---and the weights are the eigenvectors from $\mM$ that maximize
the uncertainty reduction. The limit $\bufferlimit$ on the buffer size controls the
\emph{memory usage} as well as the rank of $\mC_0$ and thereby the
\textit{expressiveness} of the associated belief (see \cref{fig:virtual_solver_run}).

\begin{figure}[tb!]
  \begin{minipage}[t]{0.5\textwidth}
    \vspace{0ex}
    \centering
    \includegraphics[width=\linewidth]{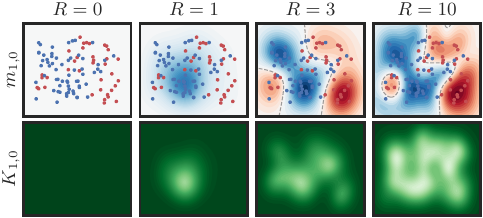}
  \end{minipage}
  \hfill
  \begin{minipage}[t]{0.47\textwidth}
    \vspace{0ex}
    \caption{\textbf{Compressed Beliefs.}
    Recycled initial beliefs in the \textit{second} Newton step ($\newtonidx  =  1$)
    with means $m_{1, 0}$ \emph{(Top)} and (co-)variance functions $\kernelfn_{1, 0}$
    \emph{(Bottom)} using compression with different buffer sizes $\bufferlimit \in \{0,
    1, 3, 10\}$. Buffer size \(\bufferlimit=0\) is equivalent to not using recycling.
    The larger the buffer size/rank of $\mC_0$, the more expressive the belief. Details
    in \cref{sec:details_binary_classification}.}
  \label{fig:virtual_solver_run}
  \end{minipage}
\end{figure}

\subsection{Cost Analysis of \iterncgp}
\label{sec:cost_analysis}

\iterncgp's total runtime is dominated by the repeated application of $\mK$ in
\cref{alg:inner}, \ie $\bigO{\totalsolveriters \timeK}$, with $\totalsolveriters$
describing the \textit{total} number of solver iterations over \textit{all} Newton
steps. $\timeK$ denotes the cost of a single matrix-vector product with $\mK$.
Typically, $\timeK$ is quadratic in the number of training \datapoints.
In terms of memory, the buffers $\mS$, $\mT$ and the matrix root $\matrixroot$ are the
decisive factors with $\bigO{\buffersize\numtraindata\numclasses}$. Without compression,
their final size is $\buffersize = \totalsolveriters$. Otherwise, their maximum size
is given by the sum of the rank bound $\bufferlimit$ and the maximum solver iterations
in \cref{alg:inner} (\cref{sec:details_cost_analysis} provides an in-depth discussion of
runtime and memory costs).

\section{Related Work}
\label{sec:related_works}

The Laplace approximation \citep{Spiegelhalter1990SequentialUpdating,
MacKay1992EvidenceFramework, Bishop2006, Rue2009ApproximateBayesian} is commonly used
for approximate inference in (Bayesian) Generalized Linear Models. Here, we consider the
function-space generalization of Bayesian Generalized Linear Models, namely
non-conjugate GPs, for which a multitude of approximate methods have been proposed,
arguably the most popular being variational approaches
\citep[\eg][]{Khan2012FastBayesian}, such as \svgp \citep{Titsias2009,
Hensman2015ScalableVariational}. In contrast, to address the computational shortcomings
of NCGPs on large \datasets, we leverage iterative methods to obtain and efficiently
update low-rank approximations. Similar approaches were used previously to accelerate
the conjugate Gaussian special case \citep{Cunningham2008, Murray2009,
Guahniyogi2015BayesianCompressed, Gardner2018GPyTorchBlackbox, Wang2019,
Wenger2022PreconditioningScalable}, binary classification
\citep{Zhang2014RandomProjections} and general Bayesian linear inverse problems
\citep{Spantini2015OptimalLowrank}. \citet{Trippe2019LRGLMHighDimensional} is closest in
spirit to our approach if viewed from a weight-space perspective. Their choice of
low-rank projection corresponds to a specific policy in our framework. Our approach not
only enables the use of policies that are more suited to the given link function, but
also saves additional computation, as well as memory, via recycling and compression. In
each Newton iteration, the posterior for the current regression problem is approximated
via \itergp \citep{Wenger2022PosteriorComputational}, which internally uses a
probabilistic linear solver \citep{Hennig2015,Cockayne2019,Wenger2020}. Therefore,
\iterncgp is a probabilistic numerical method \citep{Hennig2015a, Cockayne2019a,
Oates2019, Hennig2022}: It quantifies uncertainty arising from limited computation.

\section{Experiments}
\label{sec:experiments}

We apply \iterncgp to a Poisson regression problem to explore the trade-off between the
number of (outer loop) mode-finding steps and (inner loop) solver iterations
(\cref{sec:exp_poisson_regression}). In \Cref{sec:exp_large_scale_gpc}, we demonstrate
our algorithm's scalability and the impact of compression on performance.

\subsection{Poisson Regression}
\label{sec:exp_poisson_regression}

Consider count data $\vy \in \N_0^\numtraindata$ generated from a Poisson likelihood
with unknown rate $\lambda \colon \inputspace \to \R_+$. The log rate $f$ is modeled by
a GP. See \cref{sec:details_poisson_regression} for details.

\textbf{Data \& Model.} We generate a synthetic \dataset by (i) sampling the log rate
from a GP with an RBF kernel (ii) transforming it into the latent rate $\lambda$ by
exponentiation, and (iii) sampling counts $y_n \in \N_0$ from the Poisson distribution
with rate $\lambda(\vx_n)$. The functions $f$, $\lambda$, and the resulting count data
are shown in \cref{fig:poisson_regression} \emph{(Right)}. Our model uses the same RBF
prior GP to avoid model mismatch.

\textbf{Newton Steps vs.~Solver Iterations.}
From a practical standpoint, the performance achievable within a given budget of solver
iterations is highly relevant: How many linear solver iterations should be performed for
each regression problem before updating the problem to maximize performance? To
investigate this, we use \iterncgp-CG and distribute $100$ iterations uniformly over
$\{5, 10, 20, 100\}$ outer loop steps. Each run uses recycling without compression and
is repeated $10$ times.

\begin{figure}[tb!]
  \includegraphics[width=0.99\linewidth]{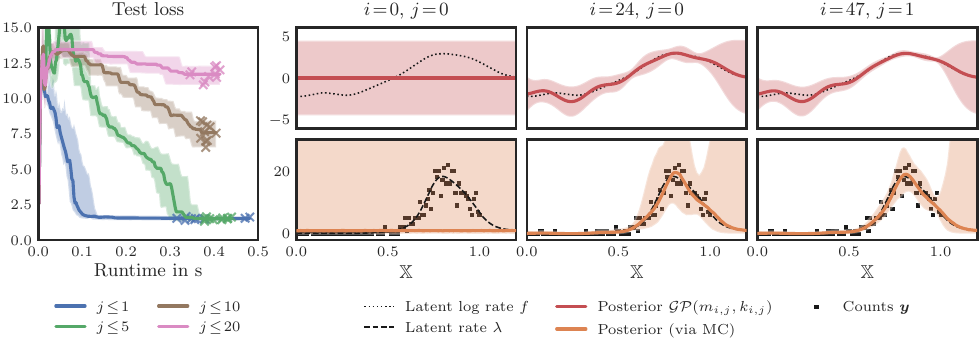}
  \vspace{-0.5em}
  \caption{\textbf{Poisson Regression with \iterncgp.}
    \emph{(Left)} Test loss performance for \iterncgp-CG with recycling and four
    schedules ($\solveridx  \leq  1, 5, 10$ or $20$) over $100, 20, 10$ or $5$ steps
    (always using the same total budget of $100$ iterations). For each schedule, the
    median (solid line) and min/max (shaded area) over $10$ runs are reported. The
    crosses indicate the end of each run.
    \emph{(Right)} Posterior
    $\gp{m_{\newtonidx,\solveridx}}{k_{\newtonidx,\solveridx}}$ for the latent log
    rate $f$ \emph{(Top)} and the corresponding belief about the rate $\lambda$
    \emph{(Bottom)} computed via MC at three timepoints during a run of \iterncgp.
    The shaded 95\% credible intervals show how stopping early trades less
    computation for increased uncertainty. Details in
    \cref{sec:details_poisson_regression}.}
  \label{fig:poisson_regression}
  \vspace{-1em}
\end{figure}

\textbf{Results.} \cref{fig:poisson_regression} \emph{(Left)} indicates that the
strategy with a single iteration per step is the most efficient. An explanation might be
that there is no reason to spend compute on an ``outdated'' regression problem that
could be updated instead. Of course, this only applies if recycling is used, such that
the \textit{effective} number of actions accumulates.
As long as $\buffersize \ll \numtraindata$, the cost due to repeated recycling
($\bigO{\numtraindata}$) is dwarfed by the cost of products with $\mK$
($\bigO{\numtraindata^2}$).
\cref{fig:poisson_regression} \emph{(Right)} shows an \iterncgp-CG run with one
iteration per step. As we spend more computational resources, our estimates approach the
underlying latent function and, where data is available, the uncertainty contracts.

\subsection{Large-Scale GP Multi-Class Classification}
\label{sec:exp_large_scale_gpc}

Here, we showcase \iterncgp's scalability. See \cref{sec:details_large_scale_gpc} for
details.

\textbf{Data \& Model.} We generate $\numtraindata =  10^5$ \datapoints from a Gaussian
mixture model with $\numclasses = 10$ classes.
We use the softmax likelihood and assume independent GPs (each equipped with a
\matern{3} kernel) for the $\numclasses$ outputs of the latent function.
While this experiment uses synthetic data, the latent function is \textit{not} drawn
from the assumed GP model and thus the kernel is \textit{not} perfectly identified.
Also note that, if we formed $\smash{\KWinv}$ in (working) memory explicitly, this would
require $(\numtraindata\numclasses)^2 \cdot\,8~\text{byte} = 8000~\text{GB}$ (in double
precision). Solving the linear systems \textit{precisely}, \eg via Cholesky
decomposition, is therefore infeasible, whereas our family of methods is matrix-free and
can still be applied.

\textbf{Methods.}
We compare the subset of data (SoD) approach from \cref{sec:policy}, the popular
variational approximation \svgp \citep{Titsias2009,Hensman2015ScalableVariational} and
our \iterncgp-CG. For SoD, we materialize $\smash{\KWinv}$ in memory and compute its
Cholesky decomposition. Four different subset sizes are used---the largest one
$\numtraindata_\text{sub} = 2000$ requires $3.2~\text{GB}$ of memory for
$\smash{\KWinv}$. For \svgp, we use the implementation provided by \gpytorch{}
\citep{Gardner2018GPyTorchBlackbox}. We optimize the ELBO for $10^4~\text{seconds}$
using \adam with batch size $1024$ and determine suitable hyperparameters via grid
search over the learning rate $\alpha$ and the number of inducing points $U$. Only the
best three settings are included in our final benchmark. \iterncgp-CG is applied to the
\textit{full} training set with recycling and $\bufferlimit \in \{\infty, 10\}$. The
number of solver iterations is limited by $\solveridx \leq 5$. We use \keops
\citep{Charlier2021} and \gpytorch for fast kernel-matrix multiplies. For this work, we
consider kernel hyperparameter optimization out of scope---all methods therefore use the
same fixed hyperparameters. The benchmark is run on an \textsc{NVIDIA A100 GPU}.

\begin{figure}[tb!]
  \includegraphics[width=0.99\linewidth]{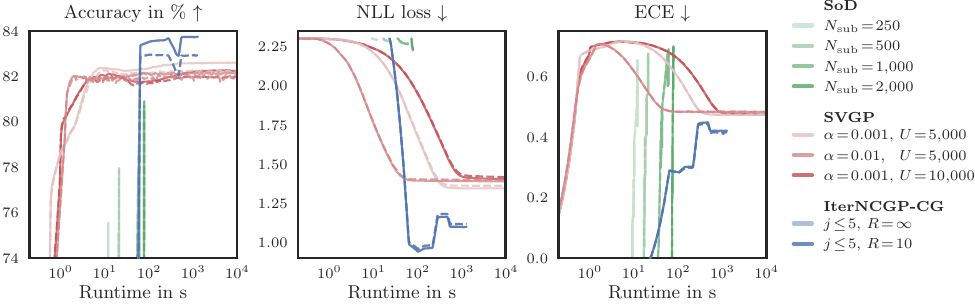}
\caption{\textbf{Large-Scale GP Classification.}
    Comparison of SoD,
    \svgp with learning rate $\alpha \in \{0.001, 0.01, 0.05\}$ and $U \in \{1000,
    2500, 5000, 10000\}$ inducing points (showing only the best three runs) and
    \iterncgp-CG with $\bufferlimit \in \{\infty, 10\}$ on a classification problem
    with \(\numtraindata = 10^5\) training points and \(\numclasses = 10\) classes
    in terms of accuracy \emph{(Left)}, NLL loss \emph{(Center)} and ECE
    \emph{(Right)}. Performance metrics are averaged over five runs and are shown as
    solid (training set) or dashed (test set) lines.
    \iterncgp-CG performs best in \textit{all} three performance metrics, with
    minimal memory requirements. The two variants with $\bufferlimit \in \{\infty,
    10\}$ are visually indistinguishable, \ie the performance is not affected by
    compression.
    Details in \cref{sec:details_large_scale_gpc}.}
  \label{fig:acc_loss_ece_over_runtimes}
  \vspace{-1ex}
\end{figure}

\begin{figure}[tbh!]
  \centering
  \includegraphics[width=0.99\linewidth]{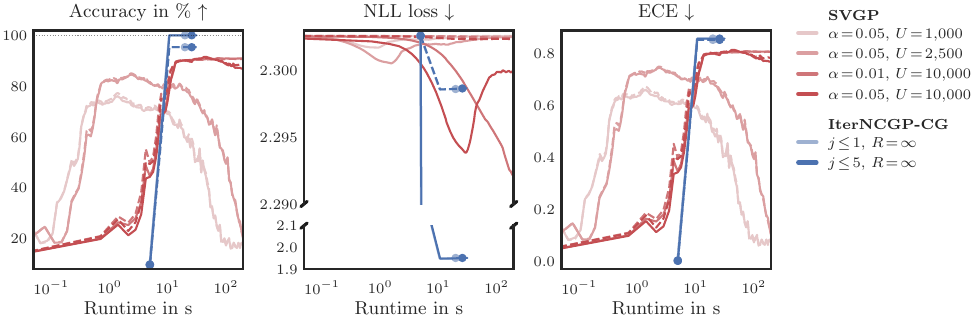}
  \caption{\textbf{GP Classification on \mnist.}
    Comparison of \svgp with learning rate $\alpha \in \{0.001, 0.01, 0.05\}$ and
    $U \in \{1000, 2500, 5000, 10000\}$ inducing points (showing only the best
    four runs) and \iterncgp-CG on a classification problem with \(\numtraindata  =
    \num[group-separator={,}]{20000}\) training points and \(\numclasses =  10\)
    classes in terms of accuracy \emph{(Left)}, NLL loss \emph{(Center)} and ECE
    \emph{(Right)}. Performance metrics are shown as solid (training set) or dashed
    (test set) lines. For \iterncgp, the dots mark the start of a new outer-loop
    iteration.
    \iterncgp-CG performs best in terms of accuracy and NLL loss but slightly worse
    in terms of ECE.
    Details in \cref{sec:details_gpc_mnist}.
    }
  \label{fig:mnist_results}
\end{figure}

\textbf{Results.}
\Cref{fig:acc_loss_ece_over_runtimes} shows the average performance of each method over
five runs that use different random seeds. Once the matrix is formed in memory, the SoD
approaches are very fast---even with $\numtraindata_\text{sub}  = 2000$, they converge
within $\SI{100}{\second}$ (all SoD runs require only two Newton steps). With increasing
$\numtraindata_\text{sub}$, the runs reach higher accuracy at the cost of increased
memory requirements. To achieve top performance, \svgp requires a large number of
inducing points (at the cost of slower training). Increasing the learning rate to
compensate results in instabilities---these runs do not exhibit competitive performance.
Both SoD and \svgp fall short of \iterncgp-CG in all three performance metrics. Using
recycling, \iterncgp-CG maintains low loss/high accuracy throughout training, even when
compression is used. It reaches the lowest final negative log-likelihood (NLL) and
expected calibration error (ECE) demonstrating better uncertainty quantification. It is
more memory-efficient than SoD (especially with compression), and, in contrast to \svgp,
does not require extensive tuning.

\textbf{Extension to \mnist.} To demonstrate \iterncgp's applicability to real-world
data, we perform a similar experiment on \mnist \citep{Lecun1998Gradient}, see
\cref{sec:details_gpc_mnist} for details. As \keops scales poorly with the data
dimension ($\inputdim = 28^2 = 784$ for \mnist), we revert to \gpytorch{}'s standard
kernel implementation, which requires more memory. We thus limit the training data to
$\numtraindata_{\text{sub}} = \num[group-separator={,}]{20000}$ images. The results
(\Cref{fig:mnist_results}) are mostly aligned with
\cref{fig:acc_loss_ece_over_runtimes}: \iterncgp-CG outperforms the well-tuned \svgp
baselines in terms of accuracy and NLL loss. Only the ECE of \iterncgp is slightly worse
than for SVGP. This is easily explained, by the fact that one can achieve smaller ECE by
accepting lower accuracy---the canonical example being a random baseline, which is
perfectly calibrated.

\section{Conclusion}
\label{sec:conclusion}

Non-conjugate Gaussian processes (NCGPs) provide a flexible probabilistic framework
encompassing, among others, GP classification and Poisson regression. Training NCGPs on
large datasets, however, necessitates approximations. Our method \iterncgp quantifies
and continuously propagates the errors caused by these approximations, in the form of
uncertainty. The information collected during training is efficiently recycled and
compressed, reducing runtime and memory requirements.

\textbf{Limitations.} A limitation of our method is directly inherited from the Laplace
approximation: The inherent error in approximating the posterior with a Gaussian is not
captured and generally depends on the choice of likelihood. However, under mild
regularity conditions, the (relative) error of the Laplace approximation scales
inversely proportional to the number of data
\citep{kass1990validity,bilodeau2022tightness}. Since we are considering the large data
regime in this work one might reasonably expect the error contribution of the Laplace
approximation itself to be small, relative to the error contribution from approximating
the MAP via Newton's method, equivalently the error contribution of approximate GP
regression.

Another limitation is the lack of a ready-to-use approach for kernel hyperparameter
estimation. However, our method is entirely composed of basic linear algebra operations
(see \cref{alg:inner,alg:outer,alg:virtual_solver_run}). Therefore one can in principle
simply differentiate with respect to the kernel hyperparameters through the entire
solve. Recent work by \citet{Wenger2024ComputationAwareGaussian} proposes an ELBO
objective to perform model selection for \itergp, which one could therefore readily
apply in our setting. We leave the open question on how to optimally choose the number
of Newton and \itergp steps per hyperparameter optimization step for future work.

\textbf{Future Work.}
So far, we have only explored the policy design space in a limited fashion. The policy
controls which areas of the data space are targeted and accounted for in the posterior.
Tailoring the actions to the \textit{specific} problem could further increase our
method's efficiency. For classification problems, a good strategy might be not to spend
compute on \datapoints where the prediction is already definitive.

Finally, a promising application for \iterncgp may be Bayesian deep learning. A popular
approach to equip a neural net with uncertainty is via a Laplace approximation
\citep{MacKay1991BayesianModel,Ritter2018ScalableLaplace,Khan2019ApproximateInference},
which is equivalent to a GP classification problem with a neural tangent kernel prior
\citep{Jacot2018NeuralTangent,Immer2021}. There, the SoD approach is regularly used
\citep[Sec.~A2.2]{Immer2021}, for which our approach might offer significant
improvements.

\section*{Acknowledgements}
The authors gratefully acknowledge co-funding by the European Union (ERC, ANUBIS,
101123955). Views and opinions expressed are however those of the author(s) only and do
not necessarily reflect those of the European Union or the European Research Council.
Neither the European Union nor the granting authority can be held responsible for them.
Philipp Hennig is a member of the Machine Learning Cluster of Excellence, funded by the
Deutsche Forschungsgemeinschaft (DFG, German Research Foundation) under Germany's
Excellence Strategy - EXC number 2064/1 - Project number 390727645; The authors further
gratefully acknowledge financial support by the DFG through Project HE 7114/5-1 in
SPP2298/1; the German Federal Ministry of Education and Research (BMBF) through the
Tübingen AI Center (FKZ:01IS18039A); and funds from the Ministry of Science, Research
and Arts of the State of Baden-Württemberg. Frank Schneider is supported by funds from
the Cyber Valley Research Fund. Lukas Tatzel is grateful to the International Max Planck
Research School for Intelligent Systems (IMPRS-IS) for support. Jonathan Wenger was
supported by the Gatsby Charitable Foundation (GAT3708), the Simons Foundation (542963),
the NSF AI Institute for Artificial and Natural Intelligence (ARNI: NSF DBI 2229929) and
the Kavli Foundation.

\bibliographystyle{tmlr}
\bibliography{bibliography}

\begin{thebibliography}{44}
\providecommand{\natexlab}[1]{#1}
\providecommand{\url}[1]{\texttt{#1}}
\expandafter\ifx\csname urlstyle\endcsname\relax
  \providecommand{\doi}[1]{doi: #1}\else
  \providecommand{\doi}{doi: \begingroup \urlstyle{rm}\Url}\fi

\bibitem[Bilodeau et~al.(2022)Bilodeau, Tang, and
  Stringer]{bilodeau2022tightness}
Blair Bilodeau, Yanbo Tang, and Alex Stringer.
\newblock On the {Tightness} of the {Laplace} {Approximation} for {Statistical}
  {Inference}, 2022.

\bibitem[Bishop(2006)]{Bishop2006}
Christopher~M. Bishop.
\newblock \emph{Pattern Recognition and Machine Learning}.
\newblock Springer, 2006.

\bibitem[Chan \& Dong(2011)Chan and Dong]{Chan2011Generalized}
Antoni~B. Chan and Daxiang Dong.
\newblock Generalized {G}aussian process models.
\newblock In \emph{Conference on Computer Vision and Pattern Recognition
  (CVPR)}, 2011.

\bibitem[Charlier et~al.(2021)Charlier, Feydy, Glaunès, Collin, and
  Durif]{Charlier2021}
Benjamin Charlier, Jean Feydy, Joan~Alexis Glaunès, François-David Collin,
  and Ghislain Durif.
\newblock Kernel operations on the {GPU}, with autodiff, without memory
  overflows.
\newblock \emph{Journal of Machine Learning Research}, 2021.

\bibitem[Cockayne et~al.(2019{\natexlab{a}})Cockayne, Oates, Ipsen, and
  Girolami]{Cockayne2019}
Jon Cockayne, Chris Oates, Ilse~C. Ipsen, and Mark Girolami.
\newblock A {B}ayesian conjugate gradient method.
\newblock \emph{Bayesian Analysis}, 2019{\natexlab{a}}.

\bibitem[Cockayne et~al.(2019{\natexlab{b}})Cockayne, Oates, Sullivan, and
  Girolami]{Cockayne2019a}
Jon Cockayne, Chris Oates, Tim~J. Sullivan, and Mark Girolami.
\newblock Bayesian probabilistic numerical methods.
\newblock \emph{SIAM Review}, 2019{\natexlab{b}}.

\bibitem[Cunningham et~al.(2008)Cunningham, Shenoy, and Sahani]{Cunningham2008}
John~P Cunningham, Krishna~V Shenoy, and Maneesh Sahani.
\newblock Fast {G}aussian process methods for point process intensity
  estimation.
\newblock In \emph{International Conference on Machine Learning (ICML)}, 2008.

\bibitem[Detlefsen et~al.(2022)Detlefsen, Borovec, Schock, Jha, Koker,
  Di~Liello, Stancl, Quan, Grechkin, and Falcon]{detlefsen2022torchmetrics}
Nicki~Skafte Detlefsen, Jiri Borovec, Justus Schock, Ananya~Harsh Jha, Teddy
  Koker, Luca Di~Liello, Daniel Stancl, Changsheng Quan, Maxim Grechkin, and
  William Falcon.
\newblock {TorchMetrics} - measuring reproducibility in {PyTorch}.
\newblock \emph{Journal of Open Source Software}, 2022.

\bibitem[Frank \& Vuik(2001)Frank and
  Vuik]{Frank2001ConstructionDeflationbased}
Jason Frank and Cornelis Vuik.
\newblock On the construction of deflation-based preconditioners.
\newblock \emph{SIAM Journal on Scientific Computing}, 2001.

\bibitem[Gardner et~al.(2018)Gardner, Pleiss, Bindel, Weinberger, and
  Wilson]{Gardner2018GPyTorchBlackbox}
Jacob~R Gardner, Geoff Pleiss, David Bindel, Kilian~Q Weinberger, and
  Andrew~Gordon Wilson.
\newblock {GPyTorch}: Blackbox matrix-matrix {G}aussian process inference with
  {GPU} acceleration.
\newblock In \emph{Advances in Neural Information Processing Systems
  (NeurIPS)}, 2018.

\bibitem[Guhaniyogi \& Dunson(2015)Guhaniyogi and
  Dunson]{Guahniyogi2015BayesianCompressed}
Rajarshi Guhaniyogi and David~B. Dunson.
\newblock Bayesian compressed regression.
\newblock \emph{Journal of the American Statistical Association}, 2015.

\bibitem[Hennig(2015)]{Hennig2015}
Philipp Hennig.
\newblock Probabilistic interpretation of linear solvers.
\newblock \emph{SIAM Journal on Optimization}, 2015.

\bibitem[Hennig et~al.(2015)Hennig, Osborne, and Girolami]{Hennig2015a}
Philipp Hennig, Mike~A. Osborne, and Mark Girolami.
\newblock Probabilistic numerics and uncertainty in computations.
\newblock \emph{Proceedings of the Royal Society of London A: Mathematical,
  Physical and Engineering Sciences}, 2015.

\bibitem[Hennig et~al.(2022)Hennig, Osborne, and Kersting]{Hennig2022}
Philipp Hennig, Michael~A. Osborne, and Hans~P. Kersting.
\newblock \emph{{P}robabilistic {N}umerics: Computation as Machine Learning}.
\newblock Cambridge University Press, 2022.

\bibitem[Hensman et~al.(2015)Hensman, Matthews, and
  Ghahramani]{Hensman2015ScalableVariational}
James Hensman, Alexander Matthews, and Zoubin Ghahramani.
\newblock Scalable {Variational} {Gaussian} {Process} {Classification}.
\newblock In \emph{International Conference on Artificial Intelligence and
  Statistics (AISTATS)}, 2015.

\bibitem[Hestenes \& Stiefel(1952)Hestenes and Stiefel]{Hestenes1952}
Magnus~Rudolph Hestenes and Eduard Stiefel.
\newblock Methods of conjugate gradients for solving linear systems.
\newblock \emph{Journal of Research of the National Bureau of Standards}, 1952.

\bibitem[Immer et~al.(2021)Immer, Korzepa, and Bauer]{Immer2021}
Alexander Immer, Maciej Korzepa, and Matthias Bauer.
\newblock Improving predictions of {Bayesian} neural nets via local
  linearization.
\newblock In \emph{International Conference on Artificial Intelligence and
  Statistics (AISTATS)}, 2021.

\bibitem[Jacot et~al.(2018)Jacot, Gabriel, and Hongler]{Jacot2018NeuralTangent}
Arthur Jacot, Franck Gabriel, and Clément Hongler.
\newblock Neural tangent kernel: Convergence and generalization in neural
  networks.
\newblock In \emph{Advances in Neural Information Processing Systems
  (NeurIPS)}, 2018.

\bibitem[Kass et~al.(1990)Kass, Tierney, and Kadane]{kass1990validity}
Robert~E. Kass, Luke Tierney, and Joseph~B. Kadane.
\newblock The validity of posterior expansions based on {Laplace}'s method.
\newblock \emph{Bayesian and Likelihood Methods in Statistics and
  Econometrics}, 1990.

\bibitem[Khan et~al.(2012)Khan, Mohamed, and Murphy]{Khan2012FastBayesian}
Emtiyaz Khan, Shakir Mohamed, and Kevin~P Murphy.
\newblock Fast {Bayesian} {Inference} for {Non}-{Conjugate} {Gaussian}
  {Process} {Regression}.
\newblock In \emph{Advances in Neural Information Processing Systems
  (NeurIPS)}, 2012.

\bibitem[Khan et~al.(2019)Khan, Immer, Abedi, and
  Korzepa]{Khan2019ApproximateInference}
Mohammad~Emtiyaz Khan, Alexander Immer, Ehsan Abedi, and Maciej Korzepa.
\newblock Approximate inference turns deep networks into gaussian processes.
\newblock In \emph{Advances in Neural Information Processing Systems
  (NeurIPS)}, 2019.

\bibitem[Kumar et~al.(2019)Kumar, Liang, and Ma]{Kumar2019VerifiedCalibration}
Ananya Kumar, Percy~S Liang, and Tengyu Ma.
\newblock Verified uncertainty calibration.
\newblock In \emph{Advances in Neural Information Processing Systems
  (NeurIPS)}, 2019.

\bibitem[Lecun et~al.(1998)Lecun, Bottou, Bengio, and
  Haffner]{Lecun1998Gradient}
Yann Lecun, Leon Bottou, Yoshua Bengio, and Patrick Haffner.
\newblock Gradient-based learning applied to document recognition.
\newblock \emph{Proceedings of the IEEE}, 1998.

\bibitem[{MacKay}(1991)]{MacKay1991BayesianModel}
David {MacKay}.
\newblock Bayesian model comparison and backprop nets.
\newblock In \emph{Advances in Neural Information Processing Systems
  (NeurIPS)}, 1991.

\bibitem[{MacKay}(1992)]{MacKay1992EvidenceFramework}
David J.~C. {MacKay}.
\newblock The evidence framework applied to classification networks.
\newblock \emph{Neural Computation}, 1992.

\bibitem[{MacKay}(1998)]{MacKay1998ChoiceBasis}
David~J.C. {MacKay}.
\newblock Choice of basis for laplace approximation.
\newblock \emph{Machine Learning}, 1998.

\bibitem[Murray(2009)]{Murray2009}
Iain Murray.
\newblock Gaussian processes and fast matrix-vector multiplies.
\newblock In \emph{Numerical Mathematics in Machine Learning Workshop (ICML)},
  2009.

\bibitem[Nelder \& Wedderburn(1972)Nelder and
  Wedderburn]{Nelder1972GeneralizedLinear}
J.~A. Nelder and R.~W.~M. Wedderburn.
\newblock Generalized linear models.
\newblock \emph{Journal of the Royal Statistical Society. Series A (General)},
  1972.

\bibitem[Oates \& Sullivan(2019)Oates and Sullivan]{Oates2019}
Chris Oates and Tim~J. Sullivan.
\newblock A modern retrospective on probabilistic numerics.
\newblock \emph{Statistics and Computing}, 2019.

\bibitem[Parks et~al.(2006)Parks, Sturler, Mackey, Johnson, and
  Maiti]{Parks2006RecyclingKrylov}
Michael~L. Parks, Eric~de Sturler, Greg Mackey, Duane~D. Johnson, and Spandan
  Maiti.
\newblock Recycling krylov subspaces for sequences of linear systems.
\newblock \emph{SIAM Journal on Scientific Computing}, 2006.

\bibitem[Paszke et~al.(2019)Paszke, Gross, Massa, Lerer, Bradbury, Chanan,
  Killeen, Lin, Gimelshein, Antiga, Desmaison, Kopf, Yang, DeVito, Raison,
  Tejani, Chilamkurthy, Steiner, Fang, Bai, and Chintala]{Paszke2019PyTorch}
Adam Paszke, Sam Gross, Francisco Massa, Adam Lerer, James Bradbury, Gregory
  Chanan, Trevor Killeen, Zeming Lin, Natalia Gimelshein, Luca Antiga, Alban
  Desmaison, Andreas Kopf, Edward Yang, Zachary DeVito, Martin Raison, Alykhan
  Tejani, Sasank Chilamkurthy, Benoit Steiner, Lu~Fang, Junjie Bai, and Soumith
  Chintala.
\newblock {PyTorch}: An imperative style, high-performance deep learning
  library.
\newblock In \emph{Advances in Neural Information Processing Systems
  (NeurIPS)}, 2019.

\bibitem[Rasmussen \& Williams(2006)Rasmussen and Williams]{Rasmussen2006}
Carl~Edward Rasmussen and Christopher K.~I. Williams.
\newblock \emph{Gaussian Processes for Machine Learning}.
\newblock MIT Press, 2006.

\bibitem[Ritter et~al.(2018)Ritter, Botev, and
  Barber]{Ritter2018ScalableLaplace}
Hippolyt Ritter, Aleksandar Botev, and David Barber.
\newblock A scalable laplace approximation for neural networks.
\newblock In \emph{International Conference on Learning Representations
  (ICLR)}, 2018.

\bibitem[Rue et~al.(2009)Rue, Martino, and Chopin]{Rue2009ApproximateBayesian}
Håvard Rue, Sara Martino, and Nicolas Chopin.
\newblock Approximate {B}ayesian inference for latent {G}aussian models by
  using integrated nested {L}aplace approximations.
\newblock \emph{Journal of the Royal Statistical Society: Series B (Statistical
  Methodology)}, 2009.

\bibitem[Spantini et~al.(2015)Spantini, Solonen, Cui, Martin, Tenorio, and
  Marzouk]{Spantini2015OptimalLowrank}
Alessio Spantini, Antti Solonen, Tiangang Cui, James Martin, Luis Tenorio, and
  Youssef Marzouk.
\newblock Optimal low-rank approximations of {B}ayesian linear inverse
  problems.
\newblock \emph{SIAM Journal on Scientific Computing}, 2015.

\bibitem[Spiegelhalter \& Lauritzen(1990)Spiegelhalter and
  Lauritzen]{Spiegelhalter1990SequentialUpdating}
David~J. Spiegelhalter and Steffen~L. Lauritzen.
\newblock Sequential updating of conditional probabilities on directed
  graphical structures.
\newblock \emph{Networks}, 1990.

\bibitem[Titsias(2009)]{Titsias2009}
Michalis Titsias.
\newblock Variational learning of inducing variables in sparse {G}aussian
  processes.
\newblock In \emph{International Conference on Artificial Intelligence and
  Statistics (AISTATS)}, 2009.

\bibitem[Trippe et~al.(2019)Trippe, Huggins, Agrawal, and
  Broderick]{Trippe2019LRGLMHighDimensional}
Brian~L. Trippe, Jonathan~H. Huggins, Raj Agrawal, and Tamara Broderick.
\newblock {LR}-{GLM}: High-dimensional {B}ayesian inference using low-rank data
  approximations.
\newblock In \emph{International Conference on Machine Learning (ICML)}, 2019.

\bibitem[Wang et~al.(2019)Wang, Pleiss, Gardner, Tyree, Weinberger, and
  Wilson]{Wang2019}
Ke~Alexander Wang, Geoff Pleiss, Jacob~R Gardner, Stephen Tyree, Kilian~Q
  Weinberger, and Andrew~Gordon Wilson.
\newblock Exact {G}aussian processes on a million data points.
\newblock \emph{Advances in Neural Information Processing Systems (NeurIPS)},
  2019.

\bibitem[Wenger \& Hennig(2020)Wenger and Hennig]{Wenger2020}
Jonathan Wenger and Philipp Hennig.
\newblock Probabilistic linear solvers for machine learning.
\newblock In \emph{Advances in Neural Information Processing Systems
  (NeurIPS)}, 2020.

\bibitem[Wenger et~al.(2022{\natexlab{a}})Wenger, Pleiss, Hennig, Cunningham,
  and Gardner]{Wenger2022PreconditioningScalable}
Jonathan Wenger, Geoff Pleiss, Philipp Hennig, John~P. Cunningham, and Jacob~R.
  Gardner.
\newblock Preconditioning for scalable {G}aussian process hyperparameter
  optimization.
\newblock In \emph{International Conference on Machine Learning (ICML)},
  2022{\natexlab{a}}.

\bibitem[Wenger et~al.(2022{\natexlab{b}})Wenger, Pleiss, Pförtner, Hennig,
  and Cunningham]{Wenger2022PosteriorComputational}
Jonathan Wenger, Geoff Pleiss, Marvin Pförtner, Philipp Hennig, and John~P.
  Cunningham.
\newblock Posterior and computational uncertainty in {G}aussian processes.
\newblock In \emph{Advances in Neural Information Processing Systems
  (NeurIPS)}, 2022{\natexlab{b}}.

\bibitem[Wenger et~al.(2024)Wenger, Wu, Hennig, Gardner, Pleiss, and
  Cunningham]{Wenger2024ComputationAwareGaussian}
Jonathan Wenger, Kaiwen Wu, Philipp Hennig, Jacob~R. Gardner, Geoff Pleiss, and
  John~P. Cunningham.
\newblock Computation-{Aware} {Gaussian} {Processes}: {Model} {Selection} {And}
  {Linear}-{Time} {Inference}.
\newblock In \emph{Advances in Neural Information Processing Systems
  (NeurIPS)}, 2024.

\bibitem[Zhang et~al.(2014)Zhang, Mahdavi, Jin, Yang, and
  Zhu]{Zhang2014RandomProjections}
Lijun Zhang, Mehrdad Mahdavi, Rong Jin, Tianbao Yang, and Shenghuo Zhu.
\newblock Random projections for classification: A recovery approach.
\newblock \emph{IEEE Transactions on Information Theory}, 2014.

\end{thebibliography}

\clearpage

\newpage
\appendix

\begin{center}
    {\Large\bf Supplementary Materials}
\end{center}

The supplementary materials contain derivations for our theoretical framework and proofs
for the mathematical statements in the main text. We also provide implementation
specifics and describe our experimental setup in more detail.

\startcontents[sections]
\printcontents[sections]{l}{1}{\setcounter{tocdepth}{3}}
\vfill
\newpage

\section{Mathematical Details}
\label{sec:math_details}

\subsection{Newton's Method as Sequential GP Regression}
\label{subsec:connection_inference_Newton_steps}

In \cref{sec:derivation_iterglm}, we reinterpret the Newton iteration as a sequence of
GP regression problems. More specifically, we rewrite the posterior predictive mean
(\cref{eq:predictive_mean}) as a GP posterior for a regression problem
(\cref{eq:inference_between_steps_mean_2}). Here, we provide a proof for this
connection.

\begin{proposition}[Reformulation of the Newton Step]
  \label{prop:reformulation_newton_step}
  Let $\mW(\vf_\newtonidx)$ be invertible. Using the transform $\vg \defeq \vf - \vm$
  and consequently $\vg_\newtonidx = \vf_\newtonidx - \vm$, the Newton step
  (\cref{eq:newton_step}) can be written as
  \begin{equation*}
    \vg_{\newtonidx + 1}
    =
    \mK (\mK + \mW(\vf_\newtonidx)^{-1})^{-1}
    \left(
      \vg_\newtonidx
      + \mW(\vf_\newtonidx)^{-1} \nabla\log p(\vy \mid \vf_\newtonidx)
    \right).
  \end{equation*}
\end{proposition}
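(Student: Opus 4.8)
The plan is to reduce the claim to pure (non-commutative) matrix algebra. First I would substitute the expressions for the gradient and Hessian of $\Psi$ recalled in \cref{subsec:laplace_approximation}, namely $\nabla\Psi(\vf_\newtonidx) = \nabla\log p(\vy\mid\vf_\newtonidx) - \mK^{\inv}(\vf_\newtonidx - \vm)$ and $\nabla^2\Psi(\vf_\newtonidx) = -(\mW(\vf_\newtonidx) + \mK^{\inv})$, into the Newton update \cref{eq:newton_step}. Abbreviating $\mW \defeq \mW(\vf_\newtonidx)$, $\vd \defeq \nabla\log p(\vy\mid\vf_\newtonidx)$ and $\mA \defeq \mW + \mK^{\inv}$, and subtracting $\vm$ from both sides, the update becomes $\vg_{\newtonidx+1} = \vg_\newtonidx + \mA^{\inv}(\vd - \mK^{\inv}\vg_\newtonidx)$. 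Note that $\mK$ is a covariance matrix, hence positive definite and invertible, and $\mW$ is invertible by assumption (and positive definite by log-concavity of the likelihood), so $\mA$ and every other inverse appearing below is well-defined.

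Next I would simplify this left-hand side by collecting the terms proportional to $\vg_\newtonidx$: $\vg_{\newtonidx+1} = \mA^{\inv}\vd + (\mI - \mA^{\inv}\mK^{\inv})\vg_\newtonidx$. The key identity is $\mI - \mA^{\inv}\mK^{\inv} = \mA^{\inv}(\mA - \mK^{\inv}) = \mA^{\inv}\mW$, using $\mA - \mK^{\inv} = \mW$ directly from the definition of $\mA$. This yields the compact form $\vg_{\newtonidx+1} = \mA^{\inv}(\mW\vg_\newtonidx + \vd)$.

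For the right-hand side of the claimed identity I would factor $\mW^{\inv}$ out of the bracket, $\vg_\newtonidx + \mW^{\inv}\vd = \mW^{\inv}(\mW\vg_\newtonidx + \vd)$, and use $(\mK + \mW^{\inv})^{\inv}\mW^{\inv} = \bigl(\mW(\mK + \mW^{\inv})\bigr)^{\inv} = (\mW\mK + \mI)^{\inv}$, so that $\mK(\mK + \mW^{\inv})^{\inv}(\vg_\newtonidx + \mW^{\inv}\vd) = \mK(\mW\mK + \mI)^{\inv}(\mW\vg_\newtonidx + \vd)$. Comparing with the form derived in the previous paragraph, it then suffices to show $\mA^{\inv} = \mK(\mW\mK + \mI)^{\inv}$, which follows by inverting the elementary identity $(\mW\mK + \mI)\mK^{\inv} = \mW + \mK^{\inv} = \mA$ (this also exhibits $\mW\mK + \mI = \mA\mK$ as a product of invertible matrices). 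Matching the two sides finishes the proof.

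I do not expect a genuine obstacle here: the statement is essentially an exercise in rewriting resolvents of $\mK^{\inv} + \mW$. The only points requiring care are keeping track of the ordering of the non-commuting factors (in particular not conflating $\mW\mK + \mI$ with $\mK\mW + \mI$) and recording why each inverse exists. It is worth remarking that the invertibility hypothesis on $\mW(\vf_\newtonidx)$ is needed only to phrase the result in this exact form; as the main-text footnote indicates, one otherwise substitutes a pseudo-inverse.
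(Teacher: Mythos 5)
Your proof is correct and follows essentially the same route as the paper's: both reduce the Newton update to the intermediate form $(\mW + \mK^{\inv})^{\inv}\left(\mW\vg_\newtonidx + \nabla\log p(\vy\mid\vf_\newtonidx)\right)$ by collecting the $\vg_\newtonidx$ terms, and then apply an elementary push-through identity to rewrite the resolvent as $\mK(\mK+\mW^{\inv})^{\inv}$ acting on $\vg_\newtonidx + \mW^{\inv}\nabla\log p(\vy\mid\vf_\newtonidx)$. The only cosmetic difference is that you meet in the middle (simplifying both sides to $\mK(\mW\mK+\mI)^{\inv}(\mW\vg_\newtonidx+\vd)$) whereas the paper transforms the left-hand side directly into the claimed form via $(\mW+\mK^{\inv})^{\inv}\mW = (\mI + \mW^{\inv}\mK^{\inv})^{\inv}$.
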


\begin{proof}
  Recall from \cref{eq:newton_step} that
  \begin{align*}
    \vf_{\newtonidx + 1}
    =
    \vf_\newtonidx
    - \nabla^2 \Psi(\vf_\newtonidx)^{-1} \cdot \nabla\Psi(\vf_\newtonidx),
    \quad
    \text{with}
    \quad\quad
    \nabla \Psi(\vf_\newtonidx)
    &= \nabla \log p(\vy \mid \vf_\newtonidx) - \mK^{-1} (\vf_\newtonidx - \vm) \\
    \nabla^2 \Psi(\vf_\newtonidx) &= -\mW(\vf_\newtonidx) - \mK^{-1},
  \end{align*}
  where
  $
    \mW(\vf_\newtonidx)
    = -\nabla^2 \log p(\vy \mid \vf_\newtonidx)
  $
  denotes the negative Hessian (with respect to $\vf$) of the log likelihood evaluated
  at $\vf_\newtonidx$.
It holds
  \begin{align*}
    \vf_{\newtonidx + 1}
    &= \vf_\newtonidx
    - \nabla^2 \Psi(\vf_\newtonidx)^{-1} \cdot \nabla\Psi(\vf_\newtonidx) \\
    &= \vf_\newtonidx + (\mW(\vf_\newtonidx) + \mK^{-1})^{-1} \cdot
    \left(
      \nabla\log p(\vy \mid \vf_\newtonidx) - \mK^{-1} (\vf_\newtonidx - \vm)
    \right) \\
  \intertext{By substracting $\vm$ from both sides we obtain}
    \vg_{\newtonidx + 1}
    &= \vg_\newtonidx + (\mW(\vf_\newtonidx) + \mK^{-1})^{-1} \cdot
    \left( \nabla \log p(\vy \mid \vf_\newtonidx) - \mK^{-1} \vg_\newtonidx \right) \\
    &= (\mW(\vf_\newtonidx) + \mK^{-1})^{-1}
    \left(
      (\mW(\vf_\newtonidx) + \mK^{-1}) \vg_\newtonidx
      + \nabla\log p(\vy \mid \vf_\newtonidx) - \mK^{-1} \vg_\newtonidx
    \right) \\
    &= (\mW(\vf_\newtonidx) + \mK^{-1})^{-1}
    \left(
      \mW(\vf_\newtonidx)\vg_\newtonidx + \nabla\log p(\vy \mid \vf_\newtonidx)
    \right) \\
    &= (\mW(\vf_\newtonidx) + \mK^{-1})^{-1} \mW(\vf_\newtonidx)
    \left(
      \vg_\newtonidx
      + \mW(\vf_\newtonidx)^{-1} \nabla\log p(\vy \mid \vf_\newtonidx)
    \right) \\
    &= (\mI + \mW(\vf_\newtonidx)^{-1} \mK^{-1})^{-1}
    \left(
      \vg_\newtonidx
      + \mW(\vf_\newtonidx)^{-1} \nabla\log p(\vy \mid \vf_\newtonidx)
    \right) \\
    &= (\mK \mK^{-1} + \mW(\vf_\newtonidx)^{-1} \mK^{-1})^{-1}
    \left(
      \vg_\newtonidx
      + \mW(\vf_\newtonidx)^{-1} \nabla\log p(\vy \mid \vf_\newtonidx)
    \right) \\
    &= \mK (\mK + \mW(\vf_\newtonidx)^{-1})^{-1}
    \left(
      \vg_\newtonidx
      + \mW(\vf_\newtonidx)^{-1} \nabla\log p(\vy \mid \vf_\newtonidx)
    \right) \\
    &= \mK \KWinvf{\vf_\newtonidx}^{-1}
    \left(
      \vg_\newtonidx
      + \mW(\vf_\newtonidx)^{-1} \nabla\log p(\vy \mid \vf_\newtonidx)
    \right),
  \end{align*}
  with $\KWinvf{\vf_\newtonidx} \defeq \mK + \mW(\vf_\newtonidx)^{-1}$. \qedhere
\end{proof}

\textbf{Newton's Method as Sequential GP Regression.} Using the LA at $\vf_i$, we obtain a
GP posterior (see \cref{eq:predictive_mean,eq:predictive_variance} in
\cref{sec:background}). With \cref{prop:reformulation_newton_step} (\ie assuming that
$\mW(\vf_\newtonidx)^{-1}$ exists), we can rewrite \cref{eq:predictive_mean} as
\begin{align*}
    m_{\newtonidx,*}(\cdot)
    &= m(\cdot)
    + K(\cdot, \mX) \mK^{-1} (\vf_{\newtonidx + 1} - \vm) \\
  &= m(\cdot)
    + K(\cdot, \mX) \mK^{-1} \vg_{\newtonidx + 1} \\
  &= m(\cdot)
    + K(\cdot, \mX) \mK^{-1} \mK
  \KWinvf{\vf_i}^{-1}
  \left(
    \vg_\newtonidx
    + \mW(\vf_\newtonidx)^{-1} \nabla\log p(\vy \mid \vf_\newtonidx)
  \right) \\
  &= m(\cdot)
    + K(\cdot, \mX)
  \KWinvf{\vf_\newtonidx}^{-1}
  \bigl(
    \vf_\newtonidx
    + \mW(\vf_\newtonidx)^{-1} \nabla\log p(\vy \mid \vf_\newtonidx) - \vm
  \bigr) \\
  &= m(\cdot)
    + K(\cdot, \mX)
  \KWinvf{\vf_\newtonidx}^{-1}
  (\pseudotargetsf{\vf_\newtonidx} - \vm),
\end{align*}
where $\pseudotargetsf{\vf_\newtonidx} \defeq \vf_\newtonidx + \mW(\vf_\newtonidx)^{-1}
\nabla\log p(\vy \mid \vf_\newtonidx)$. This proves
\cref{eq:inference_between_steps_mean_2}. Together with \cref{eq:predictive_variance},
$m_{\newtonidx,*}$ defines a GP posterior for a GP regression problem with pseudo
targets $\pseudotargetsf{\vf_\newtonidx}$ observed with Gaussian noise
$\gaussian{\vzero}{\mW(\vf_\newtonidx)^{-1}}$ \citep[Eqs. (2.24) and
(2.38)]{Rasmussen2006}.

\cref{eq:inference_between_steps_mean_2} requires solving the linear system
$\KWinvf{\vf_\newtonidx} \cdot \vv = \pseudotargetsf{\vf_\newtonidx} - \vm$ of size
$\numtraindata \numclasses$. Then, $m_{\newtonidx,*}(\cdot) = m(\cdot) +
\kernelfn(\cdot, \mX) \vv$. In \cref{prop:reformulation_newton_step}, we can write
$\vg_{\newtonidx + 1}$ as $\vg_{\newtonidx + 1} = \mK \vv$, \ie $\vf_{\newtonidx + 1} =
\mK \vv + \vm$. So, \textit{both} the predictive mean $m_{\newtonidx,*}$ \textit{and}
the Newton update $\vf_{\newtonidx + 1}$ follow directly from the solution $\vv$. In
that sense, performing inference and computing Newton iterates are equivalent.

\textbf{What If $\mW(\vf_\newtonidx)$ Is \textit{Not} Invertible?} For multi-class
classification, \(\mW\) has rank \(\numtraindata(\numclasses-1)\) and thus $\mW^{-1}$
does not exist. Therefore, we use its pseudo-inverse $\mW^\pinv$ instead. We derive an
explicit expression for $\mW^\pinv$ in \cref{subsec:pseudo_inverse_w} which allows for
efficient matrix-vector multiplies.

\subsection{Our Algorithm is an Extension of \itergp}
\label{subsec:extension_of_itergp}

Our algorithm \iterncgp uses \itergp as a core building block. \iterncgp{}'s outer loop
(\cref{alg:outer}) can be understood as a sequence of GP regression problems and we use
\itergp (that implements the inner loop, see \cref{alg:inner}) for finding approximate
solutions to each of these problems. In the case of GP regression (\ie with a Gaussian
likelihood), the outer loop collapses to a \textit{single} iteration and \iterncgp
coincides exactly with \itergp, as we show in the following.

\begin{theorem}[Generalization of \itergp]
  \label{theo:extension_of_itergp}
  For a Gaussian likelihood $p(\vy \mid \vf) = \gaussianpdf{\vy}{\vf}{\mLambda}$,
  \iterncgp converges in a single Newton step (\ie $\vf_1 = \vf_\text{MAP}$) and \iterncgp
  (\cref{alg:outer}) coincides exactly with \itergp (\cref{alg:inner}).
\end{theorem}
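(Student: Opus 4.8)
The plan is to verify the two claims in turn. First I would show that for a Gaussian likelihood $p(\vy \mid \vf) = \gaussianpdf{\vy}{\vf}{\mLambda}$ the log-posterior $\Psi$ from \cref{eq:Psi} is exactly a quadratic in $\vf$, so Newton's method converges in one step regardless of the starting point $\vf_0$. Concretely, $\log p(\vy \mid \vf) \eqc -\tfrac12 (\vy - \vf)^\top \mLambda^{-1} (\vy - \vf)$, whose gradient is $\mLambda^{-1}(\vy - \vf)$ and whose Hessian is the constant $-\mLambda^{-1}$. Hence $\mW(\vf_\newtonidx) = \mLambda^{-1}$ for every $\newtonidx$, $\Psi$ is a genuine (not merely local) concave quadratic, and the single Newton update from \cref{eq:newton_step} lands exactly on the unique maximizer, i.e. $\vf_1 = \vf_{\text{MAP}}$. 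This also means the outer loop's \textsc{OuterStoppingCriterion} fires after $\newtonidx = 0$, so only one pass through \itergp is ever made.

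Second, I would check that this single pass of \iterncgp is literally the \itergp algorithm. Since $\mW(\vf_0) = \mLambda^{-1}$, we have $\mW(\vf_0)^{-1} = \mLambda$, so the "observation noise" fed into \cref{alg:inner} is exactly the Gaussian likelihood covariance $\mLambda$, and $\KWinvf{\vf_0} = \mK + \mLambda$ is the standard GP-regression Gram-plus-noise matrix. The pseudo targets reduce to the real targets: $\pseudotargetsf{\vf_0} = \vf_0 + \mLambda \cdot \mLambda^{-1}(\vy - \vf_0) = \vy$, independently of the initialization $\vf_0 = \vm$. Substituting $\mLambda$ for $\mW^{-1}$ and $\vy$ for $\pseudotargets$ into \cref{alg:inner}, the inner loop solves $(\mK + \mLambda)\vv = \vy - \vm$ via the probabilistic linear solver and returns the GP posterior \cref{eq:inference_PLS_iteration_mean,eq:inference_PLS_iteration_covar} with $\mC_\solveridx$ from \cref{eq:C_j} — which is exactly \itergp applied to the GP regression problem with data $(\mX, \vy)$, prior $\gp{m}{\kernelfn}$ and noise $\mLambda$. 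I would also note that with a fresh run there is no recycling to worry about: the buffers $\mS, \mT$ enter empty, so \textsc{VirtualSolverRun} returns $\mC_0 = \mzero$ and $\vv_0 = \vzero$, matching \itergp's initialization.

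The only mildly delicate point — and the place I would be most careful — is bookkeeping the constant shift $\vg = \vf - \vm$ versus the ambient coordinates, and confirming that the pseudo-target identity $\pseudotargetsf{\vf_0} = \vy$ together with $\vf_1 = \mK\vv + \vm$ reproduces both the correct MAP estimate and the correct predictive mean \cref{eq:predictive_mean}; this is exactly the computation already carried out in \cref{subsec:connection_inference_Newton_steps} specialized to $\mW = \mLambda^{-1}$, so it is routine. Everything else is immediate from the exponential-family structure: Gaussianity makes the Laplace approximation \cref{eq:laplace_approximation} exact, hence a single Newton step is not an approximation but the closed-form solution, and the outer loop degenerates to the identity wrapper around one \itergp call.
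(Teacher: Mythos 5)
Your proposal is correct and follows essentially the same route as the paper's proof: exactness of the quadratic log-posterior forces $\vf_1 = \vf_{\text{MAP}}$, and the identities $\mW(\vf_0)^{-1} = \mLambda$ and $\pseudotargetsf{\vf_0} = \vy$ (independent of $\vf_0$) show the single inner-loop call is \itergp on the original regression problem. The extra observations about empty recycling buffers and the $\vg = \vf - \vm$ bookkeeping are consistent with, though not needed by, the paper's argument.
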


\begin{proof}
Since the likelihood is Gaussian
$p(\vy \mid \vf) = \gaussianpdf{\vy}{\vf}{\mLambda}$,
the log likelihood is given by
\begin{equation*}
  \log p(\vy \mid \vf)
  \eqc
  -\frac{1}{2} (\vf - \vy)^\top \mLambda^{-1} (\vf - \vy).
\end{equation*}
This gives rise to a log-posterior (\cref{eq:Psi})
\begin{align*}
  \Psi(\vf)
  \defeq&
  \log p(\vf \mid \mX, \vy) \\
  \eqc&
  \log p(\vy \mid \vf)
  - \frac{1}{2} (\vf - \vm)^\top \mK^{-1} (\vf - \vm) \\
  =&
  -\frac{1}{2} (\vf - \vy)^\top \mLambda^{-1} (\vf - \vy)
  - \frac{1}{2} (\vf - \vm)^\top \mK^{-1} (\vf - \vm)
\end{align*}
that is \textit{quadratic} in $\vf$. The first Newton iterate $\vf_1$ therefore
coincides with log-posterior's maximizer $\vf_1 = \vf_\text{MAP}$. The outer loop of
\iterncgp thus reduces to a single iteration.

How does this step look from the perspective of the \iterncgp algorithm? First note that
$\mW(\vf) = -\nabla^2 \log p(\vy \mid \vf) \equiv \mLambda^{-1}$. Given an initial
$\vf_0$, \iterncgp computes the observation noise $\mW^{-1}(\vf_0) = \mLambda$ and pseudo
regression targets
$
  \pseudotargetsf{\vf_0}
  = \vf_0 + \mW(\vf_0)^{-1} \nabla\log p(\vy \mid \vf_0)
  = \vf_0 - \mLambda \mLambda^{-1} (\vf_0 - \vy)
  = \vy.
$
Both these quantities are \textit{independent} of the initialization $\vf_0$. Thus,
the first (and only) regression problem that \iterncgp forms in the outer loop is the
\textit{original} regression problem (defined by labels $\vy$ and the observation noise
$\Lambda$) and \itergp is applied to solve that regression problem. This shows that our
framework recovers \itergp in the case of a Gaussian likelihood and our algorithm can
thus be regarded as an extension thereof.
\end{proof}

\subsection{The Marginal Uncertainty Decreases in the Inner Loop}
\label{subsec:uncertainty_decreases_inner_loop}

We claim in \cref{sec:derivation_iterglm} that the marginal uncertainty captured by
$\kernelfn_{\newtonidx, \solveridx}(\vx, \vx) \in \R^{\numclasses \times \numclasses}$
(see \cref{eq:inference_PLS_iteration_covar}) within a Newton step decreases with each
solver iteration $\solveridx$. Here, we provide the proof for this statement.

\begin{proposition}[The Uncertainty Decreases in the Inner Loop]
For each $i$ it holds (element-wise) that
$
  \diag(\kernelfn_{\newtonidx, \solveridx}(\vx, \vx))
  \geq
  \diag(\kernelfn_{\newtonidx, k}(\vx, \vx))
$
for any $k \geq \solveridx$ and arbitrary $\vx$.

\proof To see this, we rewrite $\mC_\solveridx$ as a sum of $\solveridx$ rank-$1$
matrices $\mC_\solveridx = \sum_{\ell=1}^\solveridx \vd_\ell \vd_\ell^\top$ and
substitute this into \cref{eq:inference_PLS_iteration_covar}. It holds that
\begin{align*}
  \diag(\kernelfn_{\newtonidx, \solveridx}(\vx, \vx))
  &=
  \diag(\kernelfn(\vx, \vx))
  - \sum_{\ell=1}^j
  \diag(
    \underbracket[0.1ex]{\kernelfn(\vx, \mX) \vd_\ell}_{\eqdef \hat{\vd}_\ell}
    \underbracket[0.1ex]{\vd_\ell^\top \kernelfn(\mX, \vx)}_{= \hat{\vd}_\ell^\top}
  )\\
  &=
  \diag(\kernelfn(\vx, \vx))
  - \sum_{\ell=1}^j \hat{\vd}_\ell^{\,2}
  \tag*{The square is applied element-wise.}\\
  &\geq
  \diag(\kernelfn(\vx, \vx))
  - \sum_{\ell=1}^k \hat{\vd}_\ell^{\,2}
  \quad \text{for $k \geq \solveridx$}
  \\
  &=
  \diag(\kernelfn_{\newtonidx, k}(\vx, \vx))
\end{align*}
for any $\vx \in \inputspace$. \qedhere
\end{proposition}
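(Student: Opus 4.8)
The plan is to read off the monotonicity directly from the rank-one-update structure of the approximate precision matrix $\mC_\solveridx$ built up in \cref{alg:inner}. There, $\mC_\solveridx = \matrixroot_\solveridx \matrixroot_\solveridx^\top$, and the root $\matrixroot_\solveridx$ is obtained from $\matrixroot_{\solveridx-1}$ by appending a single column, so the column sets of successive roots are nested. Writing $\vd_\ell$ for the $\ell$-th column of the root, this gives $\mC_\solveridx = \sum_{\ell=1}^{\solveridx} \vd_\ell \vd_\ell^\top$, and for any later iterate $k \geq \solveridx$ the same construction yields $\mC_k = \sum_{\ell=1}^{k} \vd_\ell \vd_\ell^\top$ with the \emph{same} vectors $\vd_1,\dots,\vd_\solveridx$ forming a prefix; in particular $\mC_k = \mC_\solveridx + \sum_{\ell=\solveridx+1}^{k} \vd_\ell \vd_\ell^\top$. (This remains true with recycling, since $\matrixroot_0$ is merely initialized to the recycled root and still only grows by column-appending.)

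The next step is to substitute this representation into the posterior covariance \cref{eq:inference_PLS_iteration_covar}. Abbreviating $\hat{\vd}_\ell \defeq \kernelfn(\vx,\mX)\,\vd_\ell \in \R^\numclasses$, the subtracted matrix $\kernelfn(\vx,\mX)\,\mC_\solveridx\,\kernelfn(\mX,\vx)$ becomes $\sum_{\ell=1}^{\solveridx} \hat{\vd}_\ell \hat{\vd}_\ell^\top$, and the diagonal of each rank-one outer product $\hat{\vd}_\ell \hat{\vd}_\ell^\top \in \R^{\numclasses\times\numclasses}$ is the entrywise square $\hat{\vd}_\ell^{\,2}$. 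Hence $\diag(\kernelfn_{\newtonidx,\solveridx}(\vx,\vx)) = \diag(\kernelfn(\vx,\vx)) - \sum_{\ell=1}^{\solveridx} \hat{\vd}_\ell^{\,2}$, and the identical identity holds with $\solveridx$ replaced by $k$.

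Finally, since every $\hat{\vd}_\ell^{\,2}$ is entrywise non-negative, moving from $\solveridx$ to any $k \geq \solveridx$ can only enlarge the subtracted sum entrywise, \ie $\sum_{\ell=1}^{k} \hat{\vd}_\ell^{\,2} \geq \sum_{\ell=1}^{\solveridx} \hat{\vd}_\ell^{\,2}$ element-wise, which immediately gives $\diag(\kernelfn_{\newtonidx,\solveridx}(\vx,\vx)) \geq \diag(\kernelfn_{\newtonidx,k}(\vx,\vx))$ for arbitrary $\vx$. I do not expect a genuine obstacle here; the two points deserving a line of care are (i) that the matrix roots are really nested across inner-loop iterations, which is precisely the column-append step $\matrixroot_\solveridx \gets (\matrixroot_{\solveridx-1}, \cdot)$ in \cref{alg:inner}, and (ii) that the diagonal of a rank-one outer product is the element-wise square of the generating vector, which is exactly what makes the quantity subtracted from the prior marginal covariance monotone in the number of solver iterations.
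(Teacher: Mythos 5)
Your proposal is correct and follows essentially the same route as the paper's proof: decompose $\mC_\solveridx$ into nested rank-one terms, substitute into \cref{eq:inference_PLS_iteration_covar}, and observe that each term subtracts an element-wise square from the diagonal. Your explicit justification of the prefix-nesting of the matrix-root columns is a point the paper leaves implicit, but the argument is otherwise identical.
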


\subsection{Virtual Solver Run}
\label{sec:details_virtual_solver_run}

In \cref{sec:recycling}, we showed that it is possible to \textit{imitate} a solver run
using the \textit{previous} actions on the \textit{new} problem, without ever having to
multiply by $\mK$. The pseudo code is given in \cref{alg:virtual_solver_run}. Here, we
discuss the numerical and probabilistic perspective on that procedure in more detail and
provide derivations for the statements in the main text.

\textbf{Numerical Perspective.}
Let $\mS = (\action_1, \dots, \action_\buffersize)$ the matrix of stacked linearly
independent actions. We use $\mC_0 = \mS (\mS^\top \KWinv \mS)^{-1} \mS^\top$ (see
\cref{eq:C_j}) as an initial estimate of the precision matrix in \cref{alg:inner}. The
corresponding initial residual (see \cref{alg:inner}) $\vr_0 = (\pseudotargets - \vm) -
\KWinv \vv_0$ for the first iterate $\vv_0 = \mC_0 (\pseudotargets - \vm)$ can be
decomposed into $\mP_{\mS} \vr_0$ and $(\mI - \mP_{\mS}) \vr_0$. $\mP_{\mS} = \mS
(\mS^\top \mS)^{-1} \mS^\top$ is the orthogonal projection onto the subspace
$\vspan\{\mS\}$ spanned by the actions.

\begin{proposition}[Residual in $\vspan\{\mS\}$ Is Zero]
  \label{prop:projected_residual_zero}
  The orthogonal projection $\mP_{\mS} \vr_0$ of the initial residual $\vr_0$ onto
  $\vspan\{\mS\}$ is zero.

  \proof It holds that
  \begin{align*}
    \mP_{\mS} \vr_0
    &=
    \mP_{\mS} (\pseudotargets - \vm) - \mP_{\mS} \KWinv \vv_0 \\
    &=
    \mP_{\mS} (\pseudotargets - \vm)
    - \mP_{\mS} \KWinv \mC_0 (\pseudotargets - \vm) \\
    &=
    \mP_{\mS} (\pseudotargets - \vm)
    - \underbracket[0.1ex]{
      \mS(\mS^\top \mS)^{-1} (\mS^\top
    }_{=\mP_{\mS}}
    \KWinv
    \underbracket[0.1ex]{
      \mS) (\mS^\top \KWinv \mS)^{-1} \mS^\top
    }_{=\mC_0} (\pseudotargets - \vm) \\
    &= \mP_{\mS} (\pseudotargets - \vm)
    - \underbracket[0.1ex]{\mS (\mS^\top \mS)^{-1} \mS^\top}_{= \mP_{\mS}}
    (\pseudotargets - \vm) \\
    &= 0.\qedhere
  \end{align*}
\end{proposition}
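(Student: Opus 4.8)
The plan is a short algebraic computation that exploits the fact that $\mP_{\mS}$ and $\mC_0$ share the outer factor $\mS$. First I would use the definition of the residual to write $\mP_{\mS}\vr_0 = \mP_{\mS}(\pseudotargets - \vm) - \mP_{\mS}\KWinv\vv_0$, and then substitute $\vv_0 = \mC_0(\pseudotargets - \vm)$ with $\mC_0 = \mS(\mS^\top \KWinv \mS)^{-1}\mS^\top$ and $\mP_{\mS} = \mS(\mS^\top \mS)^{-1}\mS^\top$. The key observation is that
\begin{equation*}
  \mP_{\mS}\KWinv\mC_0
  = \mS(\mS^\top\mS)^{-1}\bigl(\mS^\top \KWinv \mS\bigr)\bigl(\mS^\top\KWinv\mS\bigr)^{-1}\mS^\top
  = \mS(\mS^\top\mS)^{-1}\mS^\top
  = \mP_{\mS},
\end{equation*}
i.e.\ the inner Gram-type matrix $\mS^\top\KWinv\mS$ cancels with its inverse. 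Hence $\mP_{\mS}\KWinv\vv_0 = \mP_{\mS}(\pseudotargets - \vm)$, the two terms cancel, and $\mP_{\mS}\vr_0 = 0$.

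The only preliminaries to check are that $\mS^\top\mS$ and $\mS^\top\KWinv\mS$ are invertible, so that $\mP_{\mS}$ and $\mC_0$ are well defined. The former holds because the actions in $\mS$ are linearly independent by assumption; the latter holds because $\KWinv = \mK + \mW(\vf_\newtonidx)^{-1}$ is symmetric positive definite (a sum of the covariance matrix $\mK$ and the positive (semi-)definite noise term), so $\mS^\top\KWinv\mS$ is positive definite whenever $\mS$ has full column rank. I do not expect any real obstacle here; the whole argument is essentially a two-line matrix identity.

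A slightly cleaner route, which I might use instead, is to note that $\vv_0 = \mC_0(\pseudotargets - \vm)$ is by construction the Galerkin/Ritz solution of the reduced system $\bigl(\mS^\top\KWinv\mS\bigr)\vc = \mS^\top(\pseudotargets - \vm)$, i.e.\ $\mS^\top\KWinv\vv_0 = \mS^\top(\pseudotargets - \vm)$. Therefore $\mS^\top\vr_0 = \mS^\top(\pseudotargets - \vm) - \mS^\top\KWinv\vv_0 = 0$, and since $\mP_{\mS}\vr_0 = \mS(\mS^\top\mS)^{-1}(\mS^\top\vr_0)$, the projection vanishes immediately. This phrasing also makes transparent why the statement is exactly the defining property of a deflation preconditioner: the component of the solution lying in $\vspan\{\mS\}$ has been identified exactly at initialization.
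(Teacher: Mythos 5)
Your proposal is correct and follows essentially the same route as the paper: expand $\vr_0$, substitute $\vv_0 = \mC_0(\pseudotargets - \vm)$, and observe that $\mP_{\mS}\KWinv\mC_0 = \mP_{\mS}$ because the inner factor $\mS^\top\KWinv\mS$ cancels with its inverse. Your added remarks on invertibility and the Galerkin reformulation $\mS^\top\vr_0 = 0$ are sound but do not change the substance of the argument.
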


\cref{prop:projected_residual_zero} shows that the residual in $\vspan\{\mS\}$ is zero.
In that sense, the solution within this subspace is already perfectly identified at
initialization. The remaining residual thus lies in the orthogonal complement of
$\vspan\{\mS\}$ which can be targeted through additional actions. If we measure the
error in the representer weights $\vv - \vv_0$, a similar results holds, as we
show in the following.

\begin{proposition}[Error in Representer Weights in $\vspan\{\mS\}$ Is Zero]
  The $\KWinv$-orthogonal projection of the representer weights approximation error
   $\hat{\mP}_{\mS} (\vv - \vv_0)$  onto $\vspan\{\mS\}$ is zero.

  \proof The $\KWinv$-orthogonal (orthogonal with respect to the inner product
  $\langle\cdot,\cdot\rangle_{\KWinv}$) projection onto the subspace $\vspan\{\mS\}$
  spanned by the actions is given by $\hat{\mP}_{\mS} = \mC_0 \KWinv$ \citep[Section
  S2.1]{Wenger2022PosteriorComputational}. It holds that
  \begin{align*}
    \hat{\mP}_{\mS} (\vv - \vv_0)
    &= \mC_0 \KWinv (\vv - \vv_0)\\
    &= \mC_0 \KWinv \KWinv^{-1} (\pseudotargets - \vm)
    - \mC_0 \KWinv \mC_0 \underbracket[0.1ex]{(\pseudotargets - \vm)}_{=\KWinv \vv}\\
    &= \mC_0 (\pseudotargets - \vm)
    - \underbracket[0.1ex]{\mC_0 \KWinv}_{=\hat{\mP}_{\mS}}
    \underbracket[0.1ex]{\mC_0 \KWinv}_{=\hat{\mP}_{\mS}} \vv\\
    &= \mC_0 (\pseudotargets - \vm)
    - \mC_0 \underbracket[0.1ex]{\KWinv \vv}_{=\pseudotargets - \vm} \\
    &= 0,
  \end{align*}
  where we used that $\vv = \KWinv^{-1} (\pseudotargets - \vm)$ is the solution of the
  GP regression linear system, $\vv_0 = \mC_0 (\pseudotargets - \vm)$ and the
  idempotence of the projection matrix $\hat{\mP}_{\mS} = \hat{\mP}_{\mS}
  \hat{\mP}_{\mS}$.\qedhere
\end{proposition}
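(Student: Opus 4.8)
The plan is to mirror the proof of the preceding proposition (``residual in $\vspan\{\mS\}$ is zero''), but now working with the $\KWinv$-inner product in place of the Euclidean one. First I would assemble the three ingredients. (i) The $\KWinv$-orthogonal projector onto $\vspan\{\mS\}$ is $\hat{\mP}_{\mS} = \mC_0 \KWinv$, as established in \citet[Section S2.1]{Wenger2022PosteriorComputational}: since $\mC_0 = \mS(\mS^\top \KWinv \mS)^{-1}\mS^\top$ inverts $\KWinv$ on $\vspan\{\mS\}$, the map $\mC_0\KWinv$ fixes $\vspan\{\mS\}$ and annihilates its $\KWinv$-orthogonal complement. (ii) The exact representer weights satisfy $\KWinv\,\vv = \pseudotargets - \vm$. (iii) The recycled initial iterate is $\vv_0 = \mC_0(\pseudotargets - \vm)$.

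Next I would expand $\hat{\mP}_{\mS}(\vv - \vv_0) = \mC_0\KWinv\vv - \mC_0\KWinv\vv_0$. For the first term, substituting $\KWinv\vv = \pseudotargets - \vm$ gives $\mC_0(\pseudotargets - \vm) = \vv_0$, i.e.\ $\hat{\mP}_{\mS}\vv = \vv_0$. For the second term, observe that $\vv_0 = \mC_0(\pseudotargets - \vm) = \mC_0\KWinv\vv = \hat{\mP}_{\mS}\vv$, hence $\hat{\mP}_{\mS}\vv_0 = \hat{\mP}_{\mS}^2\vv$; invoking idempotence $\hat{\mP}_{\mS}^2 = \hat{\mP}_{\mS}$ collapses this back to $\hat{\mP}_{\mS}\vv = \vv_0$. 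Subtracting the two terms yields $\hat{\mP}_{\mS}(\vv - \vv_0) = \vv_0 - \vv_0 = 0$, which is the claim.

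The only fact with any content is the idempotence $\hat{\mP}_{\mS}\hat{\mP}_{\mS} = \hat{\mP}_{\mS}$, equivalently $\mC_0\KWinv\mC_0 = \mC_0$, which is a one-line cancellation: $\mC_0\KWinv\mC_0 = \mS(\mS^\top\KWinv\mS)^{-1}(\mS^\top\KWinv\mS)(\mS^\top\KWinv\mS)^{-1}\mS^\top = \mC_0$, requiring only that $\mS$ has full column rank and $\KWinv$ is positive definite so that $\mS^\top\KWinv\mS$ is invertible -- both of which hold in our setting. So the ``hard part'' is really just bookkeeping: making sure the projector in play is the $\KWinv$-orthogonal one (not the Euclidean $\mP_{\mS}$ of the previous proposition) and that $\KWinv^{-1}$ and $(\mS^\top\KWinv\mS)^{-1}$ both exist; there is no genuine difficulty. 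Conceptually, this result is the weight-space counterpart of \cref{prop:projected_residual_zero}: it says the portion of the approximation error that lives in the recycled subspace $\vspan\{\mS\}$ is eliminated already at initialization, so that subsequent solver iterations only need to resolve the orthogonal complement.
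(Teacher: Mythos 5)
Your proposal is correct and follows essentially the same route as the paper's proof: the same projector $\hat{\mP}_{\mS} = \mC_0\KWinv$, the same two substitutions $\KWinv\vv = \pseudotargets - \vm$ and $\vv_0 = \mC_0(\pseudotargets-\vm)$, and the same appeal to idempotence to collapse $\hat{\mP}_{\mS}\vv_0$ back to $\hat{\mP}_{\mS}\vv$. The only difference is cosmetic: you phrase the cancellation as $\hat{\mP}_{\mS}\vv = \vv_0$ and explicitly verify the idempotence $\mC_0\KWinv\mC_0 = \mC_0$, which the paper merely asserts.
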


\textbf{Probabilistic Perspective.}
\cref{eq:inference_PLS_iteration_covar} describes the effect of $\mC_0$ from a
probabilistic perspective.
Initializing $\mC_0 = \mzero$ in step $\newtonidx$ results in $m_{\newtonidx,0} =
m(\cdot)$ (prior mean) and $\kernelfn_{\newtonidx,0} = \kernelfn(\cdot, \cdot)$ (prior
covariance) since the \textit{reduction} of uncertainty $\kernelfn(\cdot, \mX) \, \mC_0
\, \kernelfn(\mX, \cdot)$ is zero. This case, where no information from past steps is
included, is illustrated in the first column $\bufferlimit = 0$ in
\cref{fig:virtual_solver_run}.

\textbf{Special Case.} We consider a special case, where the general intricate form of
the total marginal variance from \cref{sec:compression}
\begin{align}
  \Tr(\kernelfn_{\newtonidx,0}(\mX, \mX))
  &=
  \Tr(\mK) - \Tr(\mK \mC_0 \mK)
  \label{eq:total_marginal_variance}
\end{align}
collapses. Let $\lambda_1,
\dots, \lambda_{\numtraindata\numclasses} > 0$ denote the eigenvalues of $\KWinv$ and
$\vb_1, \dots, \vb_{\numtraindata\numclasses}$ the corresponding (pairwise orthogonal)
eigenvectors. We make the following two assumptions: \textbf{(A1):} We assume $\mW^{-1}
= \mzero$, \ie $\KWinv = \mK$. \textbf{(A2):} We assume that the actions coincide with a
subset $\sL \subseteq \{1, \dots, \numtraindata\numclasses\}$ of $\KWinv$'s eigenvectors
$\mS = (\vb_l)_{l \in \sL} \in \R^{\numtraindata\numclasses \times \vert\sL\vert}$.

\begin{proposition}[Total Marginal Uncertainty]
  \label{prop:deflation_uncertainty}
  Under assumptions \textnormal{\textbf{(A1)}} and \textnormal{\textbf{(A2)}} it holds that
    \begin{equation*}
        \Tr(\kernelfn_{\newtonidx,0}(\mX, \mX))
        =
        \Tr(\mK) - \Tr(\mM).
    \end{equation*}

  \proof Let
    $
      \mS
      = (\vb_l)_{l \in \sL} \in \R^{\numtraindata\numclasses \times \vert\sL\vert}
    $
    and
    $
        \mLambda
        = \diag((\lambda_l)_{l \in \sL})
        \in \R^{\vert\sL\vert \times \vert\sL\vert}
    $
    contain a subset $\sL \subseteq \{1, \dots, \numtraindata\numclasses\}$ of
    $\KWinv$'s eigenpairs. The remaining eigenvectors and eigenvalues are given by
    $
        \mS_+
        = (\vb_l)_{l \notin \sL} \in
        \R^{\numtraindata\numclasses \times \numtraindata\numclasses - \vert\sL\vert}
    $
    and
    $
        \mLambda_+
        = \diag((\lambda_l)_{l \notin \sL})
        \in \R^{\numtraindata\numclasses - \vert\sL\vert
        \times \numtraindata\numclasses - \vert\sL\vert}
    $.
    First note that we can write the eigendecomposition of $\KWinv = \mK$ as a sum of
    two components
    $
        \KWinv = \mS \mLambda \mS^\top + \mS_+ \mLambda_+ \mS_+^\top,
    $
    each of which covers one part of the spectrum. It holds
    \begin{equation*}
        \mS^\top \mS = \mI,
        \quad
        \mS_+^\top \mS_+ = \mI,
        \quad
        \mS^\top \mS_+ = \mzero
        \quad \text{and}
        \quad \mS_+^\top \mS = \mzero
    \end{equation*}
    since $\KWinv$ is symmetric and its eigenvectors are thus pairwise orthogonal. It
    follows
    \begin{align*}
        \mK \mS
        &= (\mS \mLambda \mS^\top + \mS_+ \mLambda_+ \mS_+^\top) \mS
        = \mS \mLambda \\
        \mS^\top \mK
        &= \mS^\top (\mS \mLambda \mS^\top + \mS_+ \mLambda_+ \mS_+^\top)
        = \mLambda \mS^\top \\
        \mM
        = \mS^\top \mK \mS
        &= \mS^\top (\mS \mLambda \mS^\top + \mS_+ \mLambda_+ \mS_+^\top) \mS
        = \mLambda.
    \end{align*}
    Plugging those expressions into \cref{eq:total_marginal_variance} yields
    \begin{align*}
        \Tr(\kernelfn_{\newtonidx,0}(\mX, \mX))
        &=
        \Tr(\mK) - \Tr(\mK \mC_0 \mK) \\
        &=
      \Tr(\mK) - \Tr(\mK \mS
      \underbracket[0.1ex]{(\mS^\top \KWinv \mS)^{-1}}_{=\mM^{-1}}
      \mS^\top \mK) \\
        &=
        \Tr(\mK)
        - \Tr(\mS \mLambda \mLambda^{-1} \mLambda \mS^\top) \\
        &=
        \Tr(\mK)
        - \Tr(\mS^\top \mS \mLambda) \\
        &=
        \Tr(\mK) - \Tr(\mLambda) \\
        &=
        \Tr(\mK) - \Tr(\mM) \\
        &=
        \sum_{l \notin \sL} \lambda_l.
    \end{align*}
\end{proposition}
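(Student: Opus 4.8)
The plan is to start from the total marginal variance identity stated just before the proposition, namely $\Tr(\kernelfn_{\newtonidx,0}(\mX, \mX)) = \Tr(\mK) - \Tr(\mK \mC_0 \mK)$ (\cref{eq:total_marginal_variance}), which itself is obtained by substituting $\mC_0 = \mS(\mS^\top \KWinv \mS)^{-1}\mS^\top$ (the rank-$\lvert\sL\rvert$ special case of \cref{eq:C_j}) into the posterior covariance \cref{eq:inference_PLS_iteration_covar} evaluated at the training inputs and taking the trace. Under assumption \textbf{(A1)} we have $\KWinv = \mK$, so the only quantity left to simplify is $\mK \mC_0 \mK$ with $\mC_0 = \mS(\mS^\top \mK \mS)^{-1}\mS^\top$, and in particular $\mM = \mS^\top \KWinv \mS = \mS^\top \mK \mS$.

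Next I would use assumption \textbf{(A2)}. Writing the eigendecomposition of the symmetric positive definite matrix $\mK$ and splitting it according to whether an eigenvector is one of the selected actions gives $\mK = \mS \mLambda \mS^\top + \mS_+ \mLambda_+ \mS_+^\top$, where $\mS$ stacks the chosen eigenvectors with eigenvalues $\mLambda$ and $\mS_+$, $\mLambda_+$ collect the remaining ones. Since the eigenvectors of a symmetric matrix can be taken orthonormal, $\mS^\top \mS = \mI$, $\mS_+^\top \mS_+ = \mI$, $\mS^\top \mS_+ = \mzero$, and $\mS_+^\top \mS = \mzero$. From these relations I obtain $\mK \mS = \mS \mLambda$ and $\mS^\top \mK = \mLambda \mS^\top$, hence $\mM = \mS^\top \mK \mS = \mLambda$: the matrix $\mM$ is already diagonal and invertible, with the selected eigenvalues on its diagonal.

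Substituting back, $\mK \mC_0 \mK = (\mK\mS)\,\mM^{-1}\,(\mS^\top\mK) = \mS \mLambda\, \mLambda^{-1}\, \mLambda \mS^\top = \mS \mLambda \mS^\top$. Taking the trace and using cyclicity together with $\mS^\top \mS = \mI$ gives $\Tr(\mK \mC_0 \mK) = \Tr(\mLambda \mS^\top \mS) = \Tr(\mLambda) = \Tr(\mM)$, and plugging this into the total marginal variance identity yields $\Tr(\kernelfn_{\newtonidx,0}(\mX, \mX)) = \Tr(\mK) - \Tr(\mM)$, which also equals $\Tr(\mK) - \Tr(\mLambda) = \sum_{l\notin\sL}\lambda_l$. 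The whole argument is essentially mechanical bookkeeping with the block spectral decomposition; the only point needing a moment's care is the orthonormality step, i.e.\ that repeated eigenvalues cause no trouble, which is covered by the standard fact that a symmetric matrix admits an orthonormal eigenbasis, so $\mS$ and $\mS_+$ can always be chosen with mutually orthonormal columns.
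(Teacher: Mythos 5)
Your proposal is correct and follows essentially the same route as the paper's own proof: split the spectral decomposition of $\KWinv = \mK$ into the selected and remaining eigenpairs, use orthonormality to get $\mK\mS = \mS\mLambda$ and $\mM = \mLambda$, and substitute into $\Tr(\mK) - \Tr(\mK\mC_0\mK)$. The extra remark about repeated eigenvalues is a small but welcome clarification that the paper leaves implicit.
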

The last equation is due to
\begin{equation*}
    \Tr(\mK)
    = \Tr(\mS \mLambda \mS^\top) + \Tr(\mS_+ \mLambda_+ \mS_+^\top)
    = \Tr(\mS^\top \mS \mLambda) + \Tr(\mS_+^\top \mS_+ \mLambda_+)
    = \Tr(\mLambda) + \Tr(\mLambda_+).\qedhere
\end{equation*}

\cref{prop:deflation_uncertainty} shows that the \textit{reduction} of the marginal
uncertainty is determined by the sum of $\mM$'s eigenvalues $\sum_{l \in \sL}
\lambda_l$. If $\mS$ contains the eigenvectors $\vb_l$ to the largest eigenvalues (\ie
$\mS$ is ``aligned'' with the high-variance subspace of $\KWinv$), the remaining
uncertainty $\sum_{l \notin \sL} \lambda_l$ is small. In contrast, if $\mS$ covers the
low-variance subspace of $\KWinv$, the uncertainty remains largely unaffected.

\subsection{Derivatives of the Poisson Log Likelihood}
\label{sec:math_poisson_regression}

One of our main experiments in \cref{sec:experiments} is Poisson regression (see
\cref{sec:details_poisson_regression} for details). In order to apply \iterncgp, we have
to formulate the problem within the NCGP framework. In particular, we have to specify
the derivatives of the log likelihood.

The Poisson likelihood is given by
\begin{equation*}
    p(\vy \mid \vf)
    = \prod_{n=1}^\numtraindata
    \frac{\lambda_n^{y_n} \exp(-\lambda_n)}{y_n!},
\end{equation*}
where $y_n \in \N_0$ and $\vlambda \defeq \lambda(\mX) = \exp(f(\mX)) = \exp(\vf)$.
Taking the logarithm yields
\begin{equation*}
    \log p(\vy \mid \vf)
    = \sum_{n=1}^\numtraindata
  \log\left(
    \frac{\lambda_n^{y_n} \exp(-\lambda_n)}{y_n!}
  \right)
  = \sum_{n=1}^\numtraindata
  \left(
    y_n \log(\lambda_n) - \lambda_n - \log(y_n!)
  \right).
\end{equation*}

The log likelihood's gradient and Hessian with respect to $\vf$ are therefore given by
\begin{equation*}
  \nabla \log p(\vy \mid \vf) = \vy - \exp(\vf)
  \quad \text{and} \quad
  \nabla^2 \log p(\vy \mid \vf) = - \diag(\exp(\vf)),
\end{equation*}
where the exponential function is applied element-wise. This implies that the log
likelihood is concave which was one of the prerequisites of our algorithm (see
\cref{sec:ncgps}). It follows that $\mW(\vf)^{-1} = \diag(\exp(-\vf))$.

\subsection{Pseudo-Inverse of Negative Hessian of the Log Likelihood for Multi-Class Classification}
\label{subsec:pseudo_inverse_w}

For multi-class classification (see \cref{sec:details_large_scale_gpc} for details), we
need access to the pseudo inverse $\mW^\pinv$. For this to be efficient, we derive an
explicit form of $\mW^\pinv$ in the following and show that matrix-vector multiplies can
be implemented efficiently in $\bigO{\numtraindata\numclasses}$. Since the ordering (see
\cref{sec:ordering}) of $\mW$ plays an important role in the derivation, we use an
explicit notation in this section.

\begin{lemma}[Explicit Pseudo-Inverse for Multi-Class Classification]
Consider multi-class classification, such that the log likelihood \(\log p(\vy \mid
\vf)\) is given by a categorical likelihood with a softmax inverse link function, then
the pseudoinverse $[\mW(\vf)]_{CN}^{\pinv} \in \R^{\numtraindata\numclasses \times
\numtraindata\numclasses}$ of \(\mW(\vf)\) in \cnordering{} is given by
\begin{equation*}
  [\mW(\vf)]_{CN}^{\pinv} =
  \begin{pmatrix}
    (\mI - \frac{1}{\numclasses} \vone \vone^\top)\diag(\vpi_1^{-1})(\mI - \frac{1}{\numclasses}\vone\vone^\top) & &\\
    & \ddots &\\
    &  & (\mI - \frac{1}{\numclasses} \vone \vone^\top)\diag(\vpi_\numtraindata^{-1})(\mI - \frac{1}{\numclasses}\vone\vone^\top)
  \end{pmatrix} ,
\end{equation*}
where \(\vpi_n = (\pi_n^1, ..., \pi_n^\numclasses)^\top \in \R^{\numclasses}\) denotes
the output of the softmax for $\vx_n$, \ie $\pi_n^c \defeq \exp(f_n^c) / \sum_{c'}
\exp(f_n^{c'})$. The cost of one matrix-vector multiplication \(\vv \mapsto
[\mW(\vf)]_{CN}^{\pinv} \vv\) with the pseudo-inverse is \(\bigO{\numtraindata
\numclasses}\).

\end{lemma}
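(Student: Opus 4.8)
The plan is to reduce everything to a single data point using the iid structure, and then to verify the four Moore--Penrose conditions for the proposed per-point inverse.

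First I would compute the per-point Hessian block. For data point $\trainidx$ the categorical likelihood with softmax link gives $\log p(y_\trainidx \mid \vf_\trainidx) \eqc f_\trainidx^{y_\trainidx} - \log \sum_{c'} \exp(f_\trainidx^{c'})$, hence $\nabla_{\vf_\trainidx}\log p(y_\trainidx\mid\vf_\trainidx) = \ve_{y_\trainidx} - \vpi_\trainidx$ and $\nabla^2_{\vf_\trainidx}\log p(y_\trainidx\mid\vf_\trainidx) = -(\diag(\vpi_\trainidx) - \vpi_\trainidx\vpi_\trainidx^\top)$, so the per-point negative Hessian is $\mW_\trainidx \defeq \diag(\vpi_\trainidx) - \vpi_\trainidx\vpi_\trainidx^\top$. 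Because $\log p(\vy\mid\vf) = \sum_\trainidx \log p(y_\trainidx\mid\vf_\trainidx)$ and distinct data points share no latent coordinates, in \cnordering{} (the ordering of \cref{sec:ordering}, which groups the $\numclasses$ coordinates of each data point together) $\mW(\vf) = \blockdiag(\mW_1,\dots,\mW_\numtraindata)$ as displayed. Since the Moore--Penrose inverse of a block-diagonal matrix is the block-diagonal matrix of the blocks' Moore--Penrose inverses (check the four Penrose identities block by block), it suffices to show $\mW_\trainidx^\pinv = \mP\diag(\vpi_\trainidx^{-1})\mP$ with $\mP \defeq \mI - \tfrac{1}{\numclasses}\vone\vone^\top$.

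Then I would establish the single-block identity. Using only $\vpi_\trainidx^\top\vone = 1$ (softmax outputs sum to one), I would record three elementary facts: $\mW_\trainidx\vone = \vpi_\trainidx - \vpi_\trainidx = \vzero$ (so, writing $\mW_\trainidx = \diag(\vpi_\trainidx)^{1/2}(\mI - \vu\vu^\top)\diag(\vpi_\trainidx)^{1/2}$ with the unit vector $\vu = \diag(\vpi_\trainidx)^{1/2}\vone$, it has rank exactly $\numclasses-1$ with null space $\vspan\{\vone\}$); $\mW_\trainidx\mP = \mW_\trainidx$; and $\mW_\trainidx\diag(\vpi_\trainidx^{-1}) = \mI - \vpi_\trainidx\vone^\top$. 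Since also $\vone^\top\mP = \vzero^\top$, combining these gives $\mW_\trainidx\mA = \mP$ for $\mA \defeq \mP\diag(\vpi_\trainidx^{-1})\mP$, where $\mP$ is the orthogonal projection onto $\vspan\{\vone\}^\perp = \operatorname{range}(\mW_\trainidx)$. With $\mA$ and $\mW_\trainidx$ symmetric and $\mP$ symmetric idempotent, the Penrose conditions are then immediate: $\mW_\trainidx\mA = \mP = \mA\mW_\trainidx$ are symmetric, $\mW_\trainidx\mA\mW_\trainidx = \mP\mW_\trainidx = \mW_\trainidx$, and $\mA\mW_\trainidx\mA = \mP\mA = \mA$ (the last using $\mP^2 = \mP$ and $\mP\diag(\vpi_\trainidx^{-1})\mP = \mA$). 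Hence $\mA = \mW_\trainidx^\pinv$, and reassembling the blocks yields the displayed formula. For the cost claim I would observe that $\vv \mapsto [\mW(\vf)]_{CN}^\pinv\vv$ splits into $\numtraindata$ independent blocks $\vv_\trainidx \mapsto \mP\diag(\vpi_\trainidx^{-1})\mP\vv_\trainidx$; each application of $\mP$ is a centering $\vw \mapsto \vw - \tfrac{1}{\numclasses}(\vone^\top\vw)\vone$ costing $\bigO{\numclasses}$, the scaling by $\vpi_\trainidx^{-1}$ costs $\bigO{\numclasses}$, and forming $\vpi_\trainidx$ is one softmax over $\numclasses$ logits, also $\bigO{\numclasses}$, so summing over $\trainidx$ gives $\bigO{\numtraindata\numclasses}$.

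I do not anticipate a genuine obstacle. The only points needing a little care are pinning down the ordering so that $\mW(\vf)$ is truly block diagonal, and recording the rank and null space of $\mW_\trainidx$ so that the block-wise construction is the Moore--Penrose inverse rather than merely some generalized inverse; once the identity $\mW_\trainidx\mA = \mP$ is in hand, the remaining verification is purely mechanical.
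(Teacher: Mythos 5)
Your proof is correct, and it follows the same overall route as the paper: pass to \cnordering{} so that $\mW(\vf) = \blockdiag(\mW_1,\dots,\mW_\numtraindata)$ with $\mW_\trainidx = \diag(\vpi_\trainidx) - \vpi_\trainidx\vpi_\trainidx^\top$, reduce to a single block via the fact that the pseudoinverse of a block-diagonal matrix is the block-diagonal of the block pseudoinverses, and verify the Moore--Penrose conditions for the candidate $\mA = \mP\diag(\vpi_\trainidx^{-1})\mP$ with $\mP = \mI - \tfrac{1}{\numclasses}\vone\vone^\top$; the cost argument is identical. Where you differ is in how the Penrose conditions are discharged: the paper expands each of the four conditions separately, using the identities $\mA_\trainidx\mP = \mA_\trainidx$ and a commutation argument for the two symmetry conditions, whereas you first establish the single identity $\mW_\trainidx\mA = \mP$ and observe that $\mP$ is the orthogonal projector onto $\operatorname{range}(\mW_\trainidx) = \vspan\{\vone\}^\perp$ (justified by your factorization $\mW_\trainidx = \diag(\vpi_\trainidx)^{1/2}(\mI-\vu\vu^\top)\diag(\vpi_\trainidx)^{1/2}$, which pins down the rank as $\numclasses-1$), after which all four conditions fall out in one line each. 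Your organization is tighter and makes the geometric content explicit; it also sidesteps the paper's intermediate claim $\diag(\vpi_\trainidx^{-1})\mA_\trainidx = \mI$, which is not literally true as stated (the product equals $\mI - \vone\vpi_\trainidx^\top$) and only works in the paper because it is always sandwiched between factors that annihilate the discrepancy. One small addition you might make explicit: the per-point blocks you compute from the softmax Hessian agree with the paper's starting point (Rasmussen \& Williams, Eq.~(3.38)) after reordering, so the two derivations of the block structure coincide.
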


\begin{proof}
  By Eq. (3.38) in \citet{Rasmussen2006}, the matrix \(\mW(\vf)\) in \ncordering\ is given by
  \begin{equation*}
    [\mW(\vf)]_{\numtraindata\numclasses}= [\diag(\vpi)]_{\numtraindata\numclasses} - \mPi \mPi^\top,
  \end{equation*}
  where \([\diag(\vpi)]_{\numtraindata\numclasses} = \diag(\pi_1^1, \dots, \pi_\numtraindata^1, \dots, \pi^\numclasses_1, \dots \pi^\numclasses_\numtraindata)\) and
  \begin{equation*}
     \mPi = \begin{pmatrix} \diag(\pi_1^1, \dots, \pi_\numtraindata^1) \\ \vdots \\ \diag(\pi_1^\numclasses, \dots, \pi_\numtraindata^\numclasses)\end{pmatrix} \in \R^{\numtraindata\numclasses \times \numtraindata}.
  \end{equation*}
  Rewriting \(\mW(\vf)\) in the \cnordering, we obtain using $[\diag(\vpi)]_{\numclasses\numtraindata} = \diag(\pi_1^1, \dots, \pi_1^\numclasses, \dots,\pi^1_\numtraindata, \allowbreak \dots, \pi^\numclasses_\numtraindata)$  that
  \begin{align*}
    [\mW(\vf)]_{\numclasses\numtraindata}
    &=  [\diag(\vpi)]_{\numclasses\numtraindata} - \begin{pmatrix} \vpi_1 & & \\ & \ddots &\\ & & \vpi_\numtraindata\end{pmatrix} \begin{pmatrix} \vpi_1^\top & & \\ & \ddots &\\ & & \vpi_\numtraindata^\top \end{pmatrix} \\
    &= \blockdiag(\diag(\vpi_n) - \vpi_n \vpi_n^\top).
  \end{align*}
  Now the pseudoinverse of a block-diagonal matrix is the block-diagonal of the block pseudoinverses, \ie
  $\blockdiag(\mA_n)^\pinv = \blockdiag(\mA_n^\pinv)$
  which can be shown by simply checking the definition criteria of the pseudo-inverse and using basic properties of block matrices. Therefore it suffices to show that the block pseudoinverses are given by
  \begin{equation*}
    (\diag(\vpi_n) - \vpi_n \vpi_n^\top)^\pinv = (\mI - \frac{1}{\numclasses} \vone \vone^\top)\diag(\vpi_n^{-1})(\mI - \frac{1}{\numclasses}\vone\vone^\top)
  \end{equation*}
  for \(n \in \{1, \dots, \numtraindata\}\). We do so by checking the definition criteria of a pseudoinverse. Let \(\mA_n = \diag(\vpi_n) - \vpi_n \vpi_n^\top\). We begin by showing the following intermediate result:
  \begin{equation}
    \label{eqn:pseudoinv_classification_nntermediate_result}
     \mA_n (\mI - \frac{1}{\numclasses}\vone\vone^\top) =  \mA_n - \frac{1}{\numclasses} (\diag(\vpi_n) - \vpi_n \vpi_n^\top) \vone \vone^\top= \mA_n - \frac{1}{\numclasses}(\vpi_n - \vpi_n (\vpi_n^\top \vone))\vone^\top = \mA_n.
  \end{equation}
  Now let's verify the first criterion in the definition of the pseudoinverse. We have
  \begin{align*}
    \mA_n (\mI - \frac{1}{\numclasses} \vone \vone^\top)\diag(\vpi_n^{-1})(\mI - \frac{1}{\numclasses}\vone\vone^\top) \mA_n &= \mA_n \diag(\vpi_n^{-1}) \mA_n\\
    &= \mA_n \diag(\vpi_n^{-1})(\diag(\vpi_n) - \vpi_n \vpi_n^\top)\\ &= \mA_n (\mI - \vone \vpi_n^\top)\\
    &= \mA_n - (\diag(\vpi_n) - \vpi_n \vpi_n^\top)\vone \vpi_n^\top\\
    &= \mA_n,
  \end{align*}
  where we used \eqref{eqn:pseudoinv_classification_nntermediate_result}. Next, we'll verify the second criterion.
  \begin{align*}
    (\mI - \frac{1}{\numclasses} \vone \vone^\top)\diag(\vpi_n^{-1})(\mI - \frac{1}{\numclasses}\vone\vone^\top) &\mA_n (\mI - \frac{1}{\numclasses} \vone \vone^\top)\diag(\vpi_n^{-1})(\mI - \frac{1}{\numclasses}\vone\vone^\top)\\
     &= (\mI - \frac{1}{\numclasses} \vone \vone^\top)\diag(\vpi_n^{-1})\mA_n\diag(\vpi_n^{-1})(\mI - \frac{1}{\numclasses}\vone\vone^\top)\\
     &= (\mI - \frac{1}{\numclasses} \vone \vone^\top)(\vpi_n^{-1})(\mI - \frac{1}{\numclasses}\vone\vone^\top)
  \end{align*}
  where we used
  \begin{equation}
    \label{eqn:pseudoinv_classification_nntermediate_result2}
    \diag(\vpi_n^{-1})\mA_n = \mI = \mA_n \diag(\vpi_n^{-1})
  \end{equation}
  as shown above. Finally, we verify the symmetry of the product of \(\mA_n\) and its pseudoinverse. Observe that both \(\mA_n\) and \((\mI - \frac{1}{\numclasses} \vone \vone^\top)\diag(\vpi_n^{-1})(\mI - \frac{1}{\numclasses}\vone\vone^\top)\) are symmetric. Therefore we have
  \begin{align*}
    (\mA_n (\mI - \frac{1}{\numclasses} \vone \vone^\top)\diag(\vpi_n^{-1})(\mI - \frac{1}{\numclasses}\vone\vone^\top))^* = (\mI - \frac{1}{\numclasses} \vone \vone^\top)\diag(\vpi_n^{-1})(\mI - \frac{1}{\numclasses}\vone\vone^\top) \mA_n\\
    \intertext{and}
    ((\mI - \frac{1}{\numclasses} \vone \vone^\top)\diag(\vpi_n^{-1})(\mI - \frac{1}{\numclasses}\vone\vone^\top)\mA_n)^* = \mA_n (\mI - \frac{1}{\numclasses} \vone \vone^\top)\diag(\vpi_n^{-1})(\mI - \frac{1}{\numclasses}\vone\vone^\top).
  \end{align*}
  Thus if we can show that \(\mA_n\) and \((\mI - \frac{1}{\numclasses} \vone \vone^\top)\diag(\vpi_n^{-1})(\mI - \frac{1}{\numclasses}\vone\vone^\top)\) commute we have shown the remaining symmetry criteria of the pseudoinverse. It holds that
  \begin{align*}
    \mA_n (\mI - \frac{1}{\numclasses} \vone \vone^\top)\diag(\vpi_n^{-1})(\mI - \frac{1}{\numclasses}\vone\vone^\top) &\overset{\eqref{eqn:pseudoinv_classification_nntermediate_result}}{=} \mA_n \diag(\vpi_n^{-1})(\mI - \frac{1}{\numclasses} \vone \vone^\top)\overset{\eqref{eqn:pseudoinv_classification_nntermediate_result2}}{=} (\mI - \frac{1}{\numclasses} \vone \vone^\top)
    \intertext{as well as}
    (\mI - \frac{1}{\numclasses} \vone \vone^\top)\diag(\vpi_n^{-1})(\mI - \frac{1}{\numclasses}\vone\vone^\top) \mA_n &\overset{\eqref{eqn:pseudoinv_classification_nntermediate_result}}{=} (\mI - \frac{1}{\numclasses} \vone \vone^\top)\diag(\vpi_n^{-1}) \mA_n \overset{\eqref{eqn:pseudoinv_classification_nntermediate_result2}}{=}(\mI - \frac{1}{\numclasses} \vone \vone^\top)
  \end{align*}
  This completes the proof for the form of the pseudoinverse. For the complexity of multiplication, note that multiplying with \((\mI - \frac{1}{\numclasses} \vone \vone^\top)\diag(\vpi_n^{-1})(\mI - \frac{1}{\numclasses}\vone\vone^\top)\) has cost \(\bigO{\numclasses}\), since it decomposes into two multiplications with \( (\mI - \frac{1}{\numclasses}\vone\vone^\top)\) which is linear and one elementwise scaling. Therefore the cost of multiplication with the pseudoinverse consisting of \(\numtraindata\) blocks has complexity \(\bigO{\numtraindata\numclasses}\).
\end{proof}

\section{Implementation Details}
\label{sec:implementation_details}

\subsection{Ordering within Vectors \& Matrices}
\label{sec:ordering}

\textbf{Ordering within Vectors.} By default, we assume all vectors and matrices to be
represented in \cnordering. For example, the mean vector was introduced as the
aggregated outputs of the mean function $m \colon \inputspace \to \R^\numclasses$ for
all \datapoints $\vm = m(\mX) = (m(\vx_1)^\top, \dots, m(\vx_\numtraindata)^\top)^\top$.
With $m(\vx_n)^\top = (m_n^1, \dots, m_n^\numclasses)$ denoting the $\numclasses$
outputs for \datapoint $\vx_n$, we can write $\vm$ as
$
  \vm
   = (m_1^1, m_1^2, \dots, m_1^\numclasses,
    \dots,
    m_\numtraindata^1, m_\numtraindata^2, \dots, m_\numtraindata^\numclasses).
$
We call that representation \cnordering because the superscript $c$ moves
\textit{first} and the subscript $n$ moves \textit{second}. Consecutively,
$
  (m_1^1, m_2^1, \dots, m_\numtraindata^1,
    \dots,
    m_1^\numclasses, m_2^\numclasses, \dots, m_\numtraindata^\numclasses)
$
corresponds to \ncordering.

\textbf{Ordering within Matrices.} The same terminology can be applied to matrices,
where the rows and columns can be represented in $\numclasses\numtraindata$ or
\ncordering. Depending on the context, different representations are beneficial. For
example, in \cnordering, $\mW$ is block-diagonal (due to our iid assumption, see
\cref{sec:ncgps}) with $\numtraindata$ blocks of size $\numclasses \times \numclasses$ on
the diagonal. In contrast, when the $\numclasses$ outputs of the hidden function are
assumed to be independent of each other, $\mK$ is block diagonal only in \ncordering.
So, based on the chosen ordering, different structures arise that we can exploit in
subsequent computations (\eg when we compute the inverse of $\mW$, see
\cref{sec:details_cost_analysis}).

\subsection{Stopping Criteria in \cref{alg:inner,alg:outer}}

\textbf{Stopping Criterion in \cref{alg:outer}.} The \textsc{OuterStoppingCriterion}()
we use for our experiments is based on the \textit{relative change} of the vector
$\vg_\newtonidx = \vf_\newtonidx - \vm$. When $\Vert \vg_\newtonidx - \vg_{\newtonidx -
1} \Vert \, \Vert \vg_\newtonidx \Vert^{-1} \leq \convtol$ falls below the convergence
tolerance $\convtol$ (by default, $\convtol = 1 \%$), the loop over $\newtonidx$
terminates. Of course, other convergence criteria are also conceivable. Depending on the
application one might want to customize the criterion and, for example, include the
marginal uncertainty at the training data.

\textbf{Stopping Criterion in \cref{alg:inner}.} We use the same
\textsc{InnerStoppingCriterion}() as in \citep[Section
S3.2]{Wenger2022PosteriorComputational}: The loop over $\solveridx$ terminates if the
norm of the residual $\Vert\residual_\solveridx\Vert < \max\{\delta_\text{abs},
\delta_\text{rel} \Vert\pseudotargets - \vm\Vert\}$ is below an absolute tolerance
$\delta_\text{abs}$ or below the scaled norm of the right-hand side $\pseudotargets -
\vm$ of the linear system. By default, both tolerances are set to $10^{-5}$.
Additionally, we typically specify a maximum number of iterations. The solver is also
terminated when the normalization constant $\searchdirsqnorm_\solveridx \leq 0$. This
can happen due to numerical imprecision if the linear system is badly conditioned, \eg
if some eigenvalues of the linear system are close to zero.

\subsection{Cost Analysis of \iterncgp}
\label{sec:details_cost_analysis}

In this section, we investigate the computational costs of \iterncgp in more detail. We
start with a discussion of the computational costs for matrix-vector products with
$\mK$, $\mW^{-1}$ and $\mC_\solveridx$ and then analyze the runtime and memory costs of
the individual algorithms (\cref{alg:outer,alg:inner,alg:virtual_solver_run}).

\subsubsection{Matrix-Vector Products}

\itergp is an \textit{iterative matrix-free} algorithm and our algorithm \iterncgp
inherits that property: The matrices $\mK$, $\mW^{-1}$ and $\mC_\solveridx$ are
evaluated lazily, \ie matrix-vector products are evaluated \textit{without} forming the
matrices in memory explicitly. This enables our algorithm to scale to problems where a
naive approach causes memory overflows. In
\cref{alg:outer,alg:inner,alg:virtual_solver_run}, the memory and runtime cost for
matrix-vector products with $\mK$ are denoted by $\spaceK$ and $\timeK$ and by
$\spaceWinv$ and $\timeWinv$ for products with $\mW^{-1}$.

\textbf{Products with $\mK$.} Matrix-vector products with $\mK$ can be decomposed into
products with its sub-matrices. The associated memory costs $\bigO{\spaceK}$ can thereby
be reduced basically arbitrarily and the runtime can be improved by using specialized
software libraries such as \keops \citep{Charlier2021} and parallel hardware (\ie GPUs).
Still, products with $\mK$ are computationally relatively expensive, since this
operation is typically \textit{quadratic} in the number of training \datapoints
$\numtraindata$.

\textbf{Products with $\mW^{-1}$ (General Case).} Under the assumptions on the
likelihood from \cref{sec:ncgps}, $\mW$ is block-diagonal with $\numtraindata$ blocks of
size $\numclasses \times \numclasses$ (in \cnordering, see \cref{sec:ordering}). Here,
we denote these blocks by $\mW_1, \dots, \mW_\numtraindata$. It can be easily verified
that $\mW^{-1}$ is also a block-diagonal matrix and the blocks on its diagonal are the
inverses of $\mW_1, \dots, \mW_\numtraindata$.

Consider the matrix-vector product $\vv \mapsto \mW^{-1} \vv \eqdef \vw \in
\R^{\numtraindata\numclasses}$. In the vectors $\vv$ and $\vw$, we repeatedly group
$\numclasses$ consecutive entries which results in segments $\vw_n, \vv_n \in
\R^{\numclasses}$ for $n = 1, ..., \numtraindata$, \ie
\begin{align*}
  \underbracket[0.1ex]{
  \begin{pmatrix}
    \mW_1^{-1} & & \\
    & \ddots & \\
    & & \mW_\numtraindata^{-1}
  \end{pmatrix}
  }_{= \mW^{-1}}
  \cdot
  \underbracket[0.1ex]{
    \begin{pmatrix}
      \vv_1 \\
      \vdots \\
      \vv_\numtraindata
    \end{pmatrix}
  }_{= \vv}
  =
  \underbracket[0.1ex]{
    \begin{pmatrix}
      \vw_1 \\
      \vdots \\
      \vw_\numtraindata
    \end{pmatrix}.
  }_{= \vw}
\end{align*}
It holds that $\vw_n = \mW_n^{-1} \vv_n$, \ie each segment in $\vw$ is the product of a
single $\numclasses \times \numclasses$ block from $\mW^{-1}$ with one segment from
$\vv$. Computing $\vw_n$ thus amounts to solving a linear system of size $\numclasses$
with cost $\bigO{\numclasses^3}$. The total cost for all $\numtraindata$ segments is
thus $\bigO{\numtraindata\numclasses^3}$. However, the $\numtraindata$ linear systems
are independent of each other and can thus be solved in parallel. So, if appropriate
computational resources are available, the total runtime complexity can be reduced to
$\bigO{\numclasses^3}$.

In general, $\mW^{-1}$ requires $\bigO{\numtraindata \numclasses^2}$ in terms of memory
consumption. If needed, these costs can be reduced further to $\bigO{\numclasses^2}$
because (as explained above), products with $\mW^{-1}$ can be broken down into products
with the individual blocks of $\mW^{-1}$. We can perform those products sequentially
such that only a single block is present in memory at a time.

\textbf{Products with $\mW^{-1}$ (Special Cases).} In many cases, we can multiply with
$\mW^{-1}$ more efficiently. In the multi-class classification case, the runtime and
memory costs for multiplication with the pseudo inverse $\mW^\pinv$ can be reduced to
$\bigO{\numtraindata\numclasses}$ (see \cref{subsec:pseudo_inverse_w}).
In the regression case ($\numclasses = 1$), $\mW^{-1}$ is a diagonal matrix of size
$\numtraindata \times \numtraindata$. The memory and runtime costs are thus in
$\bigO{\numtraindata}$. An example is Poisson regression, for which we derive the
explicit form of $\mW^{-1}$ in \cref{sec:math_poisson_regression}.

\textbf{Products with $\mC_\solveridx$.} $\mC_\solveridx = \matrixroot_\solveridx
\matrixroot_\solveridx^\top$ is represented via its matrix root $\matrixroot_\solveridx
\in \R^{\numtraindata\numclasses \times \buffersize}$. This allows for efficient storage
and matrix-vector multiplies $\vv \mapsto \mC_\solveridx \vv = \matrixroot_\solveridx
(\matrixroot_\solveridx^\top \vv)$ in $\bigO{\buffersize\numtraindata\numclasses}$.

\subsubsection{Cost Analysis \cref{alg:inner,alg:outer,alg:virtual_solver_run}}

The runtime and memory complexity for the operations in
\cref{alg:inner,alg:outer,alg:virtual_solver_run} is given directly in the pseudo code.
Here, we provide some additional information for the costs that depend on the user's
choices and put the costs of the individual algorithms into perspective.

\textbf{\cref{alg:inner} (\itergp).} The runtime cost for selecting an action
$\bigO{\timepolicy}$ depends on the underlying policy. For Cholesky actions
($\action_\solveridx = \ve_\solveridx$) or CG ($\action_\solveridx = \vr_{\solveridx -
1}$), the \runtime cost is insignificant since no additional computations are required
at all.

One iteration's total computational cost (without prediction) is dominated by two
matrix-vector products with $\mK$ in terms of runtime and
$\bigO{\buffersize\numtraindata\numclasses}$ in terms of storage requirements (for the
buffers $\mS$ and $\mT$ as well as the matrix root $\matrixroot_\solveridx$). The
\textit{initial} size (\ie the number of columns) of $\mS$, $\mT$ and
$\matrixroot_\solveridx$ is given by the rank limit $\bufferlimit$ used in
\cref{alg:virtual_solver_run}. Henceforth, one column is added to each of the buffers
and matrix root in \textit{each} solver iteration, increasing their size to $\buffersize
= \bufferlimit + \solveridx$ in iteration $\solveridx$. It is thus reasonable to include
an upper bound on the iteration number in the stopping criterion of \cref{alg:inner}.

\textbf{\cref{alg:virtual_solver_run} (Virtual Solver Run with Optional Compression).}
The total runtime complexity of \cref{alg:virtual_solver_run} is
$\bigO{\buffersize\timeWinv + \buffersize^2 \numtraindata\numclasses}$, \ie dominated by
matrix-matrix products involving the buffers and $\mW^{-1}$. In terms of memory
requirements, the buffers $\mS$, $\mT$, and $\matrixroot_0$ are the decisive
contributors with $\bigO{\bufferlimit\numtraindata\numclasses}$. The truncation of the
eigendecomposition provides a way to control that bound by resetting the current buffer
size $\buffersize$ to an arbitrary number $\bufferlimit \leq \buffersize$. In comparison
to \cref{alg:inner}, the computational costs are practically of minor importance since no
multiplications with $\mK$ are necessary.

\textbf{\cref{alg:outer} (\iterncgp Outer Loop).} The costs $\bigO{\timemean}$ for
evaluating $m$ on the training data depends on the choice of mean function. For a
constant mean function, no computations are necessary, so runtime costs are negligible.
This can be different \eg for applications in Bayesian deep learning, where evaluating
$m$ requires forward passes through a neural network.

\section{Experimental Details}
\label{sec:experimental_details}

Throughout the paper, we use binary classification as an illustrative and supporting
example
(\cref{fig:visual_abstract,fig:pseudo_targets_noise,fig:policy_choice_mean,fig:virtual_solver_run}).
The two main experiments follow in \cref{sec:experiments}: Poisson regression
(\cref{sec:exp_poisson_regression}, \cref{fig:poisson_regression}) and large-scale GP
multi-class classification (\cref{sec:exp_large_scale_gpc},
\cref{fig:acc_loss_ece_over_runtimes}). In the following, we provide additional details
for all those experiments.

\subsection{Binary Classification}
\label{sec:details_binary_classification}

\textbf{Binary Classification with \textit{one} latent function.} Consider a binary
classification task, \ie $C=2$. Being able to report the probability for \textit{one} of
the two classes is sufficient because they have to add up to one for every \datapoint.
Thus, while $C = 2$, $\numtraindata$-dimensional vectors are typically used to describe
this problem \citep[Section 3.4]{Rasmussen2006}. Using only a single latent function is
convenient for illustrative purposes, as \eg the action vectors $\action$ in
\cref{alg:inner} are $\numtraindata$-dimensional (not $2\numtraindata$-dimensional) and
thus easier to visualize.

\textbf{1D Data.} We use a one-dimensional training set in
\cref{fig:pseudo_targets_noise}. $\mX$ is created by sampling $\numtraindata = 50$
\datapoints between $-3$ and $5$. The hidden function $f$ is a draw from a GP with mean
zero and a \gpytorch \citep{Gardner2018GPyTorchBlackbox} RBF kernel with
\texttt{lengthscale = 1.0} and \texttt{outputscale = 5.0}. For each datapoint $\vx_n$,
we sample the positive label with probability $\operatorname{logistic}(f(\vx_n))$.

\textbf{2D Data.} Two-dimensional data is used in
\cref{fig:visual_abstract,fig:policy_choice_mean,fig:virtual_solver_run}. The
data-generating process is analogous to the 1D data, only now, the $\numtraindata = 100$
training inputs are in the 2D plane: The first coordinate is sampled uniformly between
$-3$ and $5$, the second between $-4$ and $1$. The hyperparameters of the RBF kernel are
\texttt{lengthscale = 1.0}, \texttt{outputscale = 10.0} for
\cref{fig:policy_choice_mean,fig:virtual_solver_run} and \texttt{outputscale = 20.0} for
\cref{fig:visual_abstract}.

\textbf{Details \cref{fig:visual_abstract}.} In this figure, we compare two versions of
our algorithm: \iterncgp-Chol without recycling and \iterncgp-CG with recycling and with
compression ($\bufferlimit = 10$). Both runs were conducted on a CPU. The computation of
the NLL loss is \textit{not} included in the runtime measurement. A description of how
the NLL loss can be computed for arbitrary $\numclasses$ is given in
\cref{sec:details_large_scale_gpc}.

\textbf{Details \cref{fig:pseudo_targets_noise}.} For \cref{fig:pseudo_targets_noise},
we compute a sequence of \textit{precise} Newton steps by using \iterncgp with unit
vector actions and $\solveridx \leq \numtraindata$ solver iterations. Note that the
Newton linear system is $\numtraindata$-dimensional, \ie we actually obtain
$\vf_\newtonidx$ as defined by \cref{eq:newton_step}.

\textbf{Details \cref{fig:policy_choice_mean}.} In this plot, we compare unit vector
actions (\iterncgp-Chol) and residual actions (\iterncgp-CG) for the first Newton step
($\newtonidx = 0$) at three solver iterations $\solveridx \in \{1, 10, 19\}$. The true
posterior mean function $m_{0,*}$ and covariance function $\kernelfn_{0,*}$ are computed
by using \iterncgp-Chol and $j \leq \numtraindata = 100$ iterations.
\cref{fig:policy_choice_var} shows the covariance functions corresponding to the mean
functions in \cref{fig:policy_choice_mean}.

\begin{figure*}[tbh!]
  \includegraphics{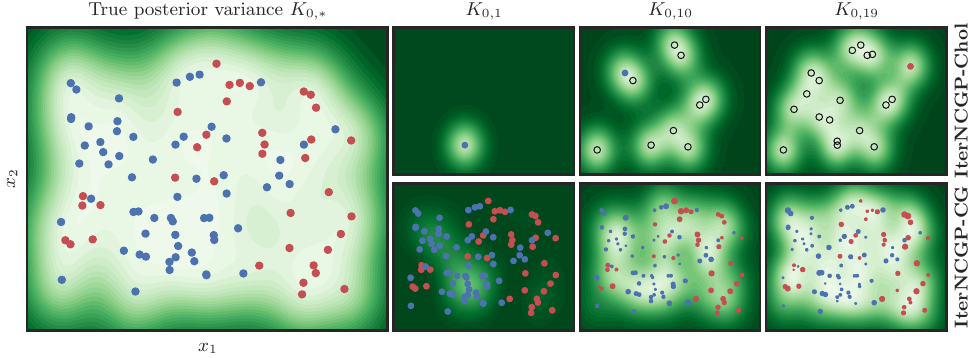}
  \caption{\textbf{Different Policies of \iterncgp Applied to GP Classification.}
  \emph{(Left)} The true posterior covariance $\kernelfn_{0,*}$
  (\,\colorgradientbox[black!70!SNSgreen]{SNSgreen}\,) for a binary classification
  task (\colordot{PlotBlue}/\colordot{PlotRed}). \emph{(Right)} Current estimate of
  the posterior covariance after $1, 10,$ and $19$ iterations with the unit vector
  policy (\emph{Top}) and the CG policy (\emph{Bottom}). Shown are the \datapoints
  selected by the policy in this iteration with the dot size indicating their relative
  weight. For \iterncgp-Chol, \datapoints are targeted one by one and previously used
  \datapoints are marked with (\colorcircle{black}). }
  \label{fig:policy_choice_var}
\end{figure*}

The actions are visualized by scaling the training \datapoints according to their
\textit{relative weight}: First, we take the absolute value of the action vector
$\action$ from \cref{alg:inner} (element-wise) and then scale its entries linearly such
that the smallest entry is $0$ and the largest is $1$.

\textbf{Details \cref{fig:virtual_solver_run}.} In this figure, we show the initial mean
function $m_{1,0}$ and covariance function $\kernelfn_{1,0}$ in the \textit{second}
Newton step for different buffer sizes $\bufferlimit \in \{0, 1, 3, 10\}$. We use CG
actions for the first Newton step and let the solver run until convergence (this takes
$19$ iterations).

\subsection{Poisson Regression}
\label{sec:details_poisson_regression}

In \cref{sec:exp_poisson_regression}, we apply \iterncgp to Poisson regression to
demonstrate our algorithm's ability to generalize to other (log-concave) likelihoods and
to explore the trade-off between the number of (outer loop) mode-finding steps and
(inner loop) solver iterations.

\textbf{Poisson Likelihood.} We consider count data $\vy \in \N_0^\numtraindata$ that is
assumed to be generated from a Poisson likelihood with unknown positive rate $\lambda
\colon \inputspace \to \R_+$. Modeling $\lambda$ with a GP which may take
positive \textit{and negative} values, would therefore be inappropriate. However, we can
use a GP for the log rate $f \colon \inputspace \to \R$ and regard this as the unknown
latent function. With $\vlambda \defeq \lambda(\mX) = \exp(f(\mX)) = \exp(\vf)$, the
likelihood is given by
\begin{equation*}
    p(\vy \mid \vf)
    = \prod_{n=1}^\numtraindata
    \frac{\lambda_n^{y_n} \exp(-\lambda_n)}{y_n!}.
\end{equation*}
The gradient and (inverse) Hessian of the log likelihood can be derived in closed form,
see \cref{sec:math_poisson_regression}.

\textbf{Data \& Model.} First, we create $\mX$ by linearly spacing $\numtraindata=100$
points between $0$ and $1$. For the count data $\vy$, we sample from a GP with zero mean
and a \gpytorch \citep{Gardner2018GPyTorchBlackbox} RBF-kernel with \texttt{lengthscale
= 0.1} and \texttt{outputscale = 5.0}. That GP $f$ represents the log-Poisson rate. We
then draw counts from a Poisson distribution with rate $\lambda(\vx_n) = \exp(f(\vx_n))$
for each \datapoint in the training set. In this experiment, we conduct multiple
\iterncgp runs on different training sets. These sets are created by re-drawing from the
Poisson distributions with the same rates, \ie the underlying GP for the log rate does
\textit{not} change. Our NCGP's prior uses the same RBF kernel to avoid model mismatch.

\textbf{\iterncgp-CG Approaches.} We consider \iterncgp-CG with four different schedules:
A fixed budget of $100$ iterations is distributed uniformly over $5, 10, 20$ or $100$
outer loop steps (see \cref{alg:outer}), which limits the number of inner loop
iterations (see \cref{alg:inner}) to $j \leq 20, 10, 5$ or $1$. For each schedule, we
perform $10$ runs with different training sets, see above. Each run uses recycling
without compression. For this experiment, the convergence tolerance in \cref{alg:outer}
is set to $\convtol = 0.001$. All runs are performed on a single \textsc{NVIDIA GeForce RTX 2080
Ti 12 GB GPU}.

\textbf{Tracking of Performance Metrics.} As a performance metric, we use the NLL loss.
The computation of the NLL loss for the test and training set is \textit{not} included
in the runtime reported in the results. For the NLL loss, we approximate the integral
from \cref{subsec:prediction} with MC samples: For each test datum
$\vx_\symboltestdata$, we draw $10^5$ MC samples from
$\gaussian{m_{\newtonidx,\solveridx}(\vx_\symboltestdata)}{\kernelfn_{\newtonidx,\solveridx}(\vx_\symboltestdata,
\vx_\symboltestdata)}$, map those samples $\{f_{\symboltestdata, k}\}_{k=1}^{10^5}$
through the likelihood $p(y_\symboltestdata \mid f_{\symboltestdata, k})$ and average.
This yields a loss value for $\vx_\symboltestdata$ and we obtain the training/test NLL
loss by averaging these loss values for all \datapoints in the training/test set.

\textbf{Approximate Rate Distribution.} Using \iterncgp-CG for the Poisson regression
problem results in a sequence of posteriors
$\gp{m_{\newtonidx,\solveridx}}{k_{\newtonidx,\solveridx}}$. By drawing MC samples from
those posterior GPs and mapping them through the exponential, we obtain an approximated
(skewed) distribution for the rate $\lambda$. In \cref{fig:poisson_regression}
\emph{(Right)}, we report its median and a $95~\%$ confidence interval between the
$2.5~\%$ and $97.5~\%$ percentile.

\subsection{Large-Scale GP Multi-Class Classification}
\label{sec:details_large_scale_gpc}

In this experiment, we empirically evaluate \iterncgp on a large-scale GP multi-class
classification problem to exhibit its scalability. We also investigate the impact of
compression on performance.

\textbf{Data.}
We consider a Gaussian mixture problem with $C=10$ classes in 3D. For each class, we
sample a mean vector uniformly in $[-1, 1]^3$ and a positive definite covariance matrix.
For the covariance matrix, we first create a $3 \times 3$ matrix $\mC$ with entries
between $0$ and $1$ (sampled uniformly) and compute the eigenvectors $\mU$ of $\mC
\mC^\top$. Then, we create three eigenvalues $\{\lambda_d\}_{d \in \{1, 2, 3\}}$
uniformly between $0.001$ and $0.1$ and form the covariance matrix from the eigenvectors
$\mU$ and these eigenvalues, \ie $\mU \diag(\lambda_1, \lambda_2, \lambda_3)\,\mU^\top$.
For each class, $10^4$ \datapoints are sampled from the respective Gaussian
distribution. This amounts to $\numtraindata = 10^5$ \datapoints in total. For testing,
$\numtraindata_\symboltestdata = 10^4$ \datapoints are used ($10^3$ per class).

Our benchmark (\cref{fig:acc_loss_ece_over_runtimes}) uses multiple runs for each
method. The runs differ in the seed that is used to sample from the Gaussians, \ie the
training and test set are different for each run (the underlying Gaussians distributions
remain the same). Both the training and test set used in the first run are shown in
\cref{fig:training_test_data}.

\begin{figure*}[tbh!]
  \includegraphics[width=0.99\linewidth]{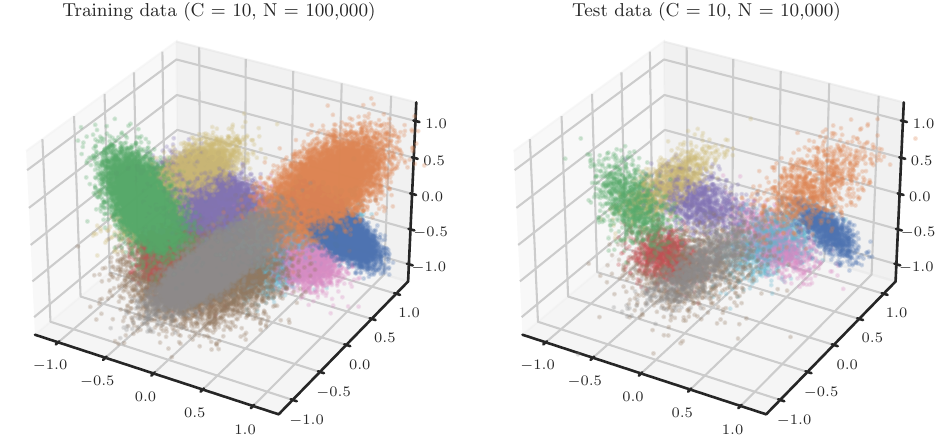}
  \caption{\textbf{Gaussian Mixture Training and Test Data.}
        Training data \emph{(Left)} and test data \emph{(Right)} for the first run. Note
        that the underlying Gaussians are the same for training and test set and for all
        runs.}
  \label{fig:training_test_data}
\end{figure*}

\textbf{Model.} We use a softmax likelihood (see \cref{subsec:pseudo_inverse_w} for the
details on the pseudo inverse $\mW^\pinv$) and assume independent GPs for the
$\numclasses$ outputs of the latent function. Each of these GPs uses the zero function
as the prior mean and a \matern{3} kernel. We use the \keops \citep{Charlier2021}
version of the \gpytorch \citep{Gardner2018GPyTorchBlackbox} kernel with
\texttt{lengthscale = 0.05} and \texttt{outputscale = 0.05}.

\textbf{SoD Approaches.} For the SoD approaches, we create a random subset of the
training data (sampling without replacement) of a specific subset size
$\numtraindata_\text{sub}$. We then explicitly form $\KWinvf{\vf_i}$ for every Newton
step and compute its Cholesky decomposition via \pytorch{}'s \citep{Paszke2019PyTorch}
\texttt{torch.linalg.cholesky} (instead of using \itergp in \cref{alg:outer} to ensure a
competitive baseline implementation). In our experiment, we use four different subset
sizes $\numtraindata_\text{sub} \in \{250, 500, 1000, 2000\}$. Each setting is run five
times with different random seeds (see above).

\textbf{\svgp.} \svgp \citep{Titsias2009,Hensman2015ScalableVariational} is a commonly
used variational method for approximative inference in non-conjugate GPs. We use
\gpytorch's \citep{Gardner2018GPyTorchBlackbox} \svgp implementation and optimize the
ELBO for $10^4~\text{seconds}$ using \adam  with batch size $ = 1024$. The learning
rate $\alpha \in \{0.001, 0.01, 0.05\}$ and the number of inducing points
$U \in \{1000, 2500, 5000, 10000\}$ are tuned via grid search (using only a single
run). We use $\sfrac{U}{C}$ inducing points per class ($C = 10$ classes) and
initialize them as a random subset of the training data. Within the given runtime
budget, \svgp performs between $6000$ ($U = 1000$) and $600$ ($U = 10000$) epochs.

For each of the $12$ runs, we extract $6$ performance indicators: lowest training/test
NLL loss during training, highest training/test accuracy during training and
training/test expected calibration error (ECE) at the very end of training. For each of
those $6$ categories, we determine those two runs with the best performance. This
procedure results in a set of $3$ runs in total, that are considered the ``best'' runs
for \svgp. Only for those $3$ settings, we perform $5$ runs each and report their
mean performance in \Cref{fig:acc_loss_ece_over_runtimes}.

\textbf{\iterncgp-CG Approaches.} For comparison, we apply our matrix-free algorithm
\iterncgp with residual actions to the \textit{full} training set. We use two
configurations: The first one uses recycling \textit{without} compression (\ie
$\bufferlimit = \infty$), the second one uses recycling \textit{with} compression
($\bufferlimit = 10$). The number of solver iterations per step is limited by
$\solveridx \leq 5$. For both settings, we perform $5$ runs with different random seeds
(see above) and report their mean performance in \Cref{fig:acc_loss_ece_over_runtimes}.

\textbf{Tracking of Performance Metrics.} As performance metrics, we use accuracy, the
negative log likelihood (NLL) loss and the expected calibration error (ECE)
\citep{Kumar2019VerifiedCalibration} on both the training and test set. The computation
of those six metrics is \textit{not} included in the runtime reported in the results
(\Cref{fig:acc_loss_ece_over_runtimes}).
First, we compute the predictive mean $m_{\newtonidx, \solveridx}(\vx_\symboltestdata)$
and marginal variance $\diag(\kernelfn_{\newtonidx, \solveridx}(\vx_\symboltestdata,
\vx_\symboltestdata))$ (see
\cref{eq:inference_PLS_iteration_mean,eq:inference_PLS_iteration_covar}) for each test
input $\vx_\symboltestdata$. Then, we use the probit approximation
\citep{MacKay1992EvidenceFramework,Spiegelhalter1990SequentialUpdating} to obtain the
predictive probabilities
\begin{equation*}
  \vpi_\symboltestdata = \operatorname{softmax}
  \left(
    \frac{m_{\newtonidx, \solveridx}(\vx_\symboltestdata)}{
      \sqrt{1 + \frac{\pi}{8}
      \diag(\kernelfn_{\newtonidx, \solveridx}
      (\vx_\symboltestdata, \vx_\symboltestdata))}
    }
  \right)
  \in \R^\numclasses,
\end{equation*}
where the vector division is defined element-wise. This is an approximation of the
integral from \cref{subsec:prediction}. All three performance metrics are based on the
predictive probabilities.
\begin{itemize}
  \item \textbf{Accuracy.} The prediction for $\vx_\symboltestdata$ is given by
  $\argmax_c([\vpi_\symboltestdata]_c)$, \ie by the class with the largest predictive
  probability. The accuracy is defined as the ratio of correctly classified data.

  \item \textbf{NLL Loss.} The NLL loss for $\vx_\symboltestdata$ is defined as the
  log-probability for the actual class $y_\symboltestdata$, \ie
  $\log([\vpi_\symboltestdata]_{y_\symboltestdata})$. We obtain the NLL training and
  test loss by averaging the individual loss values for the entire training/test set.

  \item \textbf{ECE.} The expected calibration error (ECE)
  \citep{Kumar2019VerifiedCalibration} is a measure for the calibration of the
  predictive probabilities. It groups the probabilities of the predicted classes (\ie
  the classification confidences) into bins and, within these bins, compares the
  average confidence with the actual accuracy. We use
  \texttt{MulticlassCalibrationError} from \texttt{torchmetrics}
  \citep{detlefsen2022torchmetrics} with default parameter \texttt{n\_bins=15}.
\end{itemize}

\textbf{Individual runs.} \Cref{fig:acc_loss_ece_over_runtimes} shows the
\textit{average} performance for each of the nine methods/variants over five runs. In
order to show, that the observed performance differences are not due to chance, we show
the \textit{individual} runs in \cref{fig:acc_loss_ece_over_runtimes_all_runs}.

\begin{figure}[tbh!]
  \includegraphics[width=0.99\linewidth]{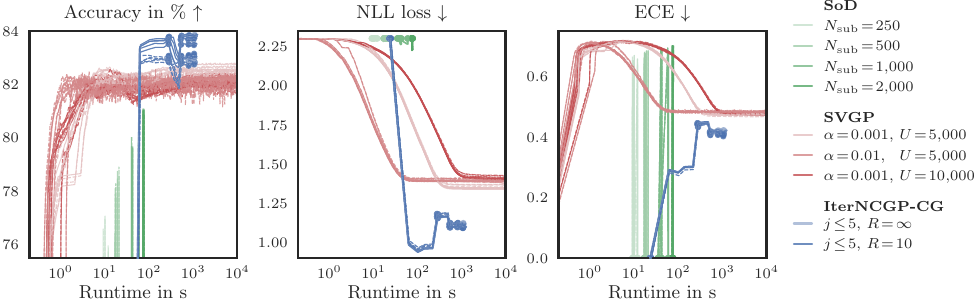}
  \caption{\textbf{Large-Scale GP Classification.}
    Same as \cref{fig:acc_loss_ece_over_runtimes}, but showing all runs
    individually (instead of just their average).}
  \label{fig:acc_loss_ece_over_runtimes_all_runs}
  \vspace{-1ex}
\end{figure}

\subsection{GP Multi-Class Classification on \mnist}
\label{sec:details_gpc_mnist}

To demonstrate \iterncgp{}'s applicability to real-world datasets, we perform an
additional experiment (similar to the experiment described in
\cref{sec:details_large_scale_gpc}) on a subset
($\numtraindata_{\text{sub}} = \num[group-separator={,}]{20000}$) of the \mnist
\citep{Lecun1998Gradient} dataset. In the following, we describe the experiment and
results in more detail.

\textbf{Remark.} For the experiment on synthetic data, we use \keops on top of \gpytorch
for fast kernel-matrix multiplies. However, \keops scales poorly with the data dimension
($\inputdim = 28^2 = 784$ for \mnist)\footnote{ \url{
https://www.kernel-operations.io/keops/\_auto\_benchmarks/plot\_benchmark\_high\_dimension.html
} (accessed May 2024)}. We therefore revert to \gpytorch{}'s standard implementation.
This implementation of the kernel matrix-vector product is fast but causes out-of-memory
errors for large datasets, which is why we limit our benchmark to
$\numtraindata_{\text{sub}} = \num[group-separator={,}]{20000}$ training data. To fully
realize the potential of our method using \keops, it might be advisable to apply it only
to problems with data dimensions smaller than around $100$.

\textbf{Data \& Model.} We use $\num[group-separator={,}]{20000}$ training and
$\num[group-separator={,}]{10000}$ test images from the \mnist dataset and the softmax
likelihood. Our model for the latent function is a multi-output GP which uses
$\numclasses = 10$ independent GPs, each of which is equipped with a \matern{3} kernel.

\textbf{Kernel Hyperparameters.} As a first step, we determine suitable hyperparameters
(the outputscale and lengthscale) for the \matern{3} kernel by running \gpytorch{}'s
\svgp implementation. We use $\num[group-separator={,}]{1000}$ inducing points per
class, a batch size of $1024$ and optimize the ELBO using \adam with learning rate
$0.001$ for $30$ epochs (this results in \texttt{lengthscale = 1.550934} and
\texttt{outputscale = 0.451591}). Note that choosing hyperparameters with \svgp may give
\svgp an advantage in what performance it can reach, making it a competitive baseline.

\textbf{\svgp and \iterncgp-CG Approaches.} We compare \iterncgp-CG and \svgp both using
the same fixed kernel hyperparameters (see above). \iterncgp-CG is applied with recycling
and $R  =  \infty$, \ie without compression. We exclude compression since both \iterncgp
runs converge within three iterations, see below. The number of inner loop iterations is
limited by $\solveridx  \leq  1$ or $\solveridx  \leq  5$, \ie two runs are performed. For
the \svgp approach, we optimize the ELBO using \adam with batch size $1024$ for $200$
seconds. The number of inducing points and the learning rate are tuned via grid search.
As in \cref{sec:details_large_scale_gpc}, we use $U  \in  \{1000, 2500, 5000, 10000\}$
inducing points and three different learning rates $\alpha  \in  \{0.001, 0.01, 0.05\}$.
All $14$ runs are performed on a single \textsc{NVIDIA GeForce RTX 2080 Ti 12 GB GPU}.

\textbf{Results.} The results are shown in \cref{fig:mnist_results}. They show the two
\iterncgp runs and the best four \svgp runs (these include the best two runs for each of
the six performance metrics training/test accuracy/NLL/ECE). Our observations are mostly
consistent with the results on the synthetic data
(\Cref{fig:acc_loss_ece_over_runtimes}): Both \iterncgp runs significantly outperform
the best \svgp runs in terms of NLL loss and accuracy. Only the ECE achieved by
\iterncgp is slightly worse than for \svgp. However, as explained in
\cref{sec:exp_large_scale_gpc}, a small ECE on its own is not conclusive since we can
easily construct a classifier with perfect ECE by randomly sampling predictions.

\stopcontents[sections]

\end{document}